\documentclass[twoside,11pt]{article}

%

\usepackage{jmlr2e}
\usepackage{xcolor}

\usepackage{graphicx}
\usepackage{subfigure}
\usepackage{amssymb}
\usepackage{amsmath}

\usepackage{xcolor}

 \usepackage{algorithm}
 \usepackage{algorithmic}




\jmlrheading{?}{2020}{??}{??}{??}{meila00a}{Yueming Lyu and Yuan Yuan and Ivor W. Tsang}


\ShortHeadings{Efficient Batch Black-box Optimization with Deterministic Regret Bounds}{Yueming Lyu and Yuan Yuan and Ivor W. Tsang}
\firstpageno{1}

\begin{document}

\title{Efficient Batch Black-box Optimization with Deterministic Regret Bounds}

\author{\name Yueming Lyu \email yueminglyu@gmail.com \\
       \addr Centre for Artificial Intelligence\\
       University of Technology Sydney\\
       15 Broadway, Ultimo NSW 2007, Sydney, Australia
       \AND
       \name Yuan Yuan\email miayuan@mit.edu \\
       \addr CSAIL\\
       Massachusetts Institute of Technology\\
       77 Massachusetts Avenue, Cambridge, MA, USA 
       \AND 
       \name Ivor W. Tsang \email Ivor.Tsang@uts.edu.au \\
       \addr Centre for Artificial Intelligence\\
       University of Technology Sydney\\
       15 Broadway, Ultimo NSW 2007, Sydney, Australia}

\editor{--}

\maketitle

\begin{abstract}
 In this work, we investigate black-box optimization from the perspective of frequentist kernel methods. We propose a novel batch optimization algorithm, which jointly maximize the acquisition function and select points from a  whole batch in a holistic way. Theoretically, we derive regret bounds for both the noise-free and perturbation settings irrespective of the choice of kernel. Moreover, we analyze the property of the adversarial regret that is required by a robust initialization for Bayesian Optimization (BO). We prove that the adversarial regret bounds decrease with the decrease of covering radius, which provides a criterion for generating a point set 
 to minimize the bound. We then propose fast searching algorithms to generate a point set with a small covering radius for the robust initialization.  Experimental results on both synthetic benchmark problems and real-world problems show the effectiveness of the proposed algorithms.
\end{abstract}

\begin{keywords}
  Bayesian Optimization, Black-box Optimization
\end{keywords}

\section{Introduction}

Bayesian Optimization (BO) is a promising approach to address expensive black-box (non-convex) optimization problems.  Applications of BO include automatic tuning of hyper-parameters in machine learning~\citep{Nips2012practical}, gait optimization in robot control~\citep{robot}, molecular compounds identifying in drug discovery~\citep{drugdiscovery}, and optimization of computation-intensive engineering design~\citep{engineerDisign}.

BO aims to find the optimum of an unknown, usually non-convex function $f$. Since little information is known about the underlying function $f$, BO requires to estimate a surrogate function to model the unknown function.
Therefore, one major challenge of BO is to seek a balance between collecting information to model the function $f$ (exploration) and searching for an optimum based on the collected information (exploitation).

Typically, BO assumes that the underlying function $f$ is sampled from a Gaussian process (GP) prior. 
BO selects the candidate solutions for evaluation by maximizing an acquisition function~\citep{PI,ei1975,EI}), which balances the exploration and exploitation given all previous observations.
In practice, BO can usually find an approximate maximum solution with a remarkably small number of function evaluations~\citep{Nips2012practical,scarlett2018tight}.

In many real applications, parallel processing of multiple function evaluations is usually  preferred to achieve the time efficiency. For example, examining various hyper-parameter settings of a machine learning algorithm simultaneously or running multiple instances of a reinforcement learning simulation in parallel can save time in a large margin.
Sequential BO selection is by no means efficient in these scenarios.
Therefore, several batch BO approaches have been proposed to address this issue. \cite{shah2015parallel} propose a parallel predictive entropy search method, which is based on the predictive entropy search (PES)~\citep{PES} and extends PES to the batch case.
\cite{wu2016parallel} generalize the knowledge gradient method~\citep{frazier2009knowledge} to a parallel knowledge gradient method. However, these methods are still computationally intensive because they all rely on expensive Monte Carlo sampling. 
Moreover, they are not scalable to large batch size and lack the theoretical convergence guarantee as well. 

Instead of using Monte Carlo sampling, another line of research improves the efficiency by deriving tighter upper confidence bounds. The GP-BUCB policy~\citep{GPBUCB} makes the selections point-by-point sequentially until reaching a pre-set batch size, according to upper confidence bound (UCB) criterion~\citep{auer2002using,gpucb} with a fixed mean function and an updated covariance function. It proves a sub-linear growth bound on the cumulative regret, which guarantees the bound on the number of required iterations to get close enough to the optimum. The GP-UCB-PE~\citep{UCBPE} combines the UCB strategy and the pure exploration~\citep{PE} in the evaluations of the same batch, achieving a better upper bound on the cumulative regret compared with the GP-BUCB. Although these methods do not require any Monte Carlo sampling, they select candidate queries of a batch greedily. As a result, they are still far from satisfactory in terms of both efficiency and scalability.

 To achieve greater efficiency in the batch selection, we propose to simultaneously select candidate queries of a batch in a holistic manner, rather than the previous sequential manner. In this paper, we analyze both the batch BO and the sequential BO from a frequentist perspective. For the batch BO, we propose a novel batch selection method that takes both the mean prediction value and the correlation of points in a batch into consideration.   Our method leads to a novel batch acquisition function.
 By jointly maximizing the novel acquisition function with respect to all the points in a batch, the proposed method is able to attain a better exploitation/exploration trade-off. 

 For the sequential BO, we obtain a similar acquisition function as that in the GP-UCB~\citep{gpucb}, except that our function employs a constant weight for the deviation term. The constant weight is preferred over the previous theoretical weight proposed in GP-UCB, because the latter is overly conservative, which has been observed in many other works~\citep{bogunovic2016truncated,bogunovic2018adversarially,gpucb}.
Moreover, for functions with a bounded norm in the reproducing kernel Hilbert space (RKHS), we derive the non-trivial regret bounds for both the batch BO method and the sequential BO method.

At the beginning of the BO process, since little information is known, the initialization phase becomes vitally important.
To obtain a good and robust initialization, we first study the properties which are necessary for a robust initialization through analyzing the adversarial regret. We prove that the regret bounds decrease with the decrease of the covering radius (named fill distance in \citep{kanagawa2018gaussian}). Minimizing the covering radius of a lattice is equivalent to maximizing its packing radius (named separate distance in \citep{kanagawa2018gaussian} )~\citep{rank1Image,keller2007monte}, we then propose a novel fast searching method to maximize the packing radius of a rank-1 lattice and obtain the points set with a small covering radius.





Our contributions are summarized as follows:

\begin{itemize}
    \item We study the black-box optimization for functions with a bounded norm in RKHS and achieve deterministic regret bounds for both the noise-free setting and the perturbation setting. The study not only brings a new insight into the BO literature but also provides better guidance for designing new acquisition functions.

   \item We propose a more-efficient novel adaptive algorithm for batch optimization, which selects candidate queries of a batch in a holistic manner. Theoretically, we prove that the proposed methods achieve non-trivial regret bounds.

    \item We analyze the adversarial regret for a robust initialization of BO, and theoretically prove that the regret bounds decrease with the decrease of the covering radius, and provide a criterion for generating points set to minimize the bound for the initialization of BO.

    \item We propose a novel, fast searching algorithm to maximize the packing radius of a rank-1 lattice and generate a set of points with a small covering radius. The generated points set provides a robust initialization for BO. Moreover, the set of points can be used for integral approximation on domain $[0,1]^d$. Experimental results show that the proposed method can achieve a larger packing radius (separate distance) compared with the baselines.

\end{itemize}

\section{Related Work}

Black-box optimization has been investigated by different communities for several decades. 
In the mathematical optimization community, derivative-free optimization (DFO) methods are widely studied for black-box optimization. These methods can be further divided into three categories:  direct search methods,  model-based methods, and random search methods. Amongst them, the model-based methods guide the searching procedure by using the model prediction as to the surrogate, which is quite similar to the Bayesian optimization methods. We refer to \citep{rios2013derivative}, \citep{audet2017derivative} and \citep{derivative2} for detailed survey of the derivative-free optimization methods. 
In the evolutionary computation community, researchers have developed the evolutionary algorithm~\citep{srinivas1994genetic} and evolutionary strategy methods~\citep{back1991survey} for the black-box optimization, where the latter is similar to the Nesterov random search~\citep{nesterov2017random} in the DFO methods since both the evolutionary strategy methods and the Nesterov random search employ the Gaussian smoothing technique to approximate the gradient. 
In the machine learning community, investigating the black-box optimization from the aspect of Bayesian optimization (BO) has attracted more and more attention recently. BO has been successfully applied to address many expensive black-box optimization problems, such as hyper-parameter tuning for deep networks~\citep{Nips2012practical}, parametric policy optimization for Reinforcement learning~\citep{wilson2014using}, and so on. Since our proposed method belongs to the BO category, we mainly focus on the review and discussion about the BO related works in the following paragraphs of the related work section.


The research of BO for black-box optimization can be dated back to \citep{movckus1975bayesian}. It becomes popular since the efficient global optimization method~\citep{jones1998efficient} for black-box optimization has been proposed. After that, various acquisition functions have been widely investigated both empirically and theoretically.  Acquisition functions are important in BO as they determine the searching behavior. Among them, expected improvement, probability improvement and upper confidence bound of the Gaussian process (GP-UCB) are the most widely used acquisition functions in practice~\citep{Nips2012practical}. Specifically, \cite{bull} has proved a simple regret bound of the expected improvement-based method. \cite{gpucb} have theoretically analyzed both the cumulative regret and the simple regret bounds of the GP-UCB method.   

Recently, many sophisticated acquisition functions have been studied. \cite{hennig2012entropy} propose entropy search (ES) method,  \cite{PES} further propose a predictive entropy search (PES) method.  Both ES and PES  select the candidate query by maximizing the mutual information between the query point and the global optimum in the input space. As a result, they need intensive Monte Carlo sampling that depends on the dimension of the input space. To reduce the cost of sampling, \cite{wang2017max} propose a max-value entropy search method, selecting the candidate query by maximizing the mutual information between the prediction of the query and the maximum value. The mutual information is computed in one dimension, which is much easier to approximate compared to the Monte Carlo sampling. Along the line of GP-UCB, \cite{GPBUCB} propose the GP-BUCB method to address the black-box optimization in a batch setting. In each batch, GP-BUCB selects the candidate queries point-by-point sequentially until reaching a pre-set batch size, according to upper confidence bound criterion~\citep{auer2002using,gpucb} with a fixed mean function and an updated covariance function. \cite{GPBUCB} prove a sub-linear growth bound on the cumulative regret, which guarantees a bound on the number of required iterations to reach sufficiently close to the optimum. \cite{UCBPE} further propose the GP-UCB-PE method, which combines the upper confidence bound strategy and the pure exploration~\citep{PE} in the evaluations of the same batch. The GP-UCB-PE achieves a better upper bound on the cumulative regret compared with the GP-BUCB. Most recently, \cite{berkenkamp2019no} propose a GP-UCB based method (A-GP-UCB) to handle BO with unknown hyper-parameters. 

 In many applications, e.g, hyper-parameter tuning and RL,  it is usually preferred to process multiple function evaluations in parallel to achieve the time efficiency. In the setting of batch BO for batch black-box optimization, besides the batch, GP-UCB methods discussed above, \cite{shah2015parallel} propose a parallel predictive entropy search method by extending the PES method~\citep{PES} to the batch case. \cite{wu2016parallel} extend the knowledge gradient method to the parallel knowledge gradient method. \cite{gonzalez2016batch} propose a penalized acquisition function for batch selection. However, these batch methods are limited in low dimensional problems.  To address the batch BO under the high-dimensional setting,  \cite{wang2017batched} propose an ensemble BO method by integrating multiple additive Gaussian process models.  However, no regret bound is analyzed in \citep{wang2017batched}.
Our work belongs to the GP-UCB family. Different from existing GP-UCB methods above, we study BO in a frequentist perspective, and we prove deterministic bounds for both the sequential and batch settings. A most related method is Bull's method~\citep{bull}. The limitations of Bull's method and the relationships are listed as follows.

\textbf{Limitations of Bull's batch method:} 
 \cite{bull} presents a non-adaptive batch-based method with all the query points except one being fixed at the beginning. However, as mentioned by Bull, this method is not practical. We propose an adaptive BO method and initialize it with a robust initialization algorithm. More specifically, we first select the initialization query points by minimizing the covering radius and then select the query points based on our adaptive methods.

\textbf{Relationship to Bull's bounds:}
We give the regret bound w.r.t. the covering radius for different kernels; while Bull's bound is limited to Mart\'ern type kernel. 
Compared with Bull's bound (Theorem~1 in \citep{bull}), our regret bound directly links to the covering radius (fill distance), which provides a criterion for generating a point set to achieve small bounds. In contrast, Bull's bound does not provide a criterion for minimizing the bound. We generate the initialization point set by minimizing covering radius (our bound); while Bull's work doesn't.

\section{Notations and Symbols}\label{notation}
Let  $\mathcal{H}_k$  denote a separable reproducing kernel Hilbert space associated with the kernel $k( \cdot ,\cdot)$, and Let ${\left\| \cdot \right\|_{{\mathcal{H}_k}}}$ denote the RKHS norm in ${\mathcal{H}_k}$ .   $\|\cdot\|$ denotes the $l_2$ norm (Euclidean distance). Let $\mathcal{B}_k= \{f: f \in \mathcal{H}_k, {\left\| f \right\|_{{\mathcal{H}_k}}} \le B\}$ denotes a bounded subset in the RKHS, and $ \mathcal{X} \subset \mathbb{R}^d$ denote a compact set in $\mathbb{R}^d$. Symbol $[N]$ denotes the set $\{1,...,N\}$. 
$\mathbb{N}$ and $\mathbb{P}$ denote the integer set and prime number set, respectively. Bold capital letters are used for matrices.

\section{Background and Problem Setup}\label{background}

Let  $f:\; \mathcal{X} \to \mathbb{R}$ be the unknown black-box function to be optimized,  where $ \mathcal{X} \subset {\mathbb{R}^d}$ is a compact set. BO aims to find a maximum $x^*$ of the function $f$, i.e., \[f(x^*)={\max _{x \in \mathcal{X} }}f(x).\] 

In sequential BO, a single point ${x_t} \in \mathcal{X}$ is selected to query  an  observation at round $t$. Batch BO is introduced in the literature for the benefits of parallel execution.  Batch BO methods select a batch of points $X_n=\{ {x_{(n-1)L+1},...,x_{nL} }\} $   simultaneously at round $n$, where $L$ is the batch size.
The batch BO is different from the sequential BO because the observation is delayed for batch BO during the batch selection phase. An additional challenge is introduced in batch BO since it needs to select a batch of points at one time,  without knowing the latest information about the function $f$.

In BO, the effectiveness of a selection policy can be measured by the cumulative regret $R_T$ and the simple regret (minimum regret) $r_T$ over $T$ steps. The cumulative regret $R_T$ and simple regret $r_T$ are defined as follows, 
\begin{eqnarray}
 {R_T} &=& \sum\limits_{t = 1}^T \left( {f({x^*}) - f({x_t})} \right),  \label{RT}\\
   {r_T} & = & f({x^*}) - \mathop {\max }\limits_{1 \le t \le T} f({x_t}).  \label{rT}
 \end{eqnarray}
The regret bound introduced in numerous theoretical works is based on the maximum information gain defined as 
\begin{equation}\label{MutualInformation}
    \gamma_T = \max _{\bf{x}_1,...,\bf{x}_T} \frac{1}{2}\log \det ({\bf{I}}_T + {\sigma ^{ - 2}}{{\bf{K}}_T}).
\end{equation}
The bounds of $\gamma_T$ for commonly used kernels are studied in \citep{gpucb}.
Specifically, \cite{gpucb} state that  ${\gamma _T} = \mathcal{O}(d\log T)$ for the linear kernel, ${\gamma _T} = \mathcal{O}({(\log T)^{d + 1}})$ for the squared
exponential  kernel and ${\gamma _T} = \mathcal{O}({T^\alpha }(\log T))$
for the Mat\'ern kernels with $\nu >1$, where $\alpha  = \frac{{d(d + 1)}}{{2\nu  + d(d + 1)}} \le 1$. We  employ the term $\gamma_T$ to build the regret bounds of our algorithms.

In this work, we consider two settings:  noise-free setting and  perturbation setting:

\noindent \textbf{Noise-Free Setting:}
We assume the underlying function $f$ belongs to an RKHS associated with kernel $k(\cdot,\cdot)$ , i.e.,  $f \in  \mathcal{H}_k$, with ${\left\| f \right\|_{{{\cal H}_{{k }}}}^{}} < \infty$ . In the noise-free setting, we can directly observe $f(x), x \in \mathcal{X}$ without noise perturbation. 

\noindent \textbf{Perturbation Setting:} In the perturbation setting, we cannot observe the function evaluation $f(x)$ directly. Instead, we observe $y = h(x)= f(x) + g(x)$, where $g(x)$ is an unknown perturbation function.

Define ${k^\sigma }(x,y): = k(x,y) + {\sigma ^2}\delta (x,y)$ for $x,y \in \mathcal{X}$, where $\delta (x,y) = \left\{ {\begin{array}{*{20}{c}}
{1\;\;x = y}\\
{0\;\;x \ne y}
\end{array}} \right.$ and $\sigma \ge 0$.  We assume $f \in \mathcal{H}_k$ ,  $g \in \mathcal{H}_{\sigma^2\delta}$ with ${\left\| f \right\|_{{{\cal H}_{{k }}}}^{}} < \infty$ and ${\left\| g \right\|_ { \mathcal{H}  _{\sigma^2\delta}}} < \infty$, respectively.
Therefore, we know $h \in \mathcal{H}_{k^\sigma}$ and ${\left\| h \right\|_ { \mathcal{H}  _{k^\sigma}}} < \infty$. The same point is assumed never selected twice, this can be ensured by the deterministic selection rule.

\section{BO in Noise-Free Setting}
\label{noisefreesetting}

In this section, we will first present algorithms and theoretical analysis in the sequential case. We then discuss our batch selection method. All detailed proofs are included in the supplementary material.

\subsection{Sequential Selection in Noise Free Setting}

 Define ${m_t}(x)$ and ${\sigma _t}(x)$ as  follows: 
 \begin{align}\label{m_t}
 & {m_t}(x) = {{\bf{k}}_t}(x)^T{\bf{K}}_t^ {-1} {{\bf{f}}_t} \\
 & {\sigma _t^2}(x) = k(x,x) - {{\bf{k}}_t}{(x)}^T{\bf{K}}_t^  {-1} {{\bf{k}}_t}(x) \label{sigmat}, 
 \end{align}
 where ${{\bf{k}}_t}(x)=[k(x,x_1),...,k(x,x_t)]^T $ and the kernel matrix ${{\bf{K}}_t} = {\left[ {k({x_i},{x_j})} \right]_{1 \le i,j \le t}}$. These terms are closely related to the posterior mean and variance functions of GP with zero  noise. We use them in the deterministic setting. A detailed review of the relationships between GP methods and kernel methods can be found in \citep{kanagawa2018gaussian}.

 The sequential optimization method in the noise-free setting is described in Algorithm~\ref{alg:1}. It has a similar form to  GP-UCB~\citep{gpucb}, except that it employs a constant weight of the term $\sigma _{t - 1}(x) $ to balance exploration and exploitation. In contrast, GP-UCB uses a $\mathcal{O}(\log(t))$ increasing weight. In practice, a constant weight is preferred in the scenarios where an aggressive selection manner is needed. For example, only a small number of evaluations can be done in the hyperparameter tuning in RL algorithms due to limited resources.
The regret bounds of Algorithm~\ref{alg:1} are given in Theorem~\ref{sequentialBO}. 
\begin{theorem}\label{sequentialBO}
Suppose $f \in \mathcal{H}_k$ associated with $k(x,x) \le 1$ and ${\left\| f \right\|_{{{\cal H}_{{k }}}}^{}} < \infty$. Let $C_1=\frac{8}{{\log (1 + {\sigma ^{ - 2}})}}$. 
\textbf{Algorithm 1} achieves a cumulative regret bound and a simple regret bound given as follows:
\begin{align}\label{R_Tseq}
{R_T} \le {\left\| f \right\|_{{\mathcal{H}_k}}}\sqrt {TC_1{\gamma _T}} \\ 
{r_T} \le {\left\| f \right\|_{{\mathcal{H}_k}}}\sqrt {\frac{{C_1{\gamma _T}}}{T}}.  \label{r_Tseq}
\end{align}
where $ 0< c < +\infty$.
\end{theorem}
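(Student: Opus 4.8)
The plan is to reproduce, in a purely deterministic RKHS setting, the classical chain of arguments behind GP-UCB-type regret bounds. Writing the acquisition rule of Algorithm~\ref{alg:1} as $x_t=\arg\max_{x\in\mathcal{X}}\big(m_{t-1}(x)+c\,\sigma_{t-1}(x)\big)$, the bound with the stated constant corresponds to taking the weight $c=\|f\|_{\mathcal{H}_k}$, and I carry out the argument for this choice. The proof rests on three estimates: (i) a deterministic confidence bound $|f(x)-m_{t-1}(x)|\le\|f\|_{\mathcal{H}_k}\,\sigma_{t-1}(x)$ valid for every $x\in\mathcal{X}$ and every $t$; (ii) a per-step regret bound $f(x^*)-f(x_t)\le 2\|f\|_{\mathcal{H}_k}\,\sigma_{t-1}(x_t)$; and (iii) a ``sum of variances'' bound $\sum_{t=1}^{T}\sigma_{t-1}^2(x_t)\le \frac{2\gamma_T}{\log(1+\sigma^{-2})}$ that ties the noise-free quantity $\sigma_{t-1}$ to the maximum information gain $\gamma_T$. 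Cauchy--Schwarz then assembles (ii) and (iii) into the cumulative bound~\eqref{R_Tseq}, and~\eqref{r_Tseq} follows from $r_T\le R_T/T$.

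For (i), observe that $m_{t-1}(\cdot)=\mathbf{k}_{t-1}(\cdot)^{T}\mathbf{K}_{t-1}^{-1}\mathbf{f}_{t-1}$ is exactly the minimum-norm interpolant of $f$ at $x_1,\dots,x_{t-1}$, hence the orthogonal projection of $f$ onto $V_{t-1}:=\mathrm{span}\{k(x_i,\cdot):i\le t-1\}$; thus $f-m_{t-1}\perp V_{t-1}$ and, by the Pythagorean identity, $\|f-m_{t-1}\|_{\mathcal{H}_k}\le\|f\|_{\mathcal{H}_k}$. Using the reproducing property and this orthogonality, $f(x)-m_{t-1}(x)=\langle f-m_{t-1},\,k(x,\cdot)-P_{V_{t-1}}k(x,\cdot)\rangle_{\mathcal{H}_k}$, so Cauchy--Schwarz together with the power-function identity $\|k(x,\cdot)-P_{V_{t-1}}k(x,\cdot)\|_{\mathcal{H}_k}^2=k(x,x)-\mathbf{k}_{t-1}(x)^{T}\mathbf{K}_{t-1}^{-1}\mathbf{k}_{t-1}(x)=\sigma_{t-1}^2(x)$ gives (i). Step (ii) is then immediate: applying (i) at $x^*$ and $x_t$ and using that $x_t$ maximizes the acquisition function,
\begin{align*}
f(x^*)-f(x_t)&\le\big(m_{t-1}(x^*)+\|f\|_{\mathcal{H}_k}\sigma_{t-1}(x^*)\big)-\big(m_{t-1}(x_t)-\|f\|_{\mathcal{H}_k}\sigma_{t-1}(x_t)\big)\\
&\le\|f\|_{\mathcal{H}_k}\big(\sigma_{t-1}(x_t)-\sigma_{t-1}(x^*)\big)+\|f\|_{\mathcal{H}_k}\big(\sigma_{t-1}(x^*)+\sigma_{t-1}(x_t)\big)=2\|f\|_{\mathcal{H}_k}\sigma_{t-1}(x_t).
\end{align*}

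The step I expect to be the main obstacle is (iii), since $\sigma_{t-1}$ is the \emph{noise-free} power function whereas $\gamma_T$ is defined with the regularizer $\sigma^2$. I would bridge the gap through the regularized variance $\widetilde\sigma_{t-1}^2(x):=k(x,x)-\mathbf{k}_{t-1}(x)^{T}(\mathbf{K}_{t-1}+\sigma^2\mathbf{I})^{-1}\mathbf{k}_{t-1}(x)$: from $(\mathbf{K}_{t-1}+\sigma^2\mathbf{I})^{-1}\preceq\mathbf{K}_{t-1}^{-1}$ we get $\sigma_{t-1}^2(x)\le\widetilde\sigma_{t-1}^2(x)\le k(x,x)\le 1$, and the elementary inequality $s\le\frac{1}{\log(1+\sigma^{-2})}\log(1+\sigma^{-2}s)$ for $s\in[0,1]$ (monotonicity of $u\mapsto\log(1+u)/u$) yields $\widetilde\sigma_{t-1}^2(x_t)\le\frac{1}{\log(1+\sigma^{-2})}\log\big(1+\sigma^{-2}\widetilde\sigma_{t-1}^2(x_t)\big)$. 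The standard telescoping identity $\sum_{t=1}^{T}\log\big(1+\sigma^{-2}\widetilde\sigma_{t-1}^2(x_t)\big)=\log\det(\mathbf{I}_T+\sigma^{-2}\mathbf{K}_T)\le 2\gamma_T$ (the last inequality because $\gamma_T$ is the supremum of $\tfrac12\log\det$ over point configurations) then gives $\sum_{t=1}^{T}\sigma_{t-1}^2(x_t)\le\sum_{t=1}^{T}\widetilde\sigma_{t-1}^2(x_t)\le\frac{2\gamma_T}{\log(1+\sigma^{-2})}$. Combining, $R_T=\sum_{t=1}^{T}\big(f(x^*)-f(x_t)\big)\le 2\|f\|_{\mathcal{H}_k}\sum_{t=1}^{T}\sigma_{t-1}(x_t)\le 2\|f\|_{\mathcal{H}_k}\sqrt{T\sum_{t=1}^{T}\sigma_{t-1}^2(x_t)}\le\|f\|_{\mathcal{H}_k}\sqrt{TC_1\gamma_T}$, and $r_T\le\frac1T\sum_{t=1}^{T}\big(f(x^*)-f(x_t)\big)=R_T/T\le\|f\|_{\mathcal{H}_k}\sqrt{C_1\gamma_T/T}$. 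A final technical point worth recording is that the deterministic selection rule never queries a point twice, so each $\mathbf{K}_{t-1}$ is invertible and all interpolation quantities above are well defined.
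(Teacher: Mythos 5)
Your proposal is correct and follows essentially the same route as the paper's own proof: the deterministic confidence bound via Cauchy--Schwarz in the RKHS (the paper computes it algebraically with $\alpha=\mathbf{K}_t^{-1}\mathbf{k}_t(x)$ where you use the projection/power-function viewpoint, which is the same estimate), the per-step bound $f(x^*)-f(x_t)\le 2\|f\|_{\mathcal{H}_k}\sigma_{t-1}(x_t)$, the comparison $\sigma_{t-1}^2\le\widetilde\sigma_{t-1}^2$ with the regularized variance, the elementary inequality plus the log-det telescoping to reach $2\gamma_T/\log(1+\sigma^{-2})$, and the final Cauchy--Schwarz with $r_T\le R_T/T$. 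No gaps.
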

\textbf{Remark:} 
\textcolor{black}{We can achieve  concrete bounds w.r.t $T$ by replacing $\gamma_T$ with the specific bound for the corresponding kernel. For example, for SE kernels, we can obtain that $R_T = \mathcal{O}(\sqrt{T}{(\log T)^{d + 1}})$  and $r_T = \mathcal{O}(\frac{{(\log T)^{d + 1}}}{\sqrt{T}} )$, respectively.  \cite{bull} presents   bounds  for Mat\'ern type kernels. The bound in  Theorem~\ref{sequentialBO} is tighter than Bull's bound of pure EI (Theorem 4 in~\citep{bull}) when the smoothness parameter of the Mat\'ern kernel  $\nu> \frac{d(d+1)}{d-2} = \mathcal{O}(d)$.  This is no better than the bound of mixed strategies (Theorem 5) in Bull's work.      Nevertheless, the bound in Theorem~\ref{sequentialBO} makes fewer  assumptions about the kernels, and covers more general results (kernels) compared with Bull's work.    }

\begin{algorithm}[t]
   \caption{}
   \label{alg:1}
\begin{algorithmic}

                                   
 \FOR{$t=1$ {\bfseries to} $T$}
  \STATE  Obtain ${m _{t - 1}}( \cdot )$ and $ \sigma _{t - 1}^2( \cdot )$ via equations (\ref{m_t}) and (\ref{sigmat}).
  \STATE Choose ${x_t} = \mathop {\arg \max }\nolimits_{x \in \mathcal{X} } {m _{t - 1}}(x) + {\left\| f \right\|_{{\mathcal{H}_k}}} \sigma _{t - 1}(x)  $. 
  
 
 \STATE Query the observation $f(x_t)$  at location $x_t$.  

   \ENDFOR
\end{algorithmic}
\end{algorithm}

\subsection{Batch Selection in Noise-Free Setting}

Let N and L be the number of batches and the batch size, respectively.
Without loss of generality, we assume  $T=NL$. Let $X_n=\{ {x_{(n-1)L+1},...,x_{nL} }\} $ and ${\overline X_{ n }} = \{X_1,...,X_n\}= \left\{ {{x_1},...,{x_{n L}}} \right\}$ be the $n^{th}$ batch of points and all the   $n$ batches of points,  respectively. The  covariance function   of $X \in \mathbb{R}^{ d \times L}$ for the noise free case is    given as follows:  
\begin{align}\label{BnoiseFree}
  &  {{\mathop{\rm cov}} _n}({X},{X}) =    {\bf{K}}({X},{X})  -  {\bf{K}}{({\overline X _n},{X} )^T}{{\bf{K}}({{\overline X }_n},{{\overline X }_n}) ^{ \!-\! 1}}{\bf{K}}({\overline X _n},{X}\! )
\end{align}
where $ \bf{K}({X},{X})$ is the $L \times L$ kernel matrix, $ {\bf{K}}{({\overline X _n},{X})} $ denotes the $ nL \times L$ kernel matrix between  ${\overline X _n}$ and $X$. When $n=0$,   $ {\mathop{\rm cov}} _0({X},{X})= {\bf{K}}({X},{X})$ is the prior kernel matrix. We assume that  the kernel matrix is invertible in the noise-free setting.

The proposed batch optimization algorithm is presented in Algorithm~\ref{alg:2}.  It employs the mean prediction value of a batch together with a term of covariance to balance the exploration/exploitation trade-off.  The covariance term in Algorithm~\ref{alg:2} penalizes the batch with over-correlated points. Intuitively, for SE kernels and Mat\'ern kernels, it penalizes the batch with points that are too close to each other (w.r.t Euclidean distance). As a result, it encourages the points in a batch to spread out for better exploration. 
The regret bounds of our batch optimization method are summarized in Theorem~\ref{batchBO}. 

\begin{theorem}\label{batchBO}
Suppose $f \in \mathcal{H}_k$ associated with $k(x,x) \le 1$ and ${\left\| f \right\|_{{{\cal H}_{{k }}}}^{}} < \infty$. Let  $T=NL$, $\beta = {\max _{n \in \{ 1,...,N\} }}{\left\| {\widehat { \mathop{\rm cov}} _{n - 1}({X_n},{X_n})} \right\|_2}$  and  $C_2=\frac{8\beta}{{\log (1 + \beta{\sigma ^{ - 2}})}}$ . 
\textbf{Algorithm 2} with batch size $L$ achieves a cumulative regret bound and a simple regret bound given by equations (\ref{R_Tseq2}) and (\ref{r_Tseq2}), respectively:
\begin{align}\label{R_Tseq2}
 {R_T} \le {\left\| f \right\|_{{\mathcal{H}_k}}}\sqrt {TC_2{\gamma _{T}} } \\ 
 {r_T} \le {\left\| f \right\|_{{\mathcal{H}_k}}}\sqrt {\frac{{C_2{\gamma _{T}}}}{T}}.  \label{r_Tseq2}
\end{align}
\end{theorem}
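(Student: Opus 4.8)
The plan is to run the same template as in the proof of Theorem~\ref{sequentialBO}, but to replace the per-step argument by a per-batch argument and to push the within-batch correlation into the scalar $\beta$. The starting point is the deterministic RKHS confidence bound: for every $x\in\mathcal{X}$ and every batch index $n$,
\[
|f(x)-m_{(n-1)L}(x)|\;\le\;\|f\|_{\mathcal{H}_k}\,\sigma_{(n-1)L}(x),
\]
where $m_{(n-1)L}$ and $\sigma_{(n-1)L}$ are the posterior mean and standard deviation after the first $n-1$ batches. This is the reproducing-property-plus-Cauchy--Schwarz fact already used for Algorithm~1, now applied between consecutive batches rather than consecutive steps: the residual of $k(x,\cdot)$ after projecting onto the span of the first $(n-1)L$ kernel sections has RKHS norm $\sigma_{(n-1)L}(x)$, and $f(x)-m_{(n-1)L}(x)$ is its inner product with $f$.

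Next I would bound the instantaneous regret accumulated inside batch $n$. Algorithm~2 picks $X_n$ by jointly maximizing a batch acquisition that couples the $L$ posterior means with the quadratic form $\mathbf{1}^{\!T}\mathrm{cov}_{n-1}(X,X)\mathbf{1}$ of the batch covariance. Comparing the value at the chosen $X_n$ with the value at the constant batch $\{x^*,\dots,x^*\}$ (used only as a comparator; the quadratic form there collapses to $L^2\sigma_{n-1}^2(x^*)$) and invoking the confidence bound gives $L f(x^*)\le\sum_{i}m_{n-1}(x_i)+\|f\|_{\mathcal{H}_k}\sqrt{\mathbf{1}^{\!T}\widehat{\mathrm{cov}}_{n-1}(X_n,X_n)\mathbf{1}}$. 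Subtracting the lower bound $\sum_i f(x_i)\ge\sum_i m_{n-1}(x_i)-\|f\|_{\mathcal{H}_k}\sqrt{L\,\mathrm{tr}(\widehat{\mathrm{cov}}_{n-1}(X_n,X_n))}$ (Cauchy--Schwarz over the $L$ marginal deviations, whose squares form the diagonal of the batch covariance) and using $\mathbf 1^{\!T}A\mathbf 1\le L\,\mathrm{tr}(A)$ for PSD $A$ yields
\[
\sum_{i=(n-1)L+1}^{nL}\bigl(f(x^*)-f(x_i)\bigr)\;\le\;2\|f\|_{\mathcal{H}_k}\sqrt{L\,\mathrm{tr}\bigl(\widehat{\mathrm{cov}}_{n-1}(X_n,X_n)\bigr)}.
\]
Summing over the $N$ batches and applying Cauchy--Schwarz over $n$ (with $NL=T$) gives $R_T\le 2\|f\|_{\mathcal{H}_k}\sqrt{T\sum_{n=1}^{N}\mathrm{tr}(\widehat{\mathrm{cov}}_{n-1}(X_n,X_n))}$.

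It then remains to bound $\sum_n\mathrm{tr}(\widehat{\mathrm{cov}}_{n-1}(X_n,X_n))$ by the maximum information gain. By definition of $\beta$, every eigenvalue $\lambda$ of $\widehat{\mathrm{cov}}_{n-1}(X_n,X_n)$ lies in $[0,\beta]$, and concavity of $\lambda\mapsto\log(1+\sigma^{-2}\lambda)$ gives $\lambda\le\frac{\beta}{\log(1+\beta\sigma^{-2})}\log(1+\sigma^{-2}\lambda)$ on that interval, hence $\mathrm{tr}(A)\le\frac{\beta}{\log(1+\beta\sigma^{-2})}\log\det(\mathbf{I}+\sigma^{-2}A)$ for each batch covariance $A$. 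The block/chain-rule determinant identity $\sum_{n=1}^{N}\log\det(\mathbf{I}+\sigma^{-2}\widehat{\mathrm{cov}}_{n-1}(X_n,X_n))=\log\det(\mathbf{I}+\sigma^{-2}\mathbf{K}_T)\le 2\gamma_T$ then yields $\sum_n\mathrm{tr}(\widehat{\mathrm{cov}}_{n-1}(X_n,X_n))\le\frac{2\beta}{\log(1+\beta\sigma^{-2})}\gamma_T$. Substituting into the bound on $R_T$ gives $R_T\le 2\|f\|_{\mathcal{H}_k}\sqrt{\tfrac{2\beta}{\log(1+\beta\sigma^{-2})}\,T\gamma_T}=\|f\|_{\mathcal{H}_k}\sqrt{TC_2\gamma_T}$, and the simple-regret bound follows from $r_T=\min_{1\le t\le T}(f(x^*)-f(x_t))\le R_T/T\le\|f\|_{\mathcal{H}_k}\sqrt{C_2\gamma_T/T}$.

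I expect the main obstacle to be the per-batch step together with the bookkeeping about which covariance appears where: one must use exactly the quadratic form $\mathbf 1^{\!T}\mathrm{cov}_{n-1}(X,X)\mathbf 1$ in the batch acquisition so that the constant comparison batch contributes $L\sigma_{n-1}(x^*)$ (not $\sqrt{L}\,\sigma_{n-1}(x^*)$, which would break the direction of the inequality); one must check that $\beta<\infty$ and that all batch covariance matrices share the eigenvalue bound $\beta$; and one must verify the $\sigma$-regularized block-determinant identity, including the passage from the noise-free posterior covariance used inside the algorithm to the $\sigma$-regularized covariance $\widehat{\mathrm{cov}}_{n-1}$ that enters $\beta$ and $\gamma_T$ (via $\mathrm{cov}_{n-1}(X_n,X_n)\preceq\widehat{\mathrm{cov}}_{n-1}(X_n,X_n)$). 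The two Cauchy--Schwarz applications and the information-gain accounting are then routine.
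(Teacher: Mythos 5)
Your overall architecture matches the paper's proof (compare the batch acquisition at the chosen $X_n$ against the constant comparator batch $\{x^*,\dots,x^*\}$, for which the covariance terms collapse to $\sigma_{(n-1)L}(x^*)$; convert the accumulated traces to $\log\det$ via the eigenvalue bound $\beta$; use the block-determinant chain rule to reach $\gamma_T$; finish with Cauchy--Schwarz over batches and $r_T\le R_T/T$). However, there is a genuine gap in your per-batch step. From Algorithm~2's acquisition the comparison actually yields
\begin{align}
L\,f(x^*)\;\le\;\sum_{i=1}^{L} m_{(n-1)L}(x_i)\;+\;\|f\|_{\mathcal{H}_k}\Bigl(2\sqrt{L\,\mathrm{tr}(\mathrm{cov}_{n-1}(X_n,X_n))}-\sqrt{\mathbf{1}^{T}\mathrm{cov}_{n-1}(X_n,X_n)\mathbf{1}}\Bigr), \nonumber
\end{align}
not the inequality you state, $L f(x^*)\le\sum_i m_{n-1}(x_i)+\|f\|_{\mathcal{H}_k}\sqrt{\mathbf{1}^{T}\widehat{\mathrm{cov}}_{n-1}(X_n,X_n)\mathbf{1}}$; since $\mathbf{1}^{T}A\mathbf{1}\le L\,\mathrm{tr}(A)$, your right-hand side is the \emph{smaller} of the two, so your display is a strictly stronger claim that does not follow from the maximization step. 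If you replace it by the correct consequence and keep your lower bound $\sum_i f(x_i)\ge\sum_i m(x_i)-\|f\|_{\mathcal{H}_k}\sqrt{L\,\mathrm{tr}(\cdot)}$ (single-point Cauchy--Schwarz applied $L$ times), the negative quadratic-form term no longer cancels and you only obtain a per-batch factor $3$ instead of $2$, i.e.\ $R_T\le\tfrac{3}{2}\|f\|_{\mathcal{H}_k}\sqrt{TC_2\gamma_T}$, which is not the stated bound with $C_2=\frac{8\beta}{\log(1+\beta\sigma^{-2})}$.

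The missing ingredient is exactly the paper's Lemma~\ref{RKHSbound1}: applying Cauchy--Schwarz in the RKHS to the \emph{aggregated} functional $\sum_{i=1}^{L}\bigl(m_{(n-1)L}(x_i)-f(x_i)\bigr)$ gives the deviation bound $\|f\|_{\mathcal{H}_k}\sqrt{\mathbf{1}^{T}\mathrm{cov}_{n-1}(X_n,X_n)\mathbf{1}}$ for the sum (not $\sqrt{L\,\mathrm{tr}}$), and this term cancels exactly against the $-\sqrt{\mathbf{1}^{T}\mathrm{cov}_{n-1}(X,X)\mathbf{1}/L^2}$ term built into the acquisition, producing the per-batch bound $2\|f\|_{\mathcal{H}_k}\sqrt{L\,\mathrm{tr}(\mathrm{cov}_{n-1}(X_n,X_n))}$ and hence the constant $8$ in $C_2$ (this is precisely the point of the remark after Lemma~\ref{RKHSbound1} that the batch bound is tighter than applying the single-point bound $L$ times). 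The remaining bookkeeping you describe --- $\mathrm{tr}(\mathrm{cov}_{n-1})\le\mathrm{tr}(\widehat{\mathrm{cov}}_{n-1})$, the chord inequality $\lambda\le\frac{\beta}{\log(1+\beta\sigma^{-2})}\log(1+\sigma^{-2}\lambda)$ on $[0,\beta]$, the identity $\sum_{n}\log\det(\mathbf{I}_L+\sigma^{-2}\widehat{\mathrm{cov}}_{n-1}(X_n,X_n))=\log\det(\mathbf{I}_T+\sigma^{-2}\mathbf{K}_T)\le 2\gamma_T$, and the final Cauchy--Schwarz over $n$ --- coincides with the paper's Lemmas and is fine once the per-batch step is repaired.
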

\textbf{Remark:} (1) A large $\beta$ leads to a large bound, while a small $\beta$ attains a small bound. Algorithm 2 punishes the correlated points and encourages the uncorrelated points in a batch, which can attain a small $\beta$ in general.   (2) A trivial bound of $\beta $ is $\beta \le L$.

To prove Theorem \ref{batchBO}, the following Lemma is proposed. The detailed proof can be found in the supplementary material.
\begin{lemma}
\label{RKHSbound1}
Suppose $f \in \mathcal{H}_k$ associated with kernel $k(x,x)$ and ${\left\| f \right\|_{{{\cal H}_{{k }}}}^{}} < \infty$, then we have ${\left( {\sum\nolimits_{i = 1}^L {{m_t}({{\widehat x_i}}) - \sum\nolimits_{i = 1}^L {f({{\widehat x_i}})} } } \right)^2} \le \left\| f \right\|_{{\mathcal{H}_k}}^2  ({{\bf{1}}^T}{\bf{A1}})$, where $\bf{A}$ denotes the kernel  covariance matrix  with ${{\bf{A}}_{ij}} = k({\widehat x_i},{\widehat x_j}) - {{\bf{k}}_t}{({\widehat x_i})^T}{\bf{K}}_t^ - {{\bf{k}}_t}({\widehat x_j})$.
\end{lemma}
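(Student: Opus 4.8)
The plan is to exploit the reproducing property of $\mathcal{H}_k$ together with the Cauchy--Schwarz inequality in the RKHS, exactly in the spirit of the standard single-point kernel interpolation error bound, but summed over the $L$ batch points. First I would observe that the interpolant $m_t$ satisfies $m_t(x) = \langle f, \bar{k}_t(\cdot,x)\rangle_{\mathcal{H}_k}$ for a suitable representer $\bar k_t(\cdot, x)$ lying in the span of $\{k(\cdot,x_1),\dots,k(\cdot,x_t)\}$, and that by the reproducing property $f(x) = \langle f, k(\cdot,x)\rangle_{\mathcal{H}_k}$. Hence each pointwise error can be written as $m_t(\widehat x_i) - f(\widehat x_i) = \langle f, \bar k_t(\cdot,\widehat x_i) - k(\cdot,\widehat x_i)\rangle_{\mathcal{H}_k}$. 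Summing over $i \in [L]$ and pulling $f$ out of the inner product gives
\begin{equation*}
\sum_{i=1}^L m_t(\widehat x_i) - \sum_{i=1}^L f(\widehat x_i) = \Big\langle f,\ \sum_{i=1}^L \big(\bar k_t(\cdot,\widehat x_i) - k(\cdot,\widehat x_i)\big)\Big\rangle_{\mathcal{H}_k}.
\end{equation*}

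Next I would apply Cauchy--Schwarz to bound the square of the left-hand side by $\|f\|_{\mathcal{H}_k}^2$ times $\big\| \sum_{i=1}^L (\bar k_t(\cdot,\widehat x_i) - k(\cdot,\widehat x_i)) \big\|_{\mathcal{H}_k}^2$, so the remaining task is to show this squared norm equals $\mathbf{1}^T \mathbf{A}\mathbf{1} = \sum_{i,j} \mathbf{A}_{ij}$. Expanding the squared norm yields $\sum_{i,j} \langle \bar k_t(\cdot,\widehat x_i) - k(\cdot,\widehat x_i),\ \bar k_t(\cdot,\widehat x_j) - k(\cdot,\widehat x_j)\rangle_{\mathcal{H}_k}$, so it suffices to identify each summand with $\mathbf{A}_{ij} = k(\widehat x_i,\widehat x_j) - \mathbf{k}_t(\widehat x_i)^T \mathbf{K}_t^{-} \mathbf{k}_t(\widehat x_j)$. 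This is the classical computation: writing $\bar k_t(\cdot,x) = \sum_{\ell} [\mathbf{K}_t^{-}\mathbf{k}_t(x)]_\ell\, k(\cdot,x_\ell)$ and using the reproducing property repeatedly, the cross terms collapse and one obtains exactly $\langle \bar k_t(\cdot,\widehat x_i) - k(\cdot,\widehat x_i),\ \bar k_t(\cdot,\widehat x_j) - k(\cdot,\widehat x_j)\rangle_{\mathcal{H}_k} = k(\widehat x_i,\widehat x_j) - \mathbf{k}_t(\widehat x_i)^T\mathbf{K}_t^{-}\mathbf{k}_t(\widehat x_j)$, which is precisely the posterior-covariance-type quantity $\mathbf{A}_{ij}$. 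Summing over all $i,j$ gives $\mathbf{1}^T\mathbf{A}\mathbf{1}$, completing the argument.

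The main obstacle I anticipate is being careful about the degenerate/non-invertible case, since the lemma writes $\mathbf{K}_t^{-}$ (a pseudo-inverse) rather than $\mathbf{K}_t^{-1}$; I would need to verify that the orthogonal-projection interpretation of the interpolant still holds — i.e., $\bar k_t(\cdot,x)$ is the orthogonal projection of $k(\cdot,x)$ onto $\mathrm{span}\{k(\cdot,x_1),\dots,k(\cdot,x_t)\}$ — and that the Moore--Penrose inverse correctly represents this projection even when the data kernel matrix is rank-deficient. The secondary (purely bookkeeping) obstacle is keeping the index ranges straight: $\mathbf{K}_t$ is indexed by the $t$ already-queried points while $\mathbf{A}$ is an $L\times L$ matrix indexed by the batch points $\widehat x_1,\dots,\widehat x_L$, so the inner-product expansion must be organized so that the $t$-dimensional sums (over previously queried points) are absorbed into the $\mathbf{k}_t(\cdot)^T\mathbf{K}_t^{-}\mathbf{k}_t(\cdot)$ terms while the $L\times L$ structure is retained in $\mathbf{1}^T\mathbf{A}\mathbf{1}$. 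Once the orthogonal-projection lemma is in hand, everything else is the standard quadratic-form manipulation.
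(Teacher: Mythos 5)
Your proposal is correct and follows essentially the same route as the paper's proof: writing $m_t(\widehat x_i)=\sum_{l}[\mathbf{K}_t^-\mathbf{k}_t(\widehat x_i)]_l f(x_l)$, expressing the summed deviation as a single RKHS inner product with $f$, applying Cauchy--Schwarz, and expanding the squared norm so the quadratic form collapses to $\mathbf{1}^T\mathbf{A}\mathbf{1}$. Your extra care about $\mathbf{K}_t^-$ as a pseudo-inverse is fine but not an issue in the paper, which assumes the kernel matrix is invertible in the noise-free setting (and the Moore--Penrose identity $\mathbf{K}_t^-\mathbf{K}_t\mathbf{K}_t^-=\mathbf{K}_t^-$ makes the same computation go through regardless).
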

\textbf{Remark:} Lemma~\ref{RKHSbound1} provides a tighter bound for the deviation of the summation of a batch  than directly applying the bound for a single point $L$ times.

\begin{algorithm}[t]
   \caption{}
   \label{alg:2}
\begin{algorithmic}
   
                                   
  \FOR{$n=1$ {\bfseries to} $N$}
  \STATE  Obtain ${m _ {(n-1)L}}( \cdot )$ and ${\mathop{\rm cov}} _{n-1}(\cdot)$  via equations (\ref{m_t}) and (\ref{BnoiseFree}),  respectively.
  \STATE Choose ${X_n} \! = \! \mathop {\arg \max }\limits_{X \subset {\cal X}} \frac{1}{L}\sum\limits_{i = 1}^L {{m_{(n - 1)L}}({X_{ \cdot ,i}})}  + {\left\| f \right\|_{{{\cal H}_k}}}\!\!\left(\! {2\sqrt {\frac{{tr({{{{\mathop{\rm cov}} }_{n - 1}}(X,X)})}}{L}}  -\! \sqrt {\frac{{{{\bf{1}}^T}{{{{\mathop{\rm cov}} }_{n - 1}}(X,X)}{{\bf{1}}}}}{{{L^2}}}} } \right) $.
 
 \STATE Query the batch observations $\{f(x_{(n-1)L+1}),...,f(x_{nL}) \}$ at locations $ X_n $.

   \ENDFOR
\end{algorithmic}
\end{algorithm}

\section{BO in Perturbation Setting}
\label{perturbationsetting}

In the perturbation setting, we cannot observe the function evaluation $f(x)$ directly. Instead, we observe $y = h(x)= f(x) + g(x)$, where $g(x)$ is an unknown perturbation function.
We will discuss the sequential selection and batch selection methods in the following sections,  respectively.

\subsection{Sequential Selection in Perturbation Setting}

Define ${\widehat m_t}(x)$ and ${\widehat \sigma _t}(x)$ as  follows: 
\begin{align}\label{m_tN}
& {\widehat m_t}(x) = {{\bf{k}}_t}(x)^T({\bf{K}}_t + \sigma^2I)^  {-1} {{\bf{y}}_t} \\
& {\widehat \sigma _t^2}(x) = k(x,x) - {{\bf{k}}_t}{(x)}^T({\bf{K}}_t + \sigma^2I)^  {-1} {{\bf{k}}_t}(x), \label{sigmatN}
\end{align}
where ${{\bf{k}}_t}(x)=[k(x,x_1),...,k(x,x_t)]^T $ and the kernel matrix ${{\bf{K}}_t} = {\left[ {k({x_i},{x_j})} \right]_{1 \le i,j \le t}}$.

The sequential selection method is presented in Algorithm~\ref{alg:3}. It has a similar formula to  Algorithm~\ref{alg:1}; while Algorithm~\ref{alg:3} employs a regularization $\sigma^2I$  to handle the uncertainty of the perturbation. The regret bounds of Algorithm~\ref{alg:3} are summarized in Theorem~\ref{TheoremNS}.

\begin{algorithm}[t]
   \caption{}
   \label{alg:3}
\begin{algorithmic}

                                   
 \FOR{$t=1$ {\bfseries to} $T$}
  \STATE  Obtain ${\widehat m _{t - 1}}( \cdot )$ and $ \widehat \sigma _{t - 1}^2( \cdot )$ via equation (\ref{m_tN}) and (\ref{sigmatN}).
  \STATE Choose ${x_t} = \mathop {\arg \max }\nolimits_{x \in \mathcal{X} } {\widehat m _{t - 1}}(x) + {\left\| h \right\|_{{\mathcal{H}_{k^\sigma}}}} \widehat \sigma _{t - 1}(x)  $ 
  
 
 \STATE Query the  observation $y_t=h(x_t)$  at location $x_t$.  

   \ENDFOR
\end{algorithmic}
\end{algorithm}

\begin{theorem}
\label{TheoremNS}
Define ${k^\sigma }(x,y): = k(x,y) + {\sigma ^2}\delta (x,y) \le B$, where $\delta (x,y) = \left\{ {\begin{array}{*{20}{c}}
{1\;\;x = y}\\
{0\;\;x \ne y}
\end{array}} \right.$ and $\sigma \ge 0$. 
Suppose $f \in \mathcal{H}_k$,  $g \in \mathcal{H}_{\sigma^2 \delta}$ associated with kernel $k$ and kernel $\sigma^2 \delta $ with ${\left\| f \right\|_{{{\cal H}_{{k }}}}^{}} < \infty$ and ${\left\| g \right\|_ { \mathcal{H}  _{\sigma^2\delta}}} < \infty$, respectively. Let $C_3=\frac{8B}{{\log (1 + B {\sigma ^{ - 2}})}}$. 
\textbf{Algorithm \ref{alg:3}} achieves a cumulative regret bound and a simple regret bound given by equations (\ref{nR_Tseq}) and (\ref{nr_Tseq2}), respectively.

\begin{align}
\label{nR_Tseq}
{R_T} \le {\left\| h \right\|_{{\mathcal{H}_{k^\sigma}}}}\!\!\!\sqrt {T{C_3}{\gamma _T}} \!  + \!\! 2T\left(\! {\left\| h \right\|_{{{\cal H}_{{k^\sigma }}}}^{} \! + \! \left\| g \right\|_{{{\cal H}_{{\sigma ^2}\delta }}}^{}} \!\right)\!{\sigma ^{}}  \\ 
{r_T} \le {\left\| h \right\|_{{\mathcal{H}_{k^\sigma}}}}\!\!\sqrt {\frac{{C_3{\gamma _T}}}{T}}   + 2\left( {\left\| h \right\|_{{{\cal H}_{{k^\sigma }}}}^{} + \left\| g \right\|_{{{\cal H}_{{\sigma ^2}\delta }}}^{}} \right){\sigma ^{}}  \label{nr_Tseq2}
\end{align} 
\end{theorem}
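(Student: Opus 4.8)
The plan is to mimic the noise‑free analysis behind Theorem~\ref{sequentialBO}, but run on the \emph{observed} function $h=f+g$ with kernel $k^\sigma$, and then pay two small additive prices. The key observation is that, since the queried points are never repeated, the Gram matrix of $k^\sigma$ on $\{x_1,\dots,x_t\}$ is exactly $\mathbf K_t+\sigma^2 I$ and $k^\sigma(x,x_i)=k(x,x_i)$ at any unqueried $x$; hence $\widehat m_t$ in \eqref{m_tN} is precisely the kernel‑$k^\sigma$ RKHS interpolant of $h$. However $\widehat\sigma_t^2$ in \eqref{sigmatN} is \emph{not} the associated power function: since $k^\sigma(x,x)=k(x,x)+\sigma^2$, the true kernel‑$k^\sigma$ posterior variance equals $\widehat\sigma_t^2(x)+\sigma^2$. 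The standard RKHS interpolation‑error bound (see \citep{kanagawa2018gaussian}) therefore reads
\[
\bigl|h(x)-\widehat m_{t}(x)\bigr|\;\le\;\|h\|_{\mathcal H_{k^\sigma}}\sqrt{\widehat\sigma_{t}^2(x)+\sigma^2}\;\le\;\|h\|_{\mathcal H_{k^\sigma}}\bigl(\widehat\sigma_{t}(x)+\sigma\bigr),
\]
i.e.\ the deterministic confidence bound of Theorem~\ref{sequentialBO} with an extra slack $\|h\|_{\mathcal H_{k^\sigma}}\sigma$, which will be the source of one $O(T\sigma)$ term. The other comes from $g$, controlled by the reproducing property in $\mathcal H_{\sigma^2\delta}$: $|g(x)|\le\|g\|_{\mathcal H_{\sigma^2\delta}}\sqrt{\sigma^2\delta(x,x)}=\|g\|_{\mathcal H_{\sigma^2\delta}}\sigma$ for all $x$.

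\noindent Next I bound the instantaneous regret. Fix a round $t$ and let $x^*=\arg\max_x f(x)$. Applying the selection rule of Algorithm~\ref{alg:3} at $x^*$, then the confidence bound at $x^*$ (with $h(x^*)\ge f(x^*)-\|g\|_{\mathcal H_{\sigma^2\delta}}\sigma$) and at $x_t$ (with $h(x_t)\le f(x_t)+\|g\|_{\mathcal H_{\sigma^2\delta}}\sigma$), yields the chain
\[
f(x^*)-\bigl(\|h\|_{\mathcal H_{k^\sigma}}+\|g\|_{\mathcal H_{\sigma^2\delta}}\bigr)\sigma\;\le\;\widehat m_{t-1}(x_t)+\|h\|_{\mathcal H_{k^\sigma}}\widehat\sigma_{t-1}(x_t)\;\le\;f(x_t)+2\|h\|_{\mathcal H_{k^\sigma}}\widehat\sigma_{t-1}(x_t)+\bigl(\|h\|_{\mathcal H_{k^\sigma}}+\|g\|_{\mathcal H_{\sigma^2\delta}}\bigr)\sigma,
\]
hence $f(x^*)-f(x_t)\le 2\|h\|_{\mathcal H_{k^\sigma}}\widehat\sigma_{t-1}(x_t)+2\bigl(\|h\|_{\mathcal H_{k^\sigma}}+\|g\|_{\mathcal H_{\sigma^2\delta}}\bigr)\sigma$.

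\noindent Summing over $t=1,\dots,T$ and applying Cauchy--Schwarz,
\[
R_T\;\le\;2\|h\|_{\mathcal H_{k^\sigma}}\sqrt{T\sum_{t=1}^{T}\widehat\sigma_{t-1}^2(x_t)}\;+\;2T\bigl(\|h\|_{\mathcal H_{k^\sigma}}+\|g\|_{\mathcal H_{\sigma^2\delta}}\bigr)\sigma .
\]
It remains to bound $\sum_t\widehat\sigma_{t-1}^2(x_t)$ exactly as in the noise‑free proof: $\widehat\sigma_{t-1}^2(x_t)$ is the posterior variance of a kernel‑$k$ GP with noise $\sigma^2$, so $\tfrac12\sum_t\log\!\bigl(1+\sigma^{-2}\widehat\sigma_{t-1}^2(x_t)\bigr)\le\gamma_T$; and since $\widehat\sigma_{t-1}^2(x_t)\le k(x_t,x_t)\le B$, concavity gives $u\le\tfrac{B}{\log(1+B\sigma^{-2})}\log(1+\sigma^{-2}u)$ on $[0,B]$, whence $\sum_t\widehat\sigma_{t-1}^2(x_t)\le\tfrac{2B\gamma_T}{\log(1+B\sigma^{-2})}$. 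Substituting gives \eqref{nR_Tseq} with $C_3=\tfrac{8B}{\log(1+B\sigma^{-2})}$, and $r_T=f(x^*)-\max_{t\le T}f(x_t)\le R_T/T$ gives \eqref{nr_Tseq2}.

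\noindent The step I expect to be the main obstacle is the bookkeeping of the variance mismatch: one must keep straight that $\widehat\sigma_t$ of \eqref{sigmatN} is the $k^\sigma$‑power function squared minus $\sigma^2$ (so the naive transcription $R_T\le 2\|h\|_{\mathcal H_{k^\sigma}}\sqrt{TC_3\gamma_T}$ is wrong by an additive $O(T\sigma)$), and that the confidence bound remains valid at the maximizer even though it may lie outside the queried set — handled by the no‑repetition assumption. Everything else is a routine reuse of the Theorem~\ref{sequentialBO} argument plus the elementary estimate $|g|\le\|g\|_{\mathcal H_{\sigma^2\delta}}\sigma$.
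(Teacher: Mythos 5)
Your proposal is correct and follows essentially the same route as the paper: the interpolant/power-function observation that $|h(x)-\widehat m_t(x)|\le \|h\|_{\mathcal H_{k^\sigma}}\sqrt{\widehat\sigma_t^2(x)+\sigma^2}$ is exactly the paper's Lemma on $\widehat m_t$ (proved there by expanding the RKHS norm), the $|g(x)|\le\|g\|_{\mathcal H_{\sigma^2\delta}}\sigma$ slack, the per-round chain via the UCB selection rule, and the Cauchy--Schwarz plus information-gain step with the concavity bound on $[0,B]$ all match the paper's argument, yielding the same $C_3$ and the same $2T(\|h\|+\|g\|)\sigma$ additive term.
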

\textbf{Remark:} In the perturbation setting, the unknown perturbation function $g$ results in some unavoidable dependence on $\sigma$  in the regret bound compared with GP-UCB~\citep{gpucb}.  Note that the bounds in \citep{gpucb} are probabilistic bounds. There is always a positive probability that the bounds in \citep{gpucb} fail. In contrast, the bounds in Theorem~\ref{TheoremNS} are deterministic. 


\begin{corollary}
\label{CorollaryBiased}
Suppose $h=f \in \mathcal{H}_k$ associated with ${k}(x,y) \le 1$ and ${\left\| f \right\|_{{{\cal H}_{{k }}}}^{}} < \infty$.  Let $C_1=\frac{8}{{\log (1 +  {\sigma ^{ - 2}})}}$. 
\textbf{Algorithm \ref{alg:3}} achieves a cumulative regret bound and a simple regret bound given by equations (\ref{MnR_Tseq}) and (\ref{Mnr_Tseq2}), respectively:
\begin{align}
\label{MnR_Tseq}
{R_T} \le {\left\| f \right\|_{{{\cal H}_{{k }}}}^{}}\sqrt {T{C_1}{\gamma _T}}   + 2T{\left\| f \right\|_{{{\cal H}_{{k }}}}^{}}\sigma   \\ 
{r_T} \le {\left\| f \right\|_{{{\cal H}_{{k }}}}^{}}\sqrt {\frac{{C_1{\gamma _T}}}{T}}   + 2 {\left\| f \right\|_{{{\cal H}_{{k }}}}^{}}\sigma .  \label{Mnr_Tseq2}
\end{align}   
\end{corollary}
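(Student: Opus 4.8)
The plan is to obtain Corollary~\ref{CorollaryBiased} as the degenerate case of Theorem~\ref{TheoremNS} in which the perturbation vanishes. Setting $h=f$ forces $g\equiv0$, so the data fed to Algorithm~\ref{alg:3} are the exact values $y_t=f(x_t)$ and $\left\|g\right\|_{{\cal H}_{\sigma^2\delta}}=0$. Two bookkeeping observations then turn the conclusion of Theorem~\ref{TheoremNS} into the claimed one. First, $f\in{\cal H}_k$ with $\left\|f\right\|_{{\cal H}_k}<\infty$ already yields $h=f\in{\cal H}_{k^\sigma}$: since $k^\sigma=k+\sigma^2\delta$, the RKHS of a sum of kernels is the sum of the RKHSs, with $\left\|u\right\|_{{\cal H}_{k^\sigma}}^2=\min\{\left\|u_1\right\|_{{\cal H}_k}^2+\left\|u_2\right\|_{{\cal H}_{\sigma^2\delta}}^2:\,u=u_1+u_2\}$ (a standard property of reproducing kernels, see, e.g., \citep{kanagawa2018gaussian}), so the decomposition $f=f+0$ gives $\left\|f\right\|_{{\cal H}_{k^\sigma}}\le\left\|f\right\|_{{\cal H}_k}$. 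Second, $k(x,y)\le1$ makes $B=1$ admissible in Theorem~\ref{TheoremNS}, so that $C_3=\frac{8B}{\log(1+B\sigma^{-2})}$ collapses to $C_1=\frac{8}{\log(1+\sigma^{-2})}$.

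Given these, I would substitute into the bounds (\ref{nR_Tseq}) and (\ref{nr_Tseq2}) of Theorem~\ref{TheoremNS}, putting $\left\|g\right\|_{{\cal H}_{\sigma^2\delta}}=0$, $C_3=C_1$, and bounding every occurrence of $\left\|h\right\|_{{\cal H}_{k^\sigma}}$ by $\left\|f\right\|_{{\cal H}_k}$. Then $\left\|h\right\|_{{\cal H}_{k^\sigma}}\sqrt{TC_3\gamma_T}$ becomes $\left\|f\right\|_{{\cal H}_k}\sqrt{TC_1\gamma_T}$ and $2T(\left\|h\right\|_{{\cal H}_{k^\sigma}}+\left\|g\right\|_{{\cal H}_{\sigma^2\delta}})\sigma$ becomes $2T\left\|f\right\|_{{\cal H}_k}\sigma$, which is exactly (\ref{MnR_Tseq}); the same substitution in (\ref{nr_Tseq2}) gives (\ref{Mnr_Tseq2}) directly, that bound already carrying the $1/T$ scaling. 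As a consistency check, as $\sigma\to0$ the additive bias $2T\left\|f\right\|_{{\cal H}_k}\sigma$ vanishes and the bound reduces to the form in Theorem~\ref{sequentialBO}, while Algorithm~\ref{alg:3} reduces to Algorithm~\ref{alg:1}.

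If a self-contained derivation is preferred to the reduction, I would replay the proof of Theorem~\ref{TheoremNS} under $g\equiv0$: here $\widehat m_{t-1}$ from (\ref{m_tN}) is, at any point outside the set of already-queried locations, the exact kernel interpolant of $f$ in ${\cal H}_{k^\sigma}$ on those locations (whose Gram matrix under $k^\sigma$ is $\mathbf{K}_{t-1}+\sigma^2\mathbf{I}$), so the power-function bound for kernel interpolation gives $|f(x)-\widehat m_{t-1}(x)|\le\left\|f\right\|_{{\cal H}_{k^\sigma}}\sqrt{\widehat\sigma_{t-1}^2(x)+\sigma^2}\le\left\|f\right\|_{{\cal H}_k}(\widehat\sigma_{t-1}(x)+\sigma)$, the added $\sigma^2$ under the root together with $\sqrt{a+b}\le\sqrt a+\sqrt b$ producing the $+\sigma$ bias (the excluded case $x^*$ a queried point makes $r_T=0$). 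Because the weight in Algorithm~\ref{alg:3} is $\left\|h\right\|_{{\cal H}_{k^\sigma}}=\left\|f\right\|_{{\cal H}_{k^\sigma}}$, matching the confidence-band constant, its selection rule bounds the instantaneous regret $f(x^*)-f(x_t)$ by $2\left\|f\right\|_{{\cal H}_{k^\sigma}}(\widehat\sigma_{t-1}(x_t)+\sigma)\le2\left\|f\right\|_{{\cal H}_k}(\widehat\sigma_{t-1}(x_t)+\sigma)$. Summing over $t$, applying Cauchy--Schwarz to $\sum_t\widehat\sigma_{t-1}(x_t)$, and bounding $\sum_{t=1}^T\widehat\sigma_{t-1}^2(x_t)\le\frac{2}{\log(1+\sigma^{-2})}\gamma_T$ via $\sum_t\log(1+\sigma^{-2}\widehat\sigma_{t-1}^2(x_t))=\log\det(\mathbf{I}+\sigma^{-2}\mathbf{K}_T)\le2\gamma_T$ and the elementary inequality $s\le\frac{1}{\log(1+\sigma^{-2})}\log(1+\sigma^{-2}s)$ valid for $s=\widehat\sigma_{t-1}^2(x_t)\in[0,1]$ recovers (\ref{MnR_Tseq}); dividing by $T$ and using $r_T=\min_{1\le t\le T}(f(x^*)-f(x_t))\le\frac1T R_T$ recovers (\ref{Mnr_Tseq2}).

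I do not expect a real obstacle: this is essentially bookkeeping on top of Theorem~\ref{TheoremNS}. The only two points to pin down are that $B=1$ is admissible when $k(x,y)\le1$ and that $\left\|f\right\|_{{\cal H}_{k^\sigma}}\le\left\|f\right\|_{{\cal H}_k}$; once these are fixed, no new estimate is needed, and the single substantive ingredient --- already present in Theorem~\ref{TheoremNS} --- is the widened confidence radius $\sqrt{\widehat\sigma_{t-1}^2(x)+\sigma^2}$ of the $\sigma^2$-regularized interpolant, which is exactly what forces the unavoidable $\mathcal{O}(T\sigma)$ and $\mathcal{O}(\sigma)$ additive terms in (\ref{MnR_Tseq}) and (\ref{Mnr_Tseq2}).
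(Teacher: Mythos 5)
Your proposal is correct and follows essentially the same route as the paper, whose entire proof is the one-line reduction ``set $g=0$ and $B=1$ in Theorem~\ref{TheoremNS}''; your additional bookkeeping (the sum-of-kernels fact giving $\left\|f\right\|_{{\cal H}_{k^\sigma}}\le\left\|f\right\|_{{\cal H}_k}$, which the paper uses silently to restate the bound in terms of $\left\|f\right\|_{{\cal H}_k}$) and the optional replay of the Theorem~\ref{TheoremNS} argument only make that reduction explicit.
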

\begin{proof}
Setting $g=0$ and $B=1$ in Theorem \ref{TheoremNS}, we can achieve the results.
\end{proof}
\textbf{Remark:} In practice, a small constant $\sigma^2I$ is added to the kernel matrix to avoid numeric problems in the noise-free setting. Corollary \ref{CorollaryBiased} shows that the small constant results in an additional biased term in the regret bound. Theorem \ref{sequentialBO} employs (\ref{m_t}) and (\ref{sigmat}) for  updating, while Corollary \ref{CorollaryBiased} presents the regret bound for the practical updating by (\ref{m_tN}) and (\ref{sigmatN}).

\subsection{Batch Selection in Perturbation Setting}

The covariance kernel function of $X \in \mathbb{R}^{ d \times L}$ for the perturbation setting is defined as equation (\ref{NoiseBatchV}),
\begin{equation}
\label{NoiseBatchV}
\begin{array}{l}
{\widehat { \mathop{\rm cov}} _n}({X},{X}) = {\bf{K}}({X},{X}) -  {\bf{K}}{({\overline X _n},{X} )^T}{\left( {{\sigma ^2}I + {\bf{K}}({{\overline X }_n},{{\overline X }_n})} \right)^{ - 1}}{\bf{K}}({\overline X _n},{X} ),
\end{array}
\end{equation}
where $ \bf{K}({X},{X})$ is the $L \times L$ kernel matrix, and $ {\bf{K}}{({\overline X _n},{X})} $ denotes the $ nL \times L$ kernel matrix between  ${\overline X _n}$ and $X$. 
The batch optimization method for the perturbation setting is presented in Algorithm~\ref{alg:4}.  The regret bounds of Algorithm~\ref{alg:4} are summarized in Theorem~\ref{batchBOnoise}. 
\begin{theorem}\label{batchBOnoise}
Define ${k^\sigma }(x,y): = k(x,y) + {\sigma ^2}\delta (x,y) \le B$, where $\delta (x,y) = \left\{ {\begin{array}{*{20}{c}}
{1\;\;x = y}\\
{0\;\;x \ne y}
\end{array}} \right.$ and $\sigma \ge 0$. 
Suppose $f \in \mathcal{H}_k$ and  $g \in \mathcal{H}_{\sigma^2 \delta}$ associated with kernel $k$ and kernel $\sigma^2 \delta $ with ${\left\| f \right\|_{{{\cal H}_{{k }}}}^{}} < \infty$ and ${\left\| g \right\|_ { \mathcal{H}  _{\sigma^2\delta}}} < \infty$, respectively. 
 Let  $T=NL$, $\beta = {\max _{n \in \{ 1,...,N\} }}{\left\| { \widehat {\mathop{\rm cov}} _{{n - 1}}({X_n},{X_n})} \right\|_2}$  and  $C_4=\frac{8\beta}{{\log (1 + \beta{\sigma ^{ - 2}})}}$ . 
\textbf{Algorithm \ref{alg:4}} with batch size $L$ achieves a cumulative regret bound and a simple regret bound given by equations (\ref{R_TBN}) and (\ref{r_TBN}), respectively:
\begin{align}\label{R_TBN}
 {R_T} \le {\left\| h \right\|_{{\mathcal{H}_{k^\sigma}}}}\!\!\sqrt {TC_4{\gamma _{T}} } \!  + \! 2T\!\left(\! {\left\| h \right\|_{{{\cal H}_{{k^\sigma }}}}^{} \!\!+\! \left\| g \right\|_{{{\cal H}_{{\sigma ^2}\delta }}}^{}}\! \right)\!{\sigma ^{}} \\ 
 {r_T} \le {\left\| h \right\|_{{\mathcal{H}_{k^\sigma}}}}\!\!\sqrt {\frac{{C_4{\gamma _{T}}}}{T}} +  2\left( {\left\| h \right\|_{{{\cal H}_{{k^\sigma }}}}^{} + \left\| g \right\|_{{{\cal H}_{{\sigma ^2}\delta }}}^{}} \right){\sigma ^{}}. \label{r_TBN}
\end{align}
\end{theorem}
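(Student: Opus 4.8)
The plan is to combine the batch argument used for Theorem~\ref{batchBO} with the perturbation bookkeeping already developed for Theorem~\ref{TheoremNS}, viewing the problem as an instance of the noise-free batch setup applied to the surrogate function $h \in \mathcal{H}_{k^\sigma}$ on an enlarged domain. First I would exploit the standard reproducing-kernel identification: observing $y = h(x) = f(x)+g(x)$ with $g \in \mathcal{H}_{\sigma^2\delta}$ is the same as having access to noise-free evaluations of $h$ in $\mathcal{H}_{k^\sigma}$, and the posterior quantities $\widehat m_t$, $\widehat\sigma_t$ in (\ref{m_tN})--(\ref{sigmatN}) together with $\widehat{\mathop{\rm cov}}_n$ in (\ref{NoiseBatchV}) are exactly the noise-free posterior mean and covariance for $h$ under kernel $k^\sigma$ (because $\mathbf{K}_t + \sigma^2 I$ is the kernel matrix of $k^\sigma$ at distinct points). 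This lets me transplant Lemma~\ref{RKHSbound1} to $h$: for any batch $\widehat X = \{\widehat x_1,\dots,\widehat x_L\}$,
\begin{align}
\Bigl(\sum_{i=1}^{L} \widehat m_{(n-1)L}(\widehat x_i) - \sum_{i=1}^{L} h(\widehat x_i)\Bigr)^{2} \le \|h\|_{\mathcal{H}_{k^\sigma}}^{2}\,\bigl(\mathbf{1}^T \widehat{\mathop{\rm cov}}_{n-1}(\widehat X,\widehat X)\,\mathbf{1}\bigr).
\end{align}

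Next I would bound the instantaneous batch regret. Writing $\widehat\varphi_{n-1}(X) = \frac{1}{L}\sum_i \widehat m_{(n-1)L}(X_{\cdot,i}) + \|h\|_{\mathcal{H}_{k^\sigma}}\bigl(2\sqrt{\mathrm{tr}(\widehat{\mathop{\rm cov}}_{n-1}(X,X))/L} - \sqrt{\mathbf{1}^T\widehat{\mathop{\rm cov}}_{n-1}(X,X)\mathbf{1}/L^2}\bigr)$ for the acquisition in Algorithm~\ref{alg:4}, optimality of $X_n$ against the singleton batch $\{x^*,\dots,x^*\}$ (or the appropriate feasible comparator) gives $\widehat\varphi_{n-1}(X_n) \ge f(x^*)$ up to the RKHS-interpolation error of $h$ at $x^*$, which is controlled by $\|h\|_{\mathcal{H}_{k^\sigma}}\widehat\sigma_{(n-1)L}(x^*)$; meanwhile the displayed inequality above upper-bounds $f(x^*) - \frac1L\sum_i f(x_{(n-1)L+i})$ in terms of $\widehat m_{(n-1)L}$, the trace term, and the quadratic-form term. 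Combining these, the per-batch regret telescopes into $\sum_n \|h\|_{\mathcal{H}_{k^\sigma}}\bigl(2\sqrt{L\,\mathrm{tr}(\widehat{\mathop{\rm cov}}_{n-1}(X_n,X_n))}\bigr)$ plus the perturbation slack. Then I would control $\sum_n \mathrm{tr}(\widehat{\mathop{\rm cov}}_{n-1}(X_n,X_n))$ by the information-gain argument: each diagonal entry $\widehat\sigma^2_{(n-1)L+j-1}$-type term is bounded, via $\beta$-rescaling and the elementary inequality $s \le \frac{\beta}{\log(1+\beta\sigma^{-2})}\log(1+s\sigma^{-2})$ for $0 \le s \le \beta$, by a constant multiple of $\log\det$ increments, whose sum is $2\gamma_T$; this is precisely where $C_4 = 8\beta/\log(1+\beta\sigma^{-2})$ enters. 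Cauchy--Schwarz over the $N$ batches then converts $\sum_n \sqrt{\cdot}$ into $\sqrt{N \cdot \sum_n(\cdot)} = \sqrt{L^{-1}T\cdot C_4\gamma_T \cdot (\text{const})}$, giving the $\|h\|_{\mathcal{H}_{k^\sigma}}\sqrt{TC_4\gamma_T}$ leading term of (\ref{R_TBN}).

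Finally, the additive $2T(\|h\|_{\mathcal{H}_{k^\sigma}} + \|g\|_{\mathcal{H}_{\sigma^2\delta}})\sigma$ term comes from the same estimate as in Theorem~\ref{TheoremNS}: the gap between optimizing the true $f$ versus the perturbed surrogate $h$ incurs, at each of the $T = NL$ evaluations, an error of order $(\|h\|_{\mathcal{H}_{k^\sigma}} + \|g\|_{\mathcal{H}_{\sigma^2\delta}})\sigma$, because $|f(x) - h(x)| = |g(x)| \le \|g\|_{\mathcal{H}_{\sigma^2\delta}}\sqrt{\sigma^2\delta(x,x)} = \|g\|_{\mathcal{H}_{\sigma^2\delta}}\sigma$ and an analogous $\sigma$-dependent discrepancy appears when relating $\widehat\sigma$-based interpolation of $h$ to values of $f$. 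The simple-regret bound (\ref{r_TBN}) follows from $r_T \le R_T/T$ as usual. The main obstacle I anticipate is the careful justification that Lemma~\ref{RKHSbound1} applies verbatim with $k$ replaced by $k^\sigma$ and with the regularized covariance (\ref{NoiseBatchV}) in place of (\ref{BnoiseFree}) — i.e., that the ``same point never selected twice'' assumption makes $k^\sigma$ a genuine p.d.\ kernel with an honest RKHS in which $h$ lives — and, relatedly, tracking the $\sigma$-slack cleanly through the batch comparator step so that it aggregates to exactly $2T(\|h\|_{\mathcal{H}_{k^\sigma}}+\|g\|_{\mathcal{H}_{\sigma^2\delta}})\sigma$ rather than something with an extra $\sqrt{L}$ or $\gamma_T$ factor.
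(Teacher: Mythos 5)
Your proposal follows essentially the same route as the paper's proof: identify $h\in\mathcal{H}_{k^\sigma}$, transplant the batch deviation lemma to $h$, compare the selected batch against $L$ copies of $x^*$ through the acquisition function, bound $\sum_n \mathrm{tr}\bigl(\widehat{\mathrm{cov}}_{n-1}(X_n,X_n)\bigr)$ via the spectral-norm bound $\beta$, the inequality $s\le\frac{\beta}{\log(1+\beta\sigma^{-2})}\log(1+\sigma^{-2}s)$ and the block-determinant decomposition of $\gamma_T$, and finish with Cauchy--Schwarz and $r_T\le R_T/T$. One correction to your displayed inequality: $\widehat{\mathrm{cov}}_{n-1}$ as defined in (\ref{NoiseBatchV}) is \emph{not} exactly the noise-free $k^\sigma$-posterior covariance of $h$, because its prior block is ${\bf{K}}(X,X)$ rather than ${\bf{K}}(X,X)+\sigma^2 I_L$; the correct transplanted statement (the paper's Lemma~\ref{RKHSboundnoisy}) reads $\bigl(\sum_{i}\widehat m_t(\widehat x_i)-\sum_{i} h(\widehat x_i)\bigr)^2\le\|h\|^2_{\mathcal{H}_{k^\sigma}}\bigl({\bf{1}}^T{\bf{A}}{\bf{1}}+L^2\sigma^2\bigr)$ with ${\bf{A}}=\widehat{\mathrm{cov}}_{n-1}(\widehat X,\widehat X)$, and passing from $h$ to $f$ adds a further $L\|g\|_{\mathcal{H}_{\sigma^2\delta}}\sigma$. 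Since $\sqrt{a+L^2\sigma^2}\le\sqrt{a}+L\sigma$, this extra term is precisely the per-batch $\sigma$-slack which, together with the analogous slack at the comparator $x^*$ (where the interpolation error of $h$ is $\|h\|_{\mathcal{H}_{k^\sigma}}\sqrt{\widehat\sigma^2_{(n-1)L}(x^*)+\sigma^2}$, not $\|h\|_{\mathcal{H}_{k^\sigma}}\widehat\sigma_{(n-1)L}(x^*)$), aggregates to the $2T\bigl(\|h\|_{\mathcal{H}_{k^\sigma}}+\|g\|_{\mathcal{H}_{\sigma^2\delta}}\bigr)\sigma$ term; so the obstacle you flag in your closing paragraph is real but resolves exactly as you anticipate, with no extra $\sqrt{L}$ or $\gamma_T$ factor.
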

\textbf{Remark:} When the batch size is one, the regret bounds reduce to the sequential case.

\begin{figure*}[t!]
\centering
\subfigure[\scriptsize{Rosenbrock function}]{
\label{fig2a}
\includegraphics[width=0.31\linewidth]{./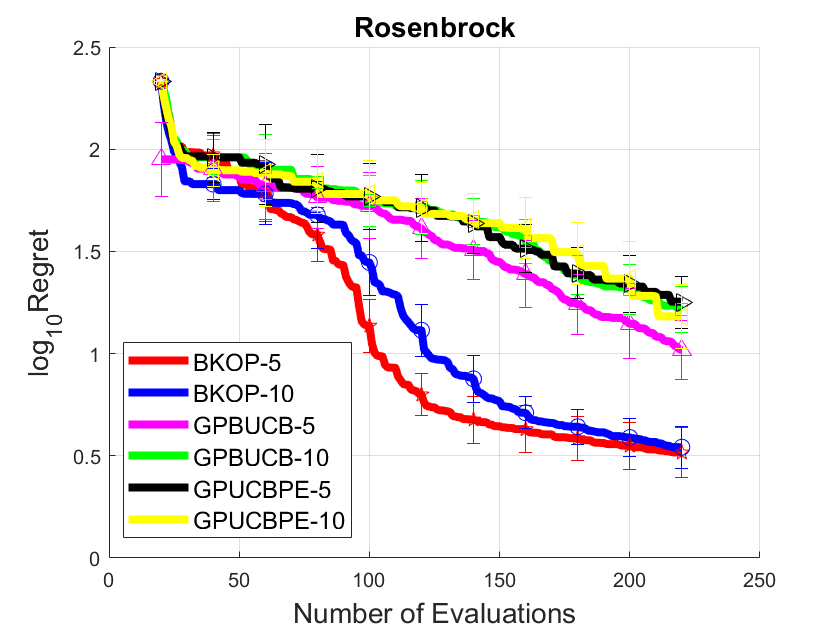}}
\subfigure[\scriptsize{Nesterov function}]{
\label{fig2c}
\includegraphics[width=0.31\linewidth]{./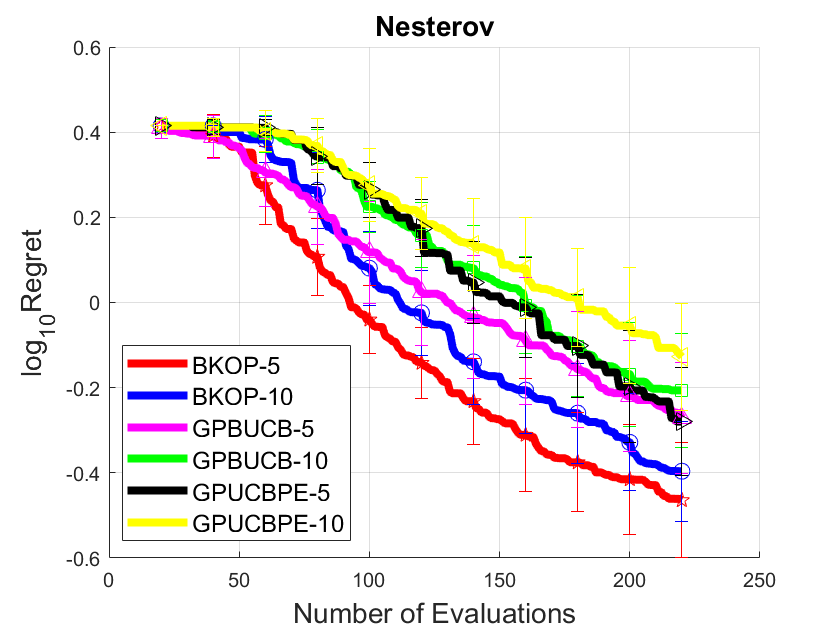}}
\subfigure[\scriptsize{Different-Powers function}]{
\label{fig2f}
\includegraphics[width=0.31\linewidth]{./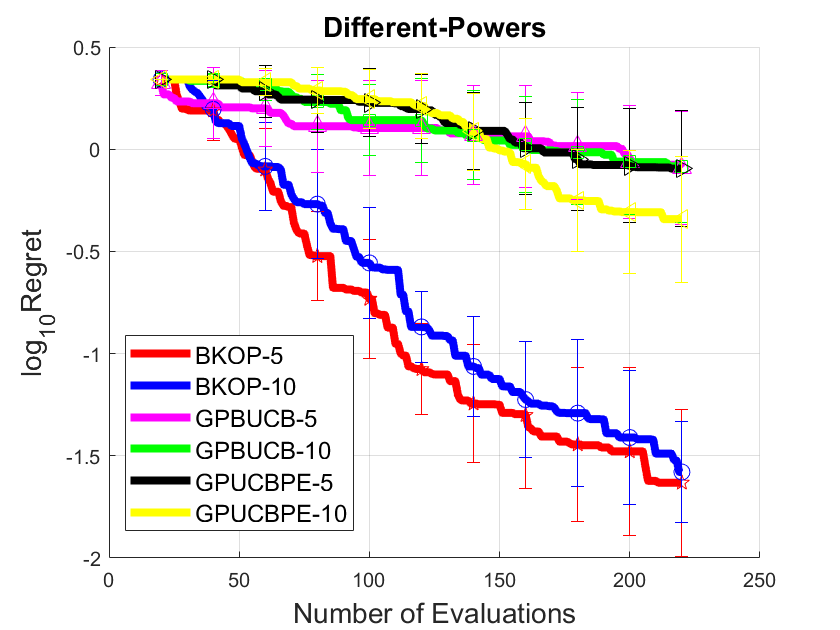}}
\subfigure[\scriptsize{Dixon-Price function}]{
\label{fig2b}
\includegraphics[width=0.31\linewidth]{./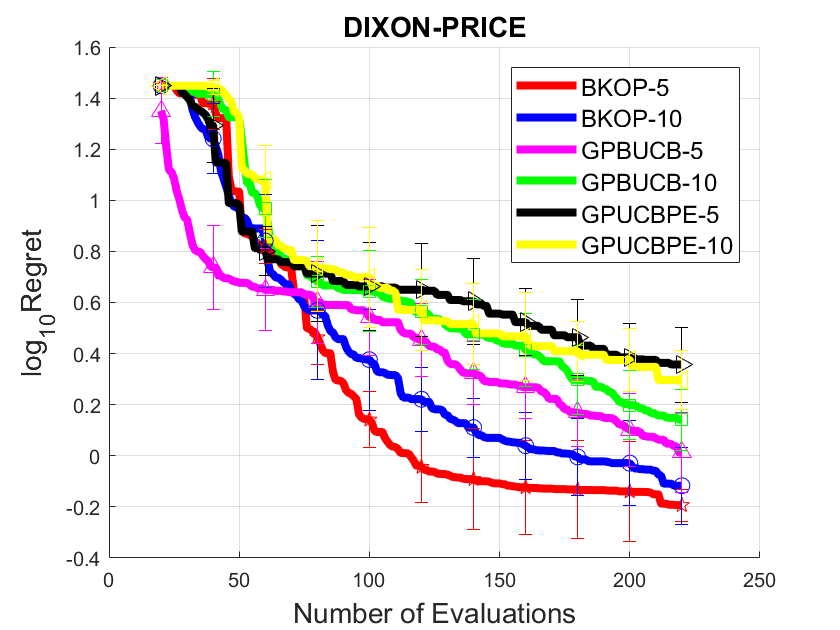}}
\subfigure[\scriptsize {Levy function} ]{
\label{fig2d}
\includegraphics[width=0.31\linewidth]{./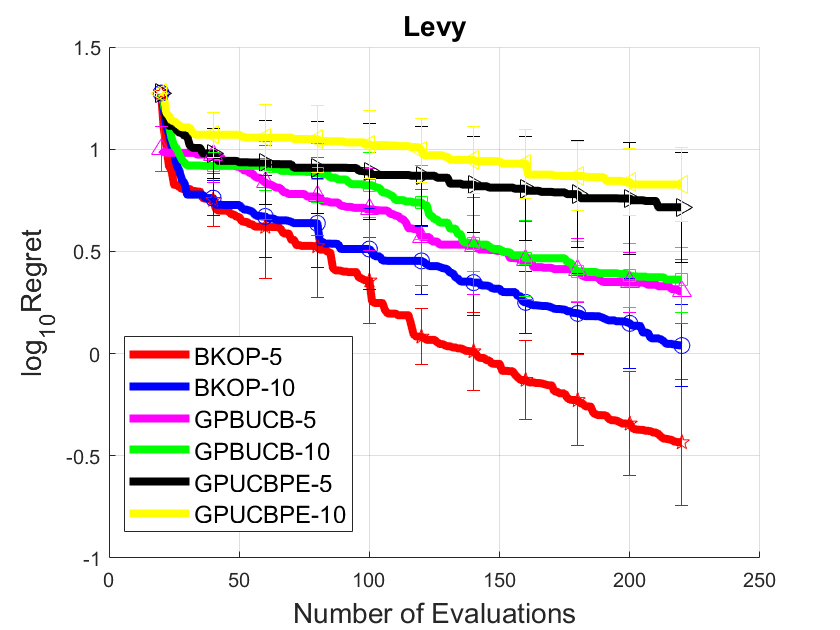}}
\subfigure[\scriptsize{Ackley  function}]{
\label{fig2g}
\includegraphics[width=0.31\linewidth]{./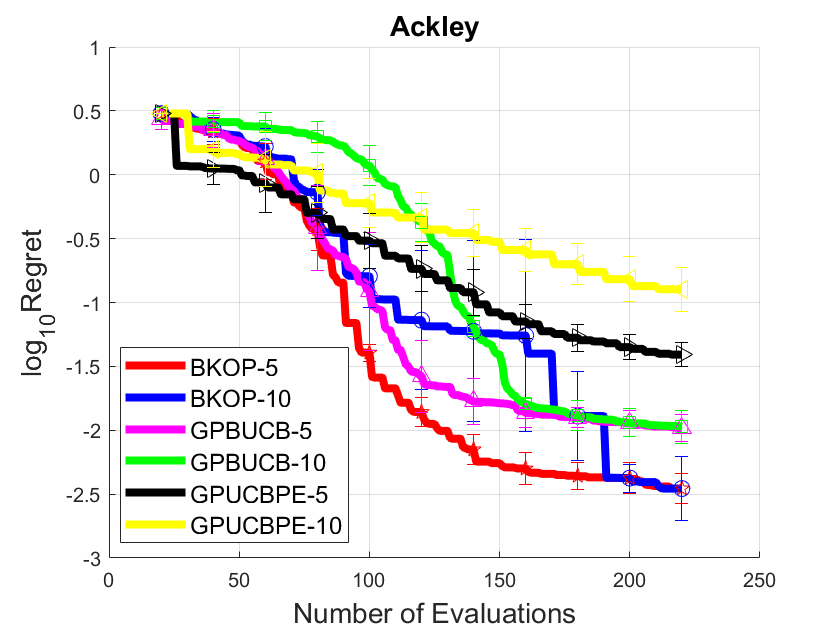}}
\caption{The mean value of simple regret for different algorithms over 30 runs on different test functions}
\label{fig1}
\end{figure*}

\begin{algorithm}[t]
   \caption{}
   \label{alg:4}
\begin{algorithmic}
   
                                   
  \FOR{$n=1$ {\bfseries to} $N$}
  \STATE  Obtain ${\widehat m _ {(n-1)L}}( \cdot )$ and $ \widehat {\mathop{\rm cov}} _{n-1}(\cdot)$  via equation (\ref{m_tN}) and (\ref{NoiseBatchV}) respectively.
  \STATE Choose ${X_n} \!\! = \! \mathop {\arg \max }\limits_{X \subset {\cal X}} \!\frac{1}{L}\sum\limits_{i = 1}^L { {\widehat m_{(n - 1)L}}({X_{ \cdot ,i}})}  + {\left\| h \right\|_{{\mathcal{H}_{k^\sigma}}}}\!\!\left(\! {2\sqrt {\frac{{tr({{{ \widehat {  \mathop{\rm cov}} }_{n - 1}}(X,X)})}}{L}} \!  -  \!\!\sqrt {\frac{{{{\bf{1}}^T}{{{\widehat  {\mathop{\rm cov}} }_{n - 1}}(X,X)}{{\bf{1}}}}}{{{L^2}}}} } \right)$.
 
 \STATE Query the batch observations $\{h(x_{(n-1)L+1}),...,h(x_{nL}) \}$ at locations $ X_n = \{x_{(n-1)L+1},...,x_{nL} \} $.

   \ENDFOR
\end{algorithmic}
\end{algorithm}

\color{black}
\section{ Robust Initialization for BO}\label{advSet}

In practice, the initialization phase of BO is  important. 
In this section, we will discuss how to achieve robust initialization by analyzing regret in the adversarial setting. We will first show that algorithms that attain a  small covering radius (fill distance) can achieve small adversarial regret bounds. Based on this insight, we provide a robust initialization to BO.


Let  $f_t: \mathcal{X} \to \mathbb{R}$, $t \in [T]$ be the   black-box function to be optimized at round $t$. Let $f_t(x^*_t)={\max _{x \in \mathcal{X} }}f_t(x)$ with $f_t \in \mathcal{B}_k $.
The simple adversarial regret $ {\widetilde{r}_T}  $ is defined as:
\begin{align}\label{AsimpleR}
& {\widetilde{r}_T} = {\min _{t \in [T] }}\sup_{ \substack{ f_t\in \mathcal{B}_k, \forall {i}  \in [t-1], \\ f_t(x_i)=f_i(x_i) }  }\{f_t(x^*_t) - f_t({x_t}) \},
\end{align}
where the constraints ensure that each $f_t$ has the same observation values as the history at previous query points $X_{t-1}=\{x_1,...,x_{t-1}\}$. 
This can be viewed as an adversarial game. During each round $t$, the opponent chooses a function $f_t$ from a candidate set, and we then choose a query  $x_t$ to achieve a small regret. A robust initialization setting can be viewed as the batch of points that can achieve a low simple adversarial regret irrespective of the access order.


Define covering radius (fill distance~\citep{kanagawa2018gaussian}) and  packing radius (separate distance~\citep{kanagawa2018gaussian}) of a set of points $X=\{x_1,...,x_T\}$ as follows:
\begin{align}\label{hx}
     h_X =  \sup_{x\in \mathcal{X}}{\min _{x_t \in X }} \|x-x_t \|  \\
     \rho_X = \frac{1}{2} \min _ {\substack{x_i,x_j  \in X, \\ x_i \ne x_j}} \|x_i -x_j \| . \label{rx}
\end{align}
 Our method for robust initialization is presented in Algorithm~\ref{alg:mmd}, which  constructs an initialization set $X_{T\!-1}$ by minimizing  the covering radius.   We present one such method in Algorithm~\ref{alg:6} in the next section. The initialization set $X_{T\!-1}$ can be evaluated in a batch manner, which is able to benefit from parallel evaluation.
The regret bounds of Algorithm~\ref{alg:mmd} are summarized in Theorem~\ref{Sobolev} and Theorem~\ref{SE}. 



\begin{theorem}\label{Sobolev}
Define $\mathcal{B}_k= \{f: f \in \mathcal{H}_k, {\left\| f \right\|_{{\mathcal{H}_k}}} \le B\}$ associated with $k(x,x) $ for $x \in \mathcal{X} \subset \mathbb{R}^d $. Suppose $f \in \mathcal{B}_k$   and $\mathcal{H}_k$ is norm-equivalent to the Sobolev space of order $ s $.  Then there exits a constant $C>0$, such that the query point set generated by  \textbf{Algorithm~5}  with a sufficiently small covering radius (fill distance) $h_X$ achieves a regret bound given by equation (\ref{Mr_Tseq}): 
\begin{align}\label{Mr_Tseq}
\widetilde{r}_T \le BCh_X^{s-d/2} .
\end{align} 
\end{theorem}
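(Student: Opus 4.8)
Proof proposal for Theorem~\ref{Sobolev}.

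The plan is to exploit the fact that $\widetilde{r}_T$ is a \emph{minimum} over rounds: to upper bound it, it suffices to exhibit one round $t$ and one query point at which the worst-case regret is small. I would take $t=T$, the round at which observations of $f$ at the \emph{entire} initialization set $X_{T-1}=\{x_1,\dots,x_{T-1}\}$ produced by Algorithm~5 are available, so that the relevant fill distance is exactly $h_X$. The technical ingredient is a Sobolev sampling inequality (a ``zeros lemma''): assuming $\mathcal{X}\subset\mathbb{R}^d$ is a bounded domain satisfying an interior cone condition (e.g.\ a Lipschitz boundary) and $s>d/2$, classical scattered-data-approximation results (see \citep{kanagawa2018gaussian} and the references therein) provide a threshold $h_0>0$ and a constant $C'$ such that every $g$ in the Sobolev space of order $s$ with $g|_X=0$ and $h_X\le h_0$ satisfies $\|g\|_{L^\infty(\mathcal{X})}\le C' h_X^{\,s-d/2}\|g\|_{H^s}$. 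Using the norm-equivalence of $\mathcal{H}_k$ and $H^s$, this becomes $\|g\|_{L^\infty(\mathcal{X})}\le C\,h_X^{\,s-d/2}\|g\|_{\mathcal{H}_k}$ for every $g\in\mathcal{H}_k$ vanishing on $X$. This is exactly where the hypotheses ``$\mathcal{H}_k$ norm-equivalent to $H^s$'' and ``$h_X$ sufficiently small'' enter, and it is the main obstacle of the proof; the rest is an elementary projection-plus-triangle-inequality argument.

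Next I would introduce $m_{T-1}$, the minimum-norm interpolant of the observed data $\{(x_i,f_i(x_i))\}_{i=1}^{T-1}$, i.e.\ the noise-free posterior mean of (\ref{m_t}); it is well defined since the points are distinct and the Sobolev kernel is strictly positive definite, and it satisfies $m_{T-1}(x_i)=f_i(x_i)$. Let the last query be $x_T=\arg\max_{x\in\mathcal{X}}m_{T-1}(x)$. For any admissible adversarial $f_T\in\mathcal{B}_k$ one has $f_T(x_i)=f_i(x_i)$ for all $i\le T-1$, so $g:=f_T-m_{T-1}$ vanishes on $X_{T-1}$; moreover $m_{T-1}$ is the orthogonal projection of $f_T$ onto $\mathrm{span}\{k(x_i,\cdot):i\le T-1\}$ (because $\langle m_{T-1},g\rangle_{\mathcal{H}_k}=\sum_i\alpha_i\,g(x_i)=0$), hence by Pythagoras $\|g\|_{\mathcal{H}_k}^2=\|f_T\|_{\mathcal{H}_k}^2-\|m_{T-1}\|_{\mathcal{H}_k}^2\le B^2$. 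Applying the sampling inequality to $g$ gives $\|f_T-m_{T-1}\|_{L^\infty(\mathcal{X})}\le B\,C\,h_X^{\,s-d/2}$.

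Finally I would decompose, for any admissible $f_T$,
\begin{align*}
f_T(x_T^*)-f_T(x_T) &=\big(f_T(x_T^*)-m_{T-1}(x_T^*)\big)+\big(m_{T-1}(x_T^*)-m_{T-1}(x_T)\big)\\
&\quad+\big(m_{T-1}(x_T)-f_T(x_T)\big),
\end{align*}
observe that the middle term is $\le 0$ by the choice of $x_T$, while the first and third terms are each at most $\|f_T-m_{T-1}\|_{L^\infty(\mathcal{X})}\le B\,C\,h_X^{\,s-d/2}$; thus $f_T(x_T^*)-f_T(x_T)\le 2B\,C\,h_X^{\,s-d/2}$ uniformly over admissible $f_T$. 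Taking the supremum over admissible $f_T$ and then the minimum over $t\in[T]$ (using $t=T$) yields $\widetilde{r}_T\le 2B\,C\,h_X^{\,s-d/2}$, and absorbing the factor $2$ into $C$ gives (\ref{Mr_Tseq}). I expect the sampling inequality to be the hard part — obtaining the exponent $s-d/2$ with dependence only on $h_X$ requires the scattered-data-approximation machinery together with the regularity assumptions on $\mathcal{X}$ and $s$ — whereas the interpolation/projection bound and the three-term split are routine.
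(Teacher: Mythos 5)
Your overall strategy---take $t=T$, build the minimum-norm interpolant $m_{T-1}$ of the data on $X_{T-1}$, bound $f_T-m_{T-1}$ via a Sobolev sampling inequality plus the norm equivalence, and finish with a projection/triangle argument---is sound, and it rests on essentially the same quantitative ingredient as the paper: the paper bounds $\widetilde r_T\le 2B\sigma_{T-1}(x_T)$ and then invokes Theorem~5.4 of \citep{kanagawa2018gaussian}, i.e.\ the fill-distance bound $\sigma_{T-1}(x)\le C h_X^{s-d/2}$, which is your zeros lemma applied to unit-norm functions vanishing on $X_{T-1}$. Your Pythagoras bound $\|f_T-m_{T-1}\|_{\mathcal{H}_k}\le B$ and the reduction of the minimum over $t$ to the round $t=T$ are both fine.

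There is, however, one step that does not prove the statement as written: you redefine the last query as $x_T=\arg\max_{x}\, m_{T-1}(x)$, whereas Algorithm~5 selects $x_T=\arg\max_{x}\, m_{T-1}(x)+B\sigma_{T-1}(x)$. Your claim that the middle term $m_{T-1}(x_T^*)-m_{T-1}(x_T)\le 0$ depends on the pure mean-maximization rule; under the actual UCB rule it can be positive (it is only controlled by $B\sigma_{T-1}(x_T)-B\sigma_{T-1}(x_T^*)$), so as written you have established the bound for a modified algorithm rather than for Algorithm~5. The fix is short and uses only the tool you already introduced: since $\sigma_{T-1}(x)=\sup\{g(x):\|g\|_{\mathcal{H}_k}\le 1,\ g|_{X_{T-1}}=0\}$, your sampling inequality gives $\sup_{x\in\mathcal{X}}\sigma_{T-1}(x)\le C h_X^{s-d/2}$; then follow the UCB chain $f_T(x_T^*)\le m_{T-1}(x_T^*)+B\sigma_{T-1}(x_T^*)\le m_{T-1}(x_T)+B\sigma_{T-1}(x_T)$ together with $m_{T-1}(x_T)-f_T(x_T)\le B\sigma_{T-1}(x_T)$, which yields $\widetilde r_T\le 2B\sigma_{T-1}(x_T)\le 2BC h_X^{s-d/2}$---exactly the paper's argument, with the constant absorbing the factor $2$.
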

\textbf{Remark:} The regret bound decreases as the covering radius becomes smaller. This means that a query set with a small covering radius can guarantee a small regret.  \cite{bull} gives  bounds of fixed points set for  Mat\'ern kernels (Theorem 1). However, it does not link to the covering radius. The bound in Theorem~\ref{Sobolev} directly links to the covering radius, which provides a criterion for generating points to achieve small bounds.


\begin{theorem}\label{SE}
Define $\mathcal{B}_k= \{f: f \in \mathcal{H}_k, {\left\| f \right\|_{{\mathcal{H}_k}}} \le B\}$ associated with  square-exponential $k(x,x) $ on unit cube $ \mathcal{X} \subset \mathbb{R}^d $. Suppose $f \in \mathcal{B}_k$.  Then there exits a constant $c>0$,  such that the query point set generated by  \textbf{Algorithm~5} 
 with a sufficiently small covering radius (fill distance) $h_X$ achieves a regret bound given by equation (\ref{EMr_Tseq}): 
\begin{align}\label{EMr_Tseq}
{\widetilde{r}_T} \le B\exp(c\log(h_X)/(2\sqrt{h_X})).
\end{align} 
\end{theorem}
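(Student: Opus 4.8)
The plan is to follow the same template as the proof of Theorem~\ref{Sobolev}, replacing the polynomial approximation-error estimate for Sobolev spaces with the corresponding exponential-type estimate available for the square-exponential (Gaussian) RKHS on a cube. First I would unwind the definition of the simple adversarial regret~\eqref{AsimpleR}. Algorithm~5 constructs an initialization set $X_{T-1}$ with small covering radius $h_X$ and then queries one more point; for the purpose of the upper bound it suffices to exhibit, for the relevant round, a query $x_t$ and then bound $\sup_{f_t} \{ f_t(x_t^*) - f_t(x_t) \}$ over all $f_t \in \mathcal{B}_k$ that interpolate the history. The key reduction, which I expect is already carried out in the proof of Theorem~\ref{Sobolev}, is that for any $f \in \mathcal{B}_k$ that vanishes (or agrees with a fixed function) on a point set with fill distance $h_X$, the pointwise value $|f(x)|$ at any $x \in \mathcal{X}$ is controlled by the \emph{power function} (equivalently the posterior standard deviation $\sigma(x)$ of the noise-free GP interpolant) times $\|f\|_{\mathcal{H}_k} \le B$. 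Concretely, $|f_t(x_t^*) - f_t(x_t)| \le B \big( P_{X_{t-1}}(x_t^*) + P_{X_{t-1}}(x_t) \big)$, where $P$ is the power function, and choosing $x_t$ to be (near) the minimizer of the power function makes the second term negligible, so the bound is driven by $\sup_{x \in \mathcal{X}} P_{X_{t-1}}(x)$.

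Next I would invoke the known sup-norm bound on the power function for the Gaussian kernel in terms of the fill distance. This is the standard scattered-data interpolation estimate (see e.g. Wendland, or the references in \citep{kanagawa2018gaussian}): for the square-exponential kernel on a cube satisfying an interior cone condition, there are constants $c_1, c_2 > 0$ such that $\sup_{x \in \mathcal{X}} P_{X}(x) \le c_1 \exp\!\big( c_2 \log(h_X) / \sqrt{h_X} \big)$ for all sufficiently small $h_X$ — note $\log(h_X) < 0$ so this decays faster than any polynomial in $h_X$. Plugging this in with $c := 2 c_2$ (the factor $2$ absorbing the constant inside the exponent after combining with $c_1$ into the leading $B$, or by simply enlarging $c$) yields $\widetilde r_T \le B \exp\!\big( c \log(h_X)/(2\sqrt{h_X}) \big)$, which is exactly~\eqref{EMr_Tseq}. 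The "sufficiently small $h_X$" hypothesis is what lets us both apply the cone-condition interpolation bound and absorb lower-order constants into the stated exponent.

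The main obstacle is purely a matter of citing and correctly stating the Gaussian-kernel power-function estimate with the right dependence on $h_X$ — the $\exp(c \log(h_X)/\sqrt{h_X})$ form is delicate (it comes from optimizing a polynomial-degree parameter in the approximation argument), and one must check that $\mathcal{X}$ being the unit cube supplies the geometric regularity (interior cone condition, quasi-uniformity not even needed since only the fill distance enters) required by that estimate. Everything else — the reduction of adversarial regret to the power function, and the choice of $x_t$ minimizing it — is identical to the Sobolev case and carries over verbatim. I would therefore structure the write-up as: (i) recall the power-function bound on $|f_t(x_t^*) - f_t(x_t)|$ from the proof of Theorem~\ref{Sobolev}; (ii) quote the Gaussian-kernel fill-distance estimate for $\sup_x P_X(x)$; (iii) combine and rename constants.
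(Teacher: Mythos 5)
Your overall route coincides with the paper's: reduce the adversarial regret to the noise-free posterior standard deviation (power function) at the final, adaptively chosen query, and then invoke the Gaussian-kernel fill-distance estimate (the paper cites Theorem 11.22 of \citep{wendland2004scattered}) to obtain the $\exp\bigl(c\log(h_X)/(2\sqrt{h_X})\bigr)$ rate; your step (ii), including the remark about absorbing constants into $c$ for sufficiently small $h_X$, is exactly the paper's second step.

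However, your version of the reduction in step (i) contains a genuine error. The inequality $|f_t(x_t^*)-f_t(x_t)| \le B\bigl(P_{X_{t-1}}(x_t^*)+P_{X_{t-1}}(x_t)\bigr)$ is false in general: decomposing $f_t(x_t^*)-f_t(x_t) = \bigl(f_t(x_t^*)-m_{t-1}(x_t^*)\bigr) - \bigl(f_t(x_t)-m_{t-1}(x_t)\bigr) + \bigl(m_{t-1}(x_t^*)-m_{t-1}(x_t)\bigr)$, the first two terms are indeed controlled by $B$ times the power function, but the mean-difference term $m_{t-1}(x_t^*)-m_{t-1}(x_t)$ does not shrink with $h_X$ (it is of order $B$ in general), and choosing $x_t$ to (near-)minimize the power function does nothing to control it --- nor is that what Algorithm~\ref{alg:mmd} does. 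The paper eliminates this term precisely through the UCB selection rule of Algorithm~\ref{alg:mmd}: taking $t=T$ in \eqref{AsimpleR}, using $|f_T(x)-m_{T-1}(x)|\le B\sigma_{T-1}(x)$ (valid because $f_T$ interpolates the history and $\|f_T\|_{\mathcal{H}_k}\le B$) and the fact that $x_T=\arg\max_x m_{T-1}(x)+B\sigma_{T-1}(x)$, one gets $m_{T-1}(x_T^*)+B\sigma_{T-1}(x_T^*)\le m_{T-1}(x_T)+B\sigma_{T-1}(x_T)$ and hence $\widetilde{r}_T\le 2B\sigma_{T-1}(x_T)$, after which the Wendland estimate finishes the proof. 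With that correction (choosing $x_T$ by the UCB rule, or at least by maximizing $m_{T-1}$, so the mean difference is nonpositive) your plan goes through; executed literally as written, step (i) would fail.
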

\textbf{Remark:} Theorem~\ref{SE} presents a regret bound for the SE kernel. It attains higher rate w.r.t covering radius $h_X$ compared with Theorem~\ref{Sobolev}, because functions in RKHS  with SE kernel are more smooth than functions in Sobolev space.

\begin{algorithm}[b]
   \caption{}
   \label{alg:mmd}
\begin{algorithmic}

                                   
 \STATE Construct Candidate set $X_{T-1}$ with $T\!-\!1$ points by minimizing the fill distance (e.g.Algorithm \ref{alg:6}).
 \STATE Query the  observations at $X_{T-1}$.

  \STATE  Obtain ${m _{T - 1}}( \cdot )$ and $ \sigma _{T - 1}^2( \cdot )$ via equation (\ref{m_t}) and (\ref{sigmat}).
  \STATE Choose ${x_T} = \mathop {\arg \max }\limits_{x \in \mathcal{X} } { m _{T - 1}}(x) + B \sigma _{T - 1}(x)$ 
  
 
 \STATE Query the  observation $y_T=f(x_T)$  at location $x_T$.  

\end{algorithmic}
\end{algorithm}

\begin{figure}[t]
\centering
\subfigure[\scriptsize{100 lattice points }]{
\label{fig2a_l}
\includegraphics[width=0.45\linewidth]{./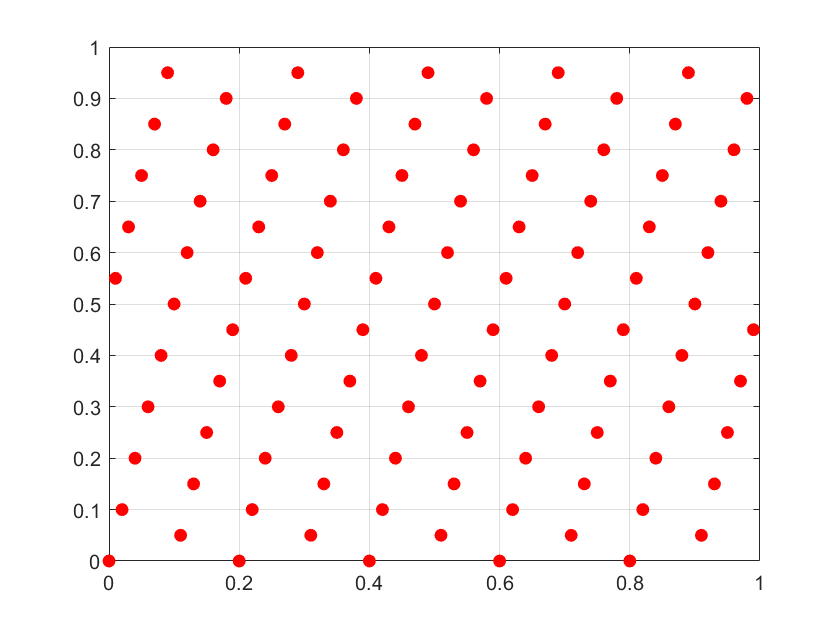}}
\subfigure[\scriptsize{100 random points }]{
\label{fig2c_l}
\includegraphics[width=0.45\linewidth]{./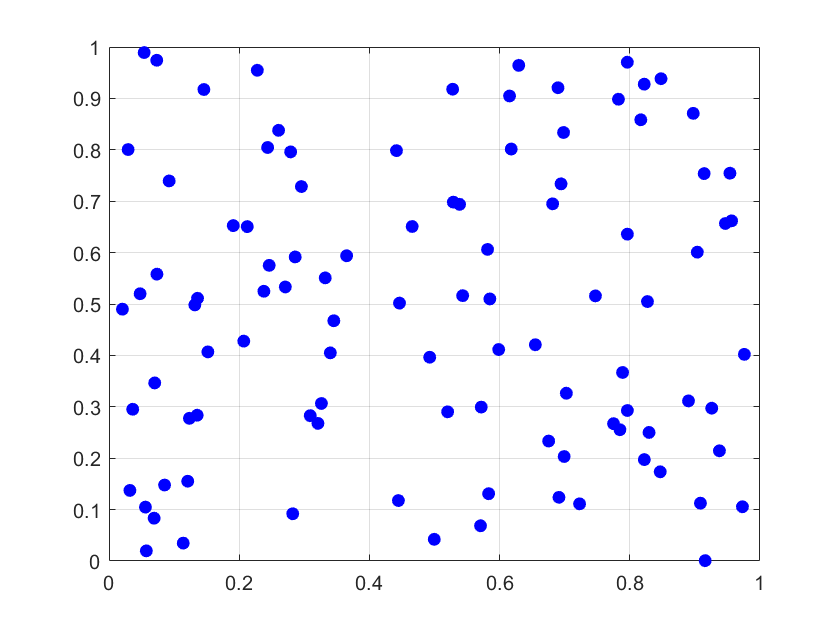}}
\caption{Lattice Points and Random Points on $[0,1]^2$}
\label{fig_lattice}
\end{figure}


We analyze the regret under a more adversarial setting. This relates to a more robust requirement.
The regret bounds under a fully adversarial setting when little information is known are summarized in Theorem~\ref{FullyAdv}.



\begin{theorem}\label{FullyAdv}
Define $\mathcal{B}_k= \{f: f \in \mathcal{H}_k, {\left\| f \right\|_{{\mathcal{H}_k}}} \le B\}$ associated with a shift invariant kernel $k(x,y) =\Phi(\|x-y \|) \le 1$ that decreases w.r.t $\|x-y \|$. Suppose  $    \exists x^*$ such that $f_t(x^*)={\max _{x \in \mathcal{X} }}f_t(x)$ with $f_t \in \mathcal{B}_k $  for $t \in [T]$. Then the query point set $X=\{x_1,...,x_T\}$ generated by  \textbf{Algorithm~5} 
 with covering radius (fill distance) $h_X$ achieves  a  regret bound~as
\begin{align}
   {\bar{r}_T} \!=  \! {\min _{t \in \{1,...,T\} }}\sup_{f_t\in \mathcal{B}_k}\{f_t(x^*) \!-\! f_t({x_t}) \} \!\le\! B\! \sqrt{2\!-\!2\Phi(h_X)}. \nonumber
\end{align}
\end{theorem}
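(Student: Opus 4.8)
The plan is to exploit the defining property of the covering radius together with the reproducing property of $\mathcal{H}_k$, and to observe that the $\min_{t}$ in the definition of $\bar r_T$ lets us restrict attention to the single query point closest to $x^*$. Since $x^*\in\mathcal{X}$ and $X=\{x_1,\dots,x_T\}$, the definition of covering radius in (\ref{hx}) gives $\min_{x_t\in X}\|x^*-x_t\|\le h_X$; I fix an index $t^\star$ attaining this minimum, so that $\|x^*-x_{t^\star}\|\le h_X$. Note that $t^\star$ need not be the index produced by the final UCB step of Algorithm~\ref{alg:mmd} — this is harmless, since for the bound on $\bar r_T$ we only need the existence of one good index.

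Next I would bound $\sup_{f\in\mathcal{B}_k}\{f(x^*)-f(x_{t^\star})\}$. For any $f\in\mathcal{H}_k$ the reproducing property gives $f(x^*)-f(x_{t^\star})=\langle f,\,k(\cdot,x^*)-k(\cdot,x_{t^\star})\rangle_{\mathcal{H}_k}$, so Cauchy--Schwarz together with $\|f\|_{\mathcal{H}_k}\le B$ yields $f(x^*)-f(x_{t^\star})\le B\,\|k(\cdot,x^*)-k(\cdot,x_{t^\star})\|_{\mathcal{H}_k}$. Expanding the squared norm with the reproducing property and shift invariance, $\|k(\cdot,x^*)-k(\cdot,x_{t^\star})\|_{\mathcal{H}_k}^2 = k(x^*,x^*)-2k(x^*,x_{t^\star})+k(x_{t^\star},x_{t^\star}) = 2\Phi(0)-2\Phi(\|x^*-x_{t^\star}\|)$.

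Then I would invoke the two hypotheses on $\Phi$: from $k(x,x)\le 1$ we get $\Phi(0)\le 1$, and since $\Phi$ is decreasing and $\|x^*-x_{t^\star}\|\le h_X$ we get $\Phi(\|x^*-x_{t^\star}\|)\ge\Phi(h_X)$. Hence $2\Phi(0)-2\Phi(\|x^*-x_{t^\star}\|)\le 2-2\Phi(h_X)$, so $f(x^*)-f(x_{t^\star})\le B\sqrt{2-2\Phi(h_X)}$ for every $f\in\mathcal{B}_k$. Taking the supremum over $f_{t^\star}\in\mathcal{B}_k$ and then the minimum over $t\in[T]$ (which is no larger than the $t=t^\star$ term) gives $\bar r_T\le B\sqrt{2-2\Phi(h_X)}$, as claimed.

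There is very little analytic difficulty here — the only real idea is to notice that because $x^*$ is a common maximizer of every $f_t$ and because $\bar r_T$ minimizes over the round index, the adversarial supremum over $f_t$ decouples entirely from the algorithm's behavior: only the geometry of the point set $X$ (through $h_X$) and the kernel's modulus of continuity $\Phi$ enter. The remaining computation of $\|k(\cdot,x^*)-k(\cdot,x_{t^\star})\|_{\mathcal{H}_k}^2$ is the standard RKHS identity and poses no obstacle; the one place to be slightly careful is ensuring $\Phi(0)\le 1$ rather than assuming $\Phi(0)=1$, which is exactly where the hypothesis $k(x,y)\le 1$ is used.
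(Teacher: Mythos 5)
Your proposal is correct and follows essentially the same route as the paper: reproducing property plus Cauchy--Schwarz to get $f(x^*)-f(x_t)\le B\|k(\cdot,x^*)-k(\cdot,x_t)\|_{\mathcal{H}_k}$, expansion of the squared norm to $2\Phi(0)-2\Phi(\|x^*-x_t\|)$, and then the covering-radius bound with monotonicity of $\Phi$ and $k(x,x)\le 1$. Your explicit choice of the index $t^\star$ nearest to $x^*$ and the remark that $\Phi(0)\le 1$ suffices (rather than $\Phi(0)=1$) are just slightly more careful phrasings of the same argument.
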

\textbf{Remark:} Theorem 9 gives a fully adversarial bound. Namely, the opponent can choose functions from $\mathcal{B}_k$ without the same history. The regret bound decreases with the decrease of the covering radius (fill distance). The assumption requires each $f_t$ to have the $x^*$ as one of its maximum. Particularly, it is satisfied when $f_1\!=\!\cdots\!=f_T =\! f$.

\begin{corollary}\label{SEbound}
Define $\mathcal{B}_k= \{f: f \in \mathcal{H}_k, {\left\| f \right\|_{{\mathcal{H}_k}}} \le B\}$ associated with squared exponential kernel. 
 Suppose  $    \exists x^*$ such that $f_t(x^*)={\max _{x \in \mathcal{X} }}f_t(x)$ with $f_t \in \mathcal{B}_k $  for $t \in [T]$. Then the query point set $X=\{x_1,...,x_T\}$ generated by  \textbf{Algorithm~5} 
 with covering radius (fill distance) $h_X$ achieves  a  regret bound~as
\begin{align}
    & {\bar{r}_T} = {\min _{t \in \{ 1,...,T\} }}\sup_{f_t\in \mathcal{B}_k}\{f_t(x^*) - f_t({x_t}) \} \le \mathcal{O}(h_X).
\end{align}
\end{corollary}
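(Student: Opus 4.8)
The plan is to instantiate Theorem~\ref{FullyAdv} with the squared exponential kernel $k(x,y) = \Phi(\|x-y\|) = \exp(-\|x-y\|^2/(2\ell^2))$ and then carry out a Taylor expansion of the resulting bound in the covering radius $h_X$. First I would check that the SE kernel satisfies the hypotheses of Theorem~\ref{FullyAdv}: it is shift-invariant, $\Phi$ is a decreasing function of $\|x-y\|$, and $k(x,x) = \Phi(0) = 1 \le 1$. Hence Theorem~\ref{FullyAdv} applies directly and yields
\begin{align}
\bar{r}_T \le B\sqrt{2 - 2\Phi(h_X)} = B\sqrt{2 - 2\exp\!\big(-h_X^2/(2\ell^2)\big)}. \nonumber
\end{align}

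Next I would bound the right-hand side for small $h_X$. Using the elementary inequality $1 - e^{-u} \le u$ for $u \ge 0$ with $u = h_X^2/(2\ell^2)$, we get $2 - 2\Phi(h_X) = 2(1 - e^{-h_X^2/(2\ell^2)}) \le h_X^2/\ell^2$, and therefore
\begin{align}
\bar{r}_T \le B\sqrt{h_X^2/\ell^2} = (B/\ell)\, h_X = \mathcal{O}(h_X), \nonumber
\end{align}
where the constant hidden in the $\mathcal{O}(\cdot)$ depends only on $B$ and the lengthscale $\ell$. This establishes the claimed linear rate in $h_X$.

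Since the argument is essentially a one-line substitution into Theorem~\ref{FullyAdv} followed by a routine scalar estimate, I do not anticipate a genuine obstacle. The only point that requires a little care is making sure the ``sufficiently small $h_X$'' regime is handled cleanly: the bound $1 - e^{-u} \le u$ is in fact valid for all $u \ge 0$, so the estimate holds globally and no smallness assumption is actually needed, but stating it for small $h_X$ matches the asymptotic $\mathcal{O}(h_X)$ phrasing. One could alternatively expand $\sqrt{2 - 2\Phi(h_X)}$ to leading order as $h_X/\ell + o(h_X)$ to exhibit the sharp constant, but the inequality-based route above already suffices for the stated corollary.
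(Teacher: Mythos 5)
Your proposal is correct and follows essentially the same route as the paper: instantiate Theorem~\ref{FullyAdv} with the squared exponential kernel and then apply the elementary bound $1-e^{-u}\le u$ to obtain $B\sqrt{2-2\Phi(h_X)} \le \mathcal{O}(h_X)$. The only difference is notational (the paper writes $\Phi(r)=\exp(-Cr^2)$ where you write $\exp(-r^2/(2\ell^2))$), and your observation that no smallness assumption on $h_X$ is needed is a nice clarification consistent with the paper's argument.
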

\textbf{Remark:} For a regular grid, $h_X = \mathcal{O}(T^{-\frac{1}{d}})$ \citep{wendland2004scattered}, we then achieve ${\bar{r}_T} = \mathcal{O}(T^{-\frac{1}{d}})$. 
Computer search can find a point set with a smaller covering radius than that of a regular grid.

All the adversarial regret bounds discussed above decrease with the decrease of the covering radius. Thus, the point set generated by Algorithm~\ref{alg:mmd} with a small covering radius can serve as a good robust initialization for BO. 

\section{Fast Rank-1 Lattice Construction}
\label{rank-1cons}


  

\begin{algorithm}[t]
   \caption{Rank-1 Lattice Construction}
   \label{alg:6}
\begin{algorithmic}
    \STATE {\bfseries Input:} Number of primes  $M$, dimension $d$, number of lattice points $N$
 \STATE {\bfseries Output:} Lattice points $ X^{*} $, base vector $\bf{b^{*}}$ 
\STATE  Set $p_0= 2\times d+1$, initialize $\rho^{*}=-1$.
\STATE  Construct set $U:=\{p|p \in \mathbb{P}, p \ge p_0 \}$ containing $ M $ primes.

   \FOR{ each $p \in U$ }
 \FOR{$i=0$ {\bfseries to} $p-1$}
  \STATE Set ${\bf{g}} = \text{mod}({\bf{q}}+i,p) $, where $\bf{q}\in \mathbb{R}^{d-1}$ and $ {\bf{q}}_j \! = \!j $. 
  \STATE Set ${\bf{g}} = \text{round}(N\times \text{mod}(|2cos(\frac{2\pi{\bf{g}}}{p})|,1))$.
  \STATE Set $\bf{b}$ as $[1,{\bf{g}}]$ by concatenating vector $ 1 $ and $\bf{g}$. 
  \STATE Generate lattice $ X $ given base vector $\bf{b}$ as Eq.(\ref{Rank1}).
  \STATE  Calculate the packing radius (separate distance) $\rho_{X}$ of $ X $ as Eq.(\ref{rank1D}).
  \IF{ $ \rho_X > \rho^{*} $}
  \STATE Set ${\bf{b^{*}}}={\bf{b}}$ and  $\rho^{*} = \rho_X$.
  
  \ENDIF
   \ENDFOR
   \ENDFOR
   \STATE Generate lattice $X^{*}$ given base vector $\bf{b^{*}}$ as Eq.(\ref{Rank1}).
\end{algorithmic}
\end{algorithm}

\begin{algorithm}[t]
   \caption{Rank-1 Lattice Construction with Successive Coordinate Search (SCS) }
   \label{alg:7}
\begin{algorithmic}
    \STATE {\bfseries Input:} Number of primes  $M$, dimension $d$, number of lattice points $N$, number of iteration of SCS search subroutine $T$.
 \STATE {\bfseries Output:} Lattice points $ X^{*} $, base vector $\bf{b^{*}}$ 
\STATE  Set $p_0= 2\times d+1$, initialize $\rho^{*}=-1$.
\STATE  Construct set $U:=\{p|p \in \mathbb{P}, p \ge p_0 \}$ containing $ M $ primes.

   \FOR{ each $p \in U$ }
 \FOR{$i=0$ {\bfseries to} $p-1$}
  \STATE Set ${\bf{g}} = \text{mod}({\bf{q}}+i,p) $, where $\bf{q}\in \mathbb{R}^{d-1}$ and $ {\bf{q}}_j \! = \!j $. 
  \STATE Set ${\bf{g}} = \text{round}(N\times \text{mod}(|2cos(\frac{2\pi{\bf{g}}}{p})|,1))$.
  \STATE Set $\bf{b}$ as $[1,{\bf{g}}]$ by concatenating vector $ 1 $ and $\bf{g}$ . 
  \STATE  Perform SCS search~\citep{lyu2017spherical,SCSsearch} 
  with $\bf{b}$ as the initialization base vector to get a better base  $\bf{\widehat{b}}$ and $\rho_X$.
  \IF{ $ \rho_X > \rho^{*} $}
  \STATE Set ${\bf{b^{*}}}={\bf{\widehat{b}}}$ and  $\rho^{*} = \rho_X$.
  
  \ENDIF
   \ENDFOR
   \ENDFOR
   \STATE Generate lattice $X^{*}$ given base vector $\bf{b^{*}}$ as Eq.(\ref{Rank1}).
\end{algorithmic}
\end{algorithm}


In this section, we describe the procedure of generating a query points set that has a small covering radius (fill distance). Since minimizing the covering radius of the lattice is equivalent to maximizing the packing radius (separate distance)  \citep{keller2007monte}, we generate the query points set through maximizing the packing radius (separate distance) of the rank-1 lattice. An illustration of the  rank-1 lattice constructed by Algorithm~\ref{alg:6} is given in Fig.~\ref{fig_lattice}





\subsection{The rank-1 lattice construction given a base vector}
Rank-1 lattice is widely used in the Quasi-Monte Carlo (QMC) literature for integral approximation \citep{keller2007monte, Korobov}.
The lattice points of the rank-1 lattice in $ [0,1]^d $ are  generated by a base vector. Given an integer base vector $ {\bf{b}} \in \mathbb{N}^d $, a lattice set $ X $ that consists of $ N $ points in $ [0,1]^d $ is constructed~as 
\begin{align}\label{Rank1}
      X := \{{\bf{x}}_i:=  \text{mod}(i\times {\bf{b}},N)/N| i\! \in \!\{0,...,N\!\!-\!\!1\} \},
\end{align}
where $ \text{mod}(a,b)$ denotes the component-wise modular function, i.e., $a \% b$. We use $\text{mod}(a,1)$ to denote the fractional  part of number $a$ in this work.

\subsection{The separate distance of a rank-1 lattice}
Denote the toroidal distance \citep{minTdistance} between two lattice points $ {\bf{y}}\in [0,1]^d $ and $ {\bf{z}}\in [0,1]^d $ as: 
\begin{align}
    \|{\bf{y}} - {\bf{z}}\|_T := \!\!\sqrt{ \sum_{i=1}^d (\min( |y_i - z_i|  , 1-|y_i - z_i|))^2 }.
\end{align}
Because the difference (subtraction) between two lattice points is still a lattice point, and a rank-1 lattice has a periodic 1, the packing radius (separate distance) $ \rho_X $ of a rank-1 lattice with set $ X $ in $[0,1]^d$ can be calculated as
\begin{align}\label{rank1D}
    \rho_X = \min_{{\bf{x}} \in {X \setminus  {\bf{0}}}}\frac{1}{2}\|{\bf{x}} \|_T,
\end{align}
where $ \|{\bf{x}} \|_T $ can be seen as the toroidal distance between $ \bf{x} $ and $ \bf{0} $.
This formulation calculates the packing radius (separate distance) with a time complexity of $ \mathcal{O}(Nd) $ rather than $ \mathcal{O}(N^2d) $ in pairwise computation.





\subsection{Searching the  rank-1 lattice with maximized separate distance}
Given the number of primes $ M $, the dimension $ d $, and the number of lattices points $ N $, we try to find the optimal base vector $ b^{*} $ and its corresponding lattice points $ X^{*} $ such that the separation distance $ \rho_{X^{*}} $ is maximized over a candidate set. We adopt the algebra field based construction formula in \citep{hua2012applications} to construct the base vector of a rank-1 lattice. Instead of using the same predefined form as \citep{hua2012applications}, we adopt a searching procedure as summarized in Algorithm~\ref{alg:6}. The main idea is a greedy search starting from a set of $ M $ prime numbers. For each prime number $ p $, it also searches the $ p $ offset from $ 0 $ to $ p-1 $ to construct the possible base vector $ b $ and its corresponding $ X $. After the greedy search procedure, the algorithm returns the optimal base vector $ b^{*} $ and the lattice points set $ X^{*} $ that obtains the maximum separation distance. Algorithm~\ref{alg:6} can be extended by including successive coordinate search (SCS)~\citep{lyu2017spherical,SCSsearch} as an inner searching procedure. The extended method is summarized in  Algorithm~\ref{alg:7}. This method can achieve superior performance compared to other baselines.

\begin{table*}[t]
\caption{Minimum distance ($2\rho_X$) of 1,000 lattice points in $[0,1]^d$ for  $d=10$, $d=20$, $d=30$, $d=40$ and $d=50$.}
\centering
\begin{tabular}{|c|c|c|c|c|c|}
\hline
  & $d=10$       & $d=20$      & $d=30$     & $d=40$   & $d=50$  \\ \hline
Algorithm~\ref{alg:6}     & 0.59632  & 1.0051   & 1.3031  & 1.5482  & 1.7571  \\ \hline
Korobov  & 0.56639  & 0.90139  & 1.0695  & 1.2748  & 1.3987  \\ \hline
SCS  &    0.60224 &    1.0000 &    1.2247 &    1.4142 &    1.5811 \\ \hline
Algorithm~\ref{alg:7}  & \textbf{0.62738}   &  \textbf{1.0472}    &  \textbf{1.3620}    & \textbf{1.6175}   & \textbf{1.8401}  \\ \hline
\end{tabular}
\label{MD1000}
\end{table*}

\begin{table*}[t]
\caption{Minimum distance ($2\rho_X$) of 2,000 lattice points in $[0,1]^d$ for  $d=10$, $d=20$, $d=30$, $d=40$ and $d=50$.}
\centering
\begin{tabular}{|c|c|c|c|c|c|}
\hline
 & $d=10$       & $d=20$      & $d=30$     & $d=40$   & $d=50$   \\ \hline
 Algorithm~\ref{alg:6}     & 0.54658  & 0.95561  & 1.2595   & 1.4996  & 1.7097  \\ \hline
Korobov  & 0.51536  & 0.80039  & 0.96096  & 1.1319  & 1.2506  \\ \hline
SCS &    0.57112 &    0.98420 &    1.2247 &    1.4142 &    1.5811 \\ \hline
Algorithm~\ref{alg:7} & \textbf{0.58782}   & \textbf{1.0144}    & \textbf{1.3221}    & \textbf{1.5758 } & \textbf{1.8029 }  \\ \hline
\end{tabular}
\label{MD2000}
\end{table*}

\begin{table*}[t]
\caption{Minimum distance ($2\rho_X$) of 3,000 lattice points in $[0,1]^d$ for  $d=10$, $d=20$, $d=30$, $d=40$ and $d=50$.}
\centering
\begin{tabular}{|c|c|c|c|c|c|}
\hline
  & $d=10$       & $d=20$      & $d=30$     & $d=40$   & $d=50$   \\ \hline
Algorithm~\ref{alg:6}     & 0.53359 & 0.93051  & 1.2292   & 1.4696   & 1.7009  \\ \hline
Korobov  & 0.50000     & 0.67185  & 0.82285  & 0.95015  & 1.0623  \\ \hline
SCS &    0.52705 &    0.74536 &    0.91287 &    1.0541     & 1.1785 \\ \hline
Algorithm~\ref{alg:7} & \textbf{0.56610}  & \textbf{0.98601}   & \textbf{1.2979}    & \textbf{1.5553}    &  \textbf{1.7771}   \\ \hline
\end{tabular}
\label{MD3000}
\end{table*}

\subsection{Comparison of minimum distance generated by different methods}

 We  evaluate the proposed Algorithm~\ref{alg:6} and Algorithm~\ref{alg:7} by comparing them  with searching in  Korobov form~\citep{Korobov} and  SCS~\citep{lyu2017spherical,SCSsearch}.  We fix $M=50$ for Algorithm~\ref{alg:6} and Algorithm~\ref{alg:7} in all the experiments. The number of iterations of SCS search~\citep{lyu2017spherical,SCSsearch} is set to $T=150$, and  number of iterations of SCS search as a subroutine in Algorithm~\ref{alg:7} is set to $T=3$. 

 The  minimum distances ($2\rho_X$) of $1,000$ points, $2,000$ points and $3,000$ points generated by different methods are summarized in Tables~\ref{MD1000},~\ref{MD2000} and~\ref{MD3000}, respectively. 
Algorithm~\ref{alg:7} can achieve a larger separate (minimum) distance than other searching methods. This means that Algorithm~\ref{alg:7} can generate points set with a smaller covering radius (fill distance). Thus,  it can generate more robust initialization for BO. Moreover, Algorithm~\ref{alg:7} can also be used to generate points for integral approximation on $[0,1]^d$.

\subsection{Comparison between lattice points and random points}

\begin{figure*}[t]
\centering
\subfigure[\scriptsize{100 lattice points }]{
\label{fig2a_l}
\includegraphics[width=0.45\linewidth]{./results/lattice100.png}}
\subfigure[\scriptsize{100 random points }]{
\label{fig2c_l}
\includegraphics[width=0.45\linewidth]{./results/random100.png}}
\subfigure[\scriptsize{1000 lattice points}]{
\label{fig2f_l}
\includegraphics[width=0.45\linewidth]{./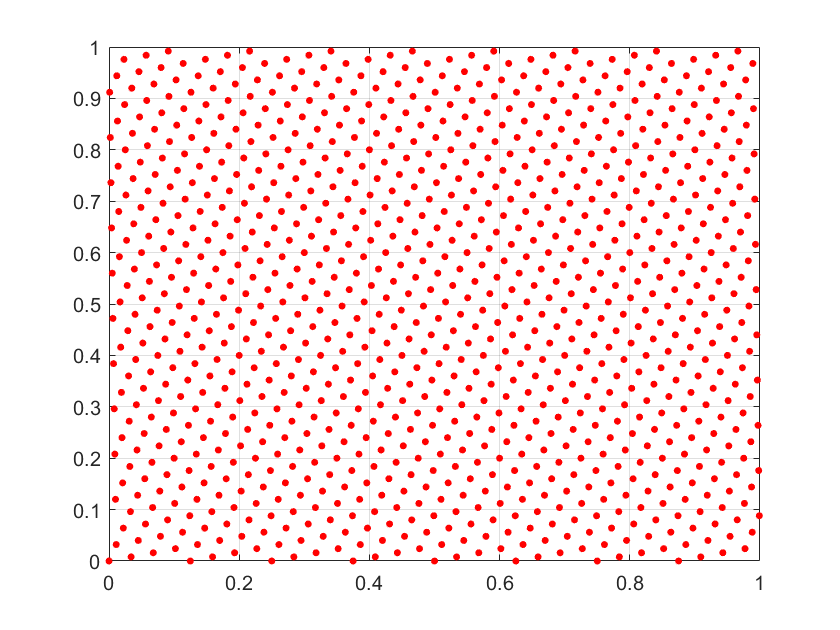}}
\subfigure[\scriptsize{1000 random points}]{
\label{fig2b_l}
\includegraphics[width=0.45\linewidth]{./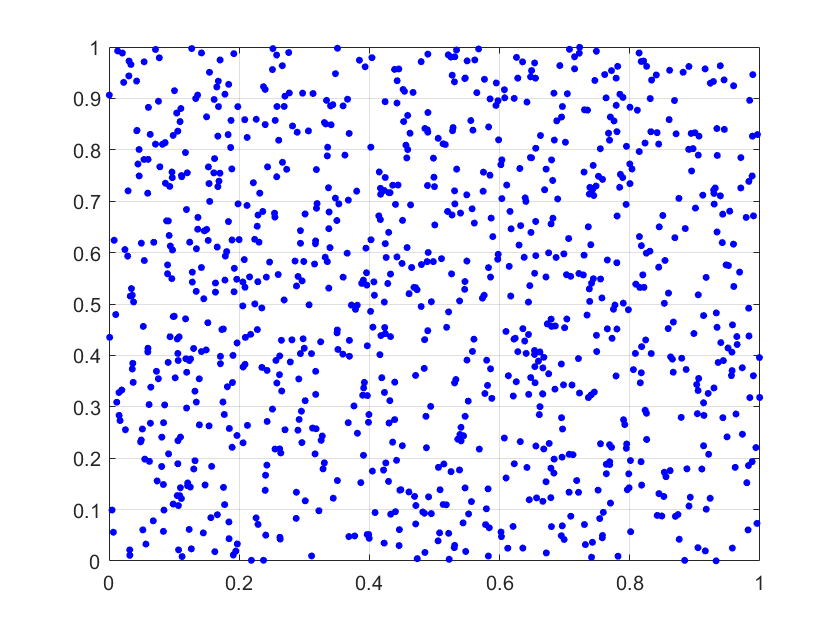}}

\caption{Lattice Points and Random Points on $[0,1]^2$}
\label{fig_l}
\end{figure*}

The points generated by Algorithm~\ref{alg:6} and uniform sampling are presented in Figure~\ref{fig_l}.
We can observe that the points generated by Algorithm~\ref{alg:6} cut the domain into several cells. It obtains a smaller covering radius (fill distance) than the random sampling. Thus, it can be used as a robust initialization of BO.



\begin{table}[h]
\centering
\caption{Test functions }
\label{testF}
\begin{tabular}{lll}
\hline
name                          & function & domain       \\ \hline
Rosenbrock                    &  $\sum\limits_{i = 1}^{d - 1} {\left( {100{{({x_{i + 1}} - x_i^2)}^2} + {{(1 - {x_i})}^2}} \right)} $        & $[-2, 2 ]^d$  \\
Nesterov &  $\frac{1}{4}\left| {{x_1} - 1} \right| + \sum\limits_{i = 1}^{d - 1} {\left| {{x_{i + 1}} - 2\left| {{x_i}} \right| + 1} \right|} $        & $[-2, 2 ]^d$   \\
Different-Powers              &  $\sum\limits_{i = 1}^d {{{\left| {{x_i}} \right|}^{2 + 10\frac{{i - 1}}{{d - 1}}}}} $        & $[-2, 2 ]^d$  \\
Dixon-Price                   &  ${\left( {{x_1} - 1} \right)^2} + \sum\limits_{i = 2}^d {i{{\left( {2x_i^2 - {x_{i - 1}}} \right)}^2}} $        & $[-2, 2 ]^d$   \\ 

Ackley                        &    $ - 20\exp ( - 0.2\sqrt {\frac{1}{d}\sum\limits_{i = 1}^d {x_i^2} } ) - \exp (\frac{1}{d}\sum\limits_{i = 1}^d {\cos (2\pi {x_i})} ) + 20 + \exp (1)$ 
& $[-2, 2 ]^d$  \\
Levy                          & $\begin{array}{l}
{\sin ^2}(\pi {w_1}) + \sum\limits_{i = 1}^{d - 1} {{{({w_i} - 1)}^2}(1 + 10{{\sin }^2}(\pi {w_i} + 1))} \\ + {({w_d} - 1)^2}(1 + {\sin ^2}(2\pi {w_d}))\\
{\rm{where}} \; {w_i} = 1 + ({x_i} - 1)/4,\;i \in \{ 1,...,d\} 
\end{array}$       & $[-10, 10 ]^d$ \\
\\ \hline
\end{tabular}
\end{table}

\section{Comparison with Bull's Non-adaptive Batch Method }

\cite{bull} presents a non-adaptive batch method with all the query points except one being fixed at the beginning.  As mentioned by Bull, this method is not practical. However,  \cite{bull} does not present an adaptive batch method.  We compare our adaptive batch method with Bull's non-adaptive method on Rosebrock and Ackley functions. The mean values of simple regret over 30 independent runs are presented in Figure~\ref{fig_bull}, which shows that Bull's non-adaptive method has a very slowly decreasing simple regret.

\begin{figure}[t]
\centering
\subfigure[\scriptsize{Rosenbrock function }]{
\label{fig_Bull_l}
\includegraphics[width=0.45\linewidth]{./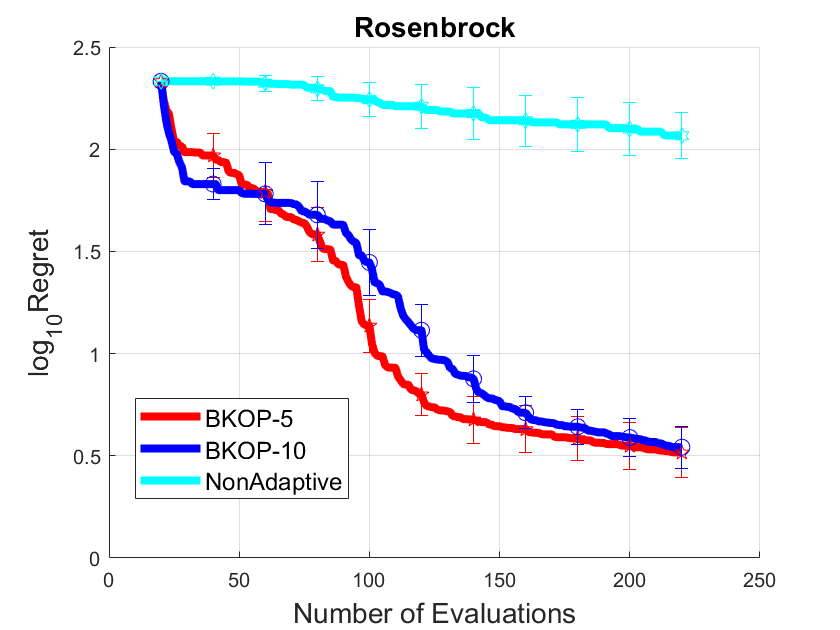}}
\subfigure[\scriptsize{Ackley function }]{
\label{fig_Bull_2}
\includegraphics[width=0.45\linewidth]{./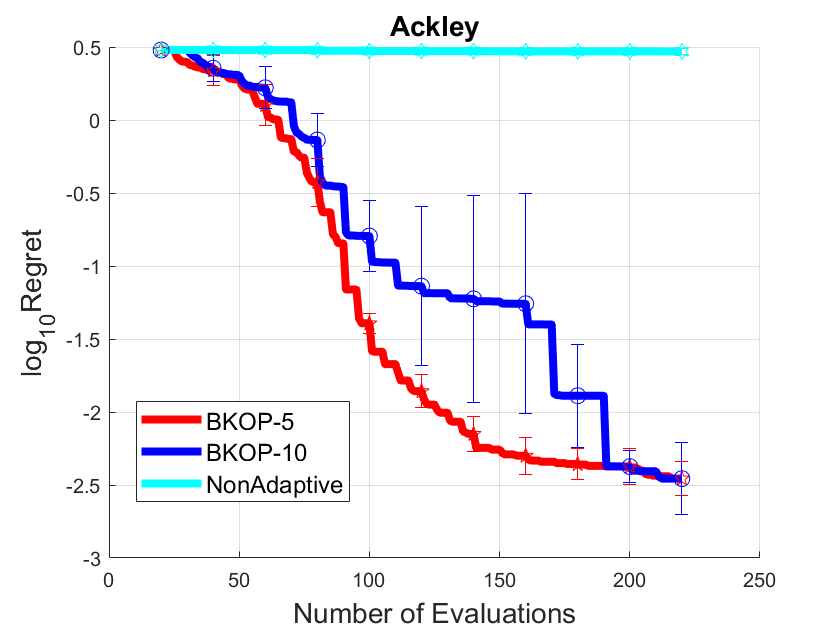}}
\caption{The mean value of simple regret over 30 runs on Rosenbrock and Ackley function }
\label{fig_bull}
\end{figure}

\section{Experiments}\label{experiments}

 In this section, we focus on the evaluation of the proposed batch method.
We evaluate the proposed Batch kernel optimization (BKOP) by comparing it with GP-BUCB \citep{GPBUCB} and GP-UCB-PE\citep{UCBPE} on several synthetic benchmark test problems,  hyperparameter tuning of a deep network on CIFAR100~\citep{cifar100}  and the robot pushing task in~\citep{mes}. An empirical study of our fast rank-1 lattice searching method is included in the supplementary material.

\begin{figure}[t]
\centering
\subfigure[\scriptsize{ Simple regret on network tuning task on CIFAR100}]{
\label{Cifar100}
\includegraphics[width=0.475\linewidth]{./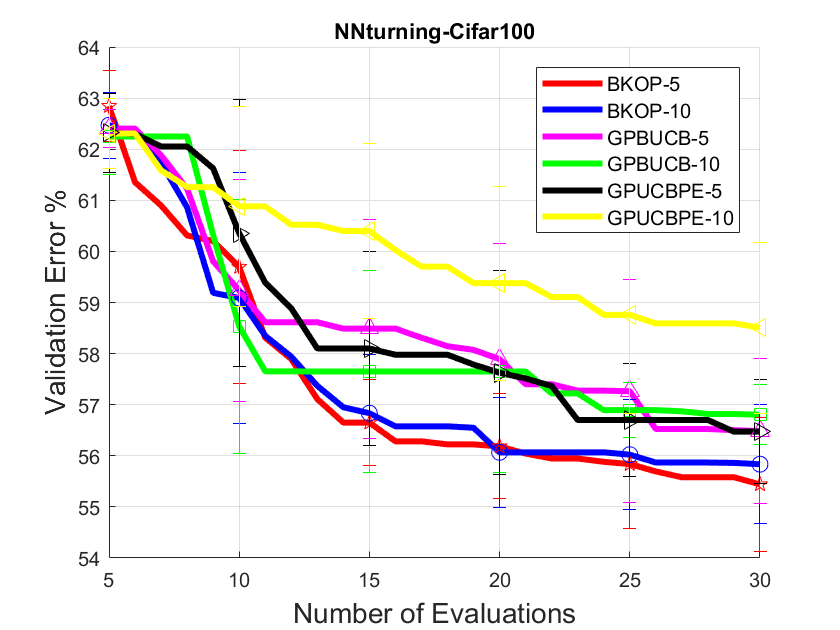}}
\subfigure[\scriptsize{ Simple regret on robot pushing task }]{
\label{robotPush}
\includegraphics[width=0.475\linewidth]{././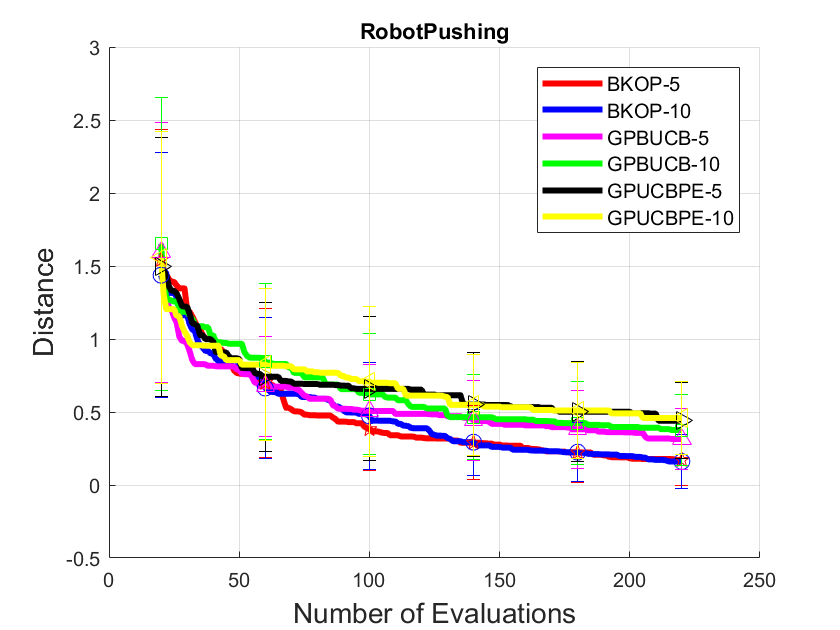}}
\caption{The mean value of simple regret on network tuning task and robot pushing task. }
\end{figure}

  \textbf{Synthetic benchmark problems:} The synthetic test functions and the domains employed are listed in Table~\ref{testF}, which includes nonconvex, nonsmooth and multimodal functions.
  
  We fix the weight of the covariance term in the acquisition function of BKOP to one in all the experiments.
  For all the synthetic test problems, we set the dimension of the domain $d=6$, and we set the batch size to $L=5$ and $L=10$ for all the batch BO algorithms.  We use the ARD Mat\'ern  5/2 kernel for all the methods.
 Instead of finding the optimum by discrete approximation,  we employ the CMA-ES algorithm~\citep{CMAES}  to optimize the acquisition function in the continuous domain $\mathcal{X}$ for all the methods, which usually improves the performance compared with discrete approximation. For each test problem, we use 20   rank-1 lattice points resized in the domain $\mathcal{X}$ as the initialization.  All the methods use the same initial points. 
 
 The mean value and error bar of the simple regret over 30 independent runs concerning different algorithms are presented in Figure~\ref{fig1}.  We can observe that BKOP with batch sizes 5 and 10 performs better than the other methods with the same batch size.  Moreover, algorithms with batch size 5 achieve faster-decreasing regret compared with batch size 10. BKOP achieves significantly low regret compared with the other methods on the Different-Powers and Rosenbrock test functions.
 
 \textbf{Hyperparameter tuning of network:} 
 We evaluate BKOP on hyperparameter tuning of the network on the CIFAR100 dataset. The network we employed contains three hidden building blocks, each one consists of one convolution layer, one batch normalization layer and one RELU layer. The depth of a building block is defined as the repeat number of these three layers.  Seven hyperparameters are used in total for searching, namely, the depth of the building block ($\{1,2,3\}$ ), the  initialized learning rate 
for SGD ($[10^{-4},10^{-1}]$), the momentum  weight ($[0.1,0.95]$), weight of L2 regularization ($[10^{-10},10^{-2}]$), and  three hyperparameters related to the  filter size for each building block, the domain of these three parameters is $\{2 \times 2,3 \times 3,4 \times 4\}$. We employ the default training set (i.e., $50,000$ samples) for training, and use the default test set (i.e., $10,000$ samples) to compute the validation error regret of automatic hyperparameter tuning for all the methods. 

We employ five rank-1 lattice points resized in the domain as the initialization. All the methods use the same initial points.
The mean value of the simple regret of the validation error in percentage over 10 independent runs is presented in Figure~\ref{Cifar100}. We can observe that  BKOP with both batch size 5 and 10 outperforms the others. Moreover, the performance of GP-UCB-PE with batch size 10 is worse than the others.

\textbf{Robot Pushing Task:}
 We further evaluate the performance of BKOP on the robot pushing task in~\citep{mes}.
 The goal of this task is to select a good action for pushing an object to a target location. The 4-dimensional robot pushing problem consists of the robot location $(x,y)$ and angle $\theta$ and the pushing duration $\tau$ as the input. And it outputs the distance between the pushed object and the target location as the function value. We employ 20  rank-1 lattice points as initialization. All the methods use the same initialization points. Thirty goal locations are randomly generated for testing.  All the methods use the same goal locations.  The mean value and error bars over 30 trials are presented in  Figure~\ref{robotPush}.
  We can observe that  BKOP with both batch size 5 and batch size 10 can achieve lower regret compared with GP-BUCB and GP-UCB-PE.

  \section{Conclusion}\label{conclusion}

We analyzed black-box optimization for functions with a bounded norm in RKHS.
 For sequential BO, we obtain a similar acquisition function to  GP-UCB, but with a constant deviation weight. For batch BO, we proposed the BKOP  algorithm, which is competitive with, or better
than, other batch confidence-bound methods on a variety of tasks.   Theoretically, we derive regret bounds for both the sequential and batch cases regardless of the choice of kernels, which are more general than the previous studies. 
Furthermore, we derive adversarial regret bounds with respect to the covering radius, which provides an important insight to design robust initialization for BO. To this end, we proposed fast searching methods to construct a good rank-1 lattice. Empirically, the proposed searching methods can obtain a large packing radius (separate distance).




\newpage


\bibliography{BO.bib}

\begin{thebibliography}{46}
\providecommand{\natexlab}[1]{#1}
\providecommand{\url}[1]{\texttt{#1}}
\expandafter\ifx\csname urlstyle\endcsname\relax
  \providecommand{\doi}[1]{doi: #1}\else
  \providecommand{\doi}{doi: \begingroup \urlstyle{rm}\Url}\fi

\bibitem[Adrian~Ebert(2018)]{SCSsearch}
Dirk~Nuyens Adrian~Ebert, Hernan~Leövey.
\newblock Successive coordinate search and component-by-component construction
  of rank-1 lattice rules.
\newblock \emph{arXiv preprint arXiv:1703.06334}, 2018.

\bibitem[Audet and Hare(2017)]{audet2017derivative}
Charles Audet and Warren Hare.
\newblock \emph{Derivative-free and blackbox optimization}.
\newblock Springer, 2017.

\bibitem[Auer(2002)]{auer2002using}
Peter Auer.
\newblock Using confidence bounds for exploitation-exploration trade-offs.
\newblock \emph{Journal of Machine Learning Research}, 3\penalty0
  (Nov):\penalty0 397--422, 2002.

\bibitem[Back et~al.(1991)Back, Hoffmeister, and Schwefel]{back1991survey}
Thomas Back, Frank Hoffmeister, and Hans-Paul Schwefel.
\newblock A survey of evolution strategies.
\newblock In \emph{Proceedings of the fourth international conference on
  genetic algorithms}, volume~2. Morgan Kaufmann Publishers San Mateo, CA,
  1991.

\bibitem[Berkenkamp et~al.(2019)Berkenkamp, Schoellig, and
  Krause]{berkenkamp2019no}
Felix Berkenkamp, Angela~P Schoellig, and Andreas Krause.
\newblock No-regret bayesian optimization with unknown hyperparameters.
\newblock \emph{Journal of Machine Learning Research}, 20:\penalty0 50, 2019.

\bibitem[Bogunovic et~al.(2016)Bogunovic, Scarlett, Krause, and
  Cevher]{bogunovic2016truncated}
Ilija Bogunovic, Jonathan Scarlett, Andreas Krause, and Volkan Cevher.
\newblock Truncated variance reduction: A unified approach to bayesian
  optimization and level-set estimation.
\newblock In \emph{Advances in Neural Information Processing Systems}, pages
  1507--1515, 2016.

\bibitem[Bogunovic et~al.(2018)Bogunovic, Scarlett, Jegelka, and
  Cevher]{bogunovic2018adversarially}
Ilija Bogunovic, Jonathan Scarlett, Stefanie Jegelka, and Volkan Cevher.
\newblock Adversarially robust optimization with gaussian processes.
\newblock In \emph{Advances in Neural Information Processing Systems}, pages
  5765--5775, 2018.

\bibitem[Bubeck et~al.(2009)Bubeck, Munos, and Stoltz]{PE}
S{\'e}bastien Bubeck, R{\'e}mi Munos, and Gilles Stoltz.
\newblock Pure exploration in multi-armed bandits problems.
\newblock In \emph{International conference on Algorithmic learning theory},
  pages 23--37. Springer, 2009.

\bibitem[Bull(2011)]{bull}
Adam~D Bull.
\newblock Convergence rates of efficient global optimization algorithms.
\newblock \emph{Journal of Machine Learning Research}, 12\penalty0
  (Oct):\penalty0 2879--2904, 2011.

\bibitem[Contal et~al.(2013)Contal, Buffoni, Robicquet, and Vayatis]{UCBPE}
Emile Contal, David Buffoni, Alexandre Robicquet, and Nicolas Vayatis.
\newblock Parallel gaussian process optimization with upper confidence bound
  and pure exploration.
\newblock In \emph{Joint European Conference on Machine Learning and Knowledge
  Discovery in Databases}, pages 225--240. Springer, 2013.

\bibitem[Dammertz and Keller(2008)]{rank1Image}
Sabrina Dammertz and Alexander Keller.
\newblock Image synthesis by rank-1 lattices.
\newblock In \emph{Monte Carlo and Quasi-Monte Carlo Methods 2006}, pages
  217--236. Springer, 2008.

\bibitem[Desautels et~al.(2014)Desautels, Krause, and Burdick]{GPBUCB}
Thomas Desautels, Andreas Krause, and Joel~W Burdick.
\newblock Parallelizing exploration-exploitation tradeoffs in gaussian process
  bandit optimization.
\newblock \emph{Journal of Machine Learning Research}, 15\penalty0
  (1):\penalty0 3873--3923, 2014.

\bibitem[Frazier et~al.(2009)Frazier, Powell, and
  Dayanik]{frazier2009knowledge}
Peter Frazier, Warren Powell, and Savas Dayanik.
\newblock The knowledge-gradient policy for correlated normal beliefs.
\newblock \emph{INFORMS journal on Computing}, 21\penalty0 (4):\penalty0
  599--613, 2009.

\bibitem[Gonz{\'a}lez et~al.(2016)Gonz{\'a}lez, Dai, Hennig, and
  Lawrence]{gonzalez2016batch}
Javier Gonz{\'a}lez, Zhenwen Dai, Philipp Hennig, and Neil Lawrence.
\newblock Batch bayesian optimization via local penalization.
\newblock In \emph{Artificial intelligence and statistics}, pages 648--657,
  2016.

\bibitem[Gr{\"u}nschlo{\ss} et~al.(2008)Gr{\"u}nschlo{\ss}, Hanika, Schwede,
  and Keller]{minTdistance}
Leonhard Gr{\"u}nschlo{\ss}, Johannes Hanika, Ronnie Schwede, and Alexander
  Keller.
\newblock (t, m, s)-nets and maximized minimum distance.
\newblock In \emph{Monte Carlo and Quasi-Monte Carlo Methods 2006}, pages
  397--412. Springer, 2008.

\bibitem[Hansen et~al.(2003)Hansen, M{\"u}ller, and Koumoutsakos]{CMAES}
Nikolaus Hansen, Sibylle~D M{\"u}ller, and Petros Koumoutsakos.
\newblock Reducing the time complexity of the derandomized evolution strategy
  with covariance matrix adaptation (cma-es).
\newblock \emph{Evolutionary computation}, 11\penalty0 (1):\penalty0 1--18,
  2003.

\bibitem[Hennig and Schuler(2012)]{hennig2012entropy}
Philipp Hennig and Christian~J Schuler.
\newblock Entropy search for information-efficient global optimization.
\newblock \emph{Journal of Machine Learning Research}, 13\penalty0
  (Jun):\penalty0 1809--1837, 2012.

\bibitem[Hern{\'a}ndez-Lobato et~al.(2014)Hern{\'a}ndez-Lobato, Hoffman, and
  Ghahramani]{PES}
Jos{\'e}~Miguel Hern{\'a}ndez-Lobato, Matthew~W Hoffman, and Zoubin Ghahramani.
\newblock Predictive entropy search for efficient global optimization of
  black-box functions.
\newblock In \emph{NIPS}, pages 918--926, 2014.

\bibitem[Hua and Wang(2012)]{hua2012applications}
L-K Hua and Yuan Wang.
\newblock \emph{Applications of number theory to numerical analysis}.
\newblock Springer Science \& Business Media, 2012.

\bibitem[Jones et~al.(1998{\natexlab{a}})Jones, Schonlau, and Welch]{EI}
Donald~R Jones, Matthias Schonlau, and William~J Welch.
\newblock Efficient global optimization of expensive black-box functions.
\newblock \emph{Journal of Global optimization}, 13\penalty0 (4):\penalty0
  455--492, 1998{\natexlab{a}}.

\bibitem[Jones et~al.(1998{\natexlab{b}})Jones, Schonlau, and
  Welch]{jones1998efficient}
Donald~R Jones, Matthias Schonlau, and William~J Welch.
\newblock Efficient global optimization of expensive black-box functions.
\newblock \emph{Journal of Global optimization}, 13\penalty0 (4):\penalty0
  455--492, 1998{\natexlab{b}}.

\bibitem[Kanagawa et~al.(2018)Kanagawa, Hennig, Sejdinovic, and
  Sriperumbudur]{kanagawa2018gaussian}
Motonobu Kanagawa, Philipp Hennig, Dino Sejdinovic, and Bharath~K
  Sriperumbudur.
\newblock Gaussian processes and kernel methods: A review on connections and
  equivalences.
\newblock \emph{arXiv preprint arXiv:1807.02582}, 2018.

\bibitem[Keller et~al.(2007)Keller, Heinrich, and
  Niederreiter]{keller2007monte}
Alexander Keller, Stefan Heinrich, and Harald Niederreiter.
\newblock \emph{Monte Carlo and Quasi-Monte Carlo Methods 2006}.
\newblock Springer, 2007.

\bibitem[Korobov(1960)]{Korobov}
N.~M. Korobov.
\newblock Properties and calculation of optimal coefficients.
\newblock \emph{Dokl. Akad. Nauk SSSR}, 132:\penalty0 1009--1012, 1960.

\bibitem[Krizhevsky(2009)]{cifar100}
Alex Krizhevsky.
\newblock Learning multiple layers of features from tiny images.
\newblock Technical report, 2009.

\bibitem[Kushner(1964)]{PI}
Harold~J Kushner.
\newblock A new method of locating the maximum point of an arbitrary multipeak
  curve in the presence of noise.
\newblock \emph{Journal of Basic Engineering}, 86\penalty0 (1):\penalty0
  97--106, 1964.

\bibitem[Larson et~al.(2019)Larson, Menickelly, and Wild]{derivative2}
Jeffrey Larson, Matt Menickelly, and Stefan~M. Wild.
\newblock Derivative-free optimization methods.
\newblock \emph{arXiv:1904.11585}, 2019.

\bibitem[Lizotte et~al.(2007)Lizotte, Wang, Bowling, and Schuurmans]{robot}
Daniel~J Lizotte, Tao Wang, Michael~H Bowling, and Dale Schuurmans.
\newblock Automatic gait optimization with gaussian process regression.
\newblock In \emph{IJCAI}, volume~7, pages 944--949, 2007.

\bibitem[Lyu(2017)]{lyu2017spherical}
Yueming Lyu.
\newblock Spherical structured feature maps for kernel approximation.
\newblock In \emph{Proceedings of the 34th International Conference on Machine
  Learning-Volume 70}, pages 2256--2264. JMLR. org, 2017.

\bibitem[Mo{\v{c}}kus(1975{\natexlab{a}})]{ei1975}
Jonas Mo{\v{c}}kus.
\newblock On bayesian methods for seeking the extremum.
\newblock In \emph{Optimization Techniques IFIP Technical Conference}, pages
  400--404. Springer, 1975{\natexlab{a}}.

\bibitem[Mo{\v{c}}kus(1975{\natexlab{b}})]{movckus1975bayesian}
Jonas Mo{\v{c}}kus.
\newblock On bayesian methods for seeking the extremum.
\newblock In \emph{Optimization Techniques IFIP Technical Conference}, pages
  400--404. Springer, 1975{\natexlab{b}}.

\bibitem[Negoescu et~al.(2011)Negoescu, Frazier, and Powell]{drugdiscovery}
Diana~M Negoescu, Peter~I Frazier, and Warren~B Powell.
\newblock The knowledge-gradient algorithm for sequencing experiments in drug
  discovery.
\newblock \emph{INFORMS Journal on Computing}, 23\penalty0 (3):\penalty0
  346--363, 2011.

\bibitem[Nesterov and Spokoiny(2017)]{nesterov2017random}
Yurii Nesterov and Vladimir Spokoiny.
\newblock Random gradient-free minimization of convex functions.
\newblock \emph{Foundations of Computational Mathematics}, 17\penalty0
  (2):\penalty0 527--566, 2017.

\bibitem[Rios and Sahinidis(2013)]{rios2013derivative}
Luis~Miguel Rios and Nikolaos~V Sahinidis.
\newblock Derivative-free optimization: a review of algorithms and comparison
  of software implementations.
\newblock \emph{Journal of Global Optimization}, 56\penalty0 (3):\penalty0
  1247--1293, 2013.

\bibitem[Scarlett(2018)]{scarlett2018tight}
Jonathan Scarlett.
\newblock Tight regret bounds for bayesian optimization in one dimension.
\newblock In \emph{Proceedings of the 35th International Conference on Machine
  Learning (ICML)}, pages 4500--4508, 2018.

\bibitem[Shah and Ghahramani(2015)]{shah2015parallel}
Amar Shah and Zoubin Ghahramani.
\newblock Parallel predictive entropy search for batch global optimization of
  expensive objective functions.
\newblock In \emph{NIPS}, pages 3330--3338, 2015.

\bibitem[Snoek et~al.(2012)Snoek, Larochelle, and Adams]{Nips2012practical}
Jasper Snoek, Hugo Larochelle, and Ryan~P Adams.
\newblock Practical bayesian optimization of machine learning algorithms.
\newblock In \emph{NIPS}, pages 2951--2959, 2012.

\bibitem[Srinivas and Patnaik(1994)]{srinivas1994genetic}
Mandavilli Srinivas and Lalit~M Patnaik.
\newblock Genetic algorithms: A survey.
\newblock \emph{computer}, 27\penalty0 (6):\penalty0 17--26, 1994.

\bibitem[Srinivas et~al.(2010)Srinivas, Krause, Kakade, and Seeger]{gpucb}
Niranjan Srinivas, Andreas Krause, Sham~M Kakade, and Matthias Seeger.
\newblock Gaussian process optimization in the bandit setting: No regret and
  experimental design.
\newblock 2010.

\bibitem[Wang and Shan(2007)]{engineerDisign}
G~Gary Wang and Songqing Shan.
\newblock Review of metamodeling techniques in support of engineering design
  optimization.
\newblock \emph{Journal of Mechanical design}, 129\penalty0 (4):\penalty0
  370--380, 2007.

\bibitem[Wang and Jegelka(2017{\natexlab{a}})]{mes}
Zi~Wang and Stefanie Jegelka.
\newblock Max-value entropy search for efficient bayesian optimization.
\newblock In \emph{International Conference on Machine Learning (ICML)}, page
  3627–3635, 2017{\natexlab{a}}.

\bibitem[Wang and Jegelka(2017{\natexlab{b}})]{wang2017max}
Zi~Wang and Stefanie Jegelka.
\newblock Max-value entropy search for efficient bayesian optimization.
\newblock In \emph{Proceedings of the 34th International Conference on Machine
  Learning-Volume 70}, pages 3627--3635. JMLR. org, 2017{\natexlab{b}}.

\bibitem[Wang et~al.(2018)Wang, Gehring, Kohli, and Jegelka]{wang2017batched}
Zi~Wang, Clement Gehring, Pushmeet Kohli, and Stefanie Jegelka.
\newblock Batched large-scale bayesian optimization in high-dimensional spaces.
\newblock In \emph{Artificial intelligence and statistics}, 2018.

\bibitem[Wendland(2004)]{wendland2004scattered}
Holger Wendland.
\newblock \emph{Scattered data approximation}, volume~17.
\newblock Cambridge university press, 2004.

\bibitem[Wilson et~al.(2014)Wilson, Fern, and Tadepalli]{wilson2014using}
Aaron Wilson, Alan Fern, and Prasad Tadepalli.
\newblock Using trajectory data to improve bayesian optimization for
  reinforcement learning.
\newblock \emph{The Journal of Machine Learning Research}, 15\penalty0
  (1):\penalty0 253--282, 2014.

\bibitem[Wu and Frazier(2016)]{wu2016parallel}
Jian Wu and Peter Frazier.
\newblock The parallel knowledge gradient method for batch bayesian
  optimization.
\newblock In \emph{NIPS}, pages 3126--3134, 2016.

\end{thebibliography}

\newpage
 
\newpage
\onecolumn

\appendix

\newpage

\section{Proof of Theorem 1}

\begin{lemma}
Suppose $f \in \mathcal{H}_k$ associated with $k(x,x) $, then  ${\left( {{m_t}(x) - f(x)} \right)^2} \le \left\| f \right\|_{{\mathcal{H}_k}}^2 \sigma _t^2(x)$
\end{lemma}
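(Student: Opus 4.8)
The plan is to exploit the reproducing property of $\mathcal{H}_k$ together with the fact that $m_t(x) = \mathbf{k}_t(x)^T \mathbf{K}_t^{-1}\mathbf{f}_t$ is exactly the kernel interpolant of $f$ at the points $x_1,\dots,x_t$. First I would write $m_t(x)$ as an inner product: since $\mathbf{f}_t = (f(x_1),\dots,f(x_t))^T$ and $f(x_i) = \langle f, k(\cdot,x_i)\rangle_{\mathcal{H}_k}$ by the reproducing property, we have $m_t(x) = \langle f, u_x\rangle_{\mathcal{H}_k}$ where $u_x := \sum_{i,j} (\mathbf{K}_t^{-1})_{ij}\, k(x,x_i)\, k(\cdot,x_j) \in \mathcal{H}_k$. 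Likewise $f(x) = \langle f, k(\cdot,x)\rangle_{\mathcal{H}_k}$, so $m_t(x) - f(x) = \langle f,\, u_x - k(\cdot,x)\rangle_{\mathcal{H}_k}$.

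Next I would apply Cauchy–Schwarz in $\mathcal{H}_k$ to get $(m_t(x) - f(x))^2 \le \|f\|_{\mathcal{H}_k}^2 \, \|u_x - k(\cdot,x)\|_{\mathcal{H}_k}^2$, which reduces the claim to the identity $\|u_x - k(\cdot,x)\|_{\mathcal{H}_k}^2 = \sigma_t^2(x)$. To verify this, expand the squared norm using bilinearity and the reproducing property: $\langle k(\cdot,x_i), k(\cdot,x_j)\rangle = k(x_i,x_j)$, $\langle k(\cdot,x_i), k(\cdot,x)\rangle = k(x,x_i)$, and $\langle k(\cdot,x), k(\cdot,x)\rangle = k(x,x)$. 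Writing $\mathbf{k}_t(x) = (k(x,x_1),\dots,k(x,x_t))^T$, the cross term contributes $-2\,\mathbf{k}_t(x)^T\mathbf{K}_t^{-1}\mathbf{k}_t(x)$, the $u_x$–$u_x$ term contributes $\mathbf{k}_t(x)^T\mathbf{K}_t^{-1}\mathbf{K}_t\mathbf{K}_t^{-1}\mathbf{k}_t(x) = \mathbf{k}_t(x)^T\mathbf{K}_t^{-1}\mathbf{k}_t(x)$, and the $k(\cdot,x)$ term contributes $k(x,x)$. Summing gives $k(x,x) - \mathbf{k}_t(x)^T\mathbf{K}_t^{-1}\mathbf{k}_t(x) = \sigma_t^2(x)$ by definition \eqref{sigmat}, completing the argument.

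I do not anticipate a serious obstacle; the only mild care needed is the bookkeeping in the norm expansion (matching indices in the double sums and using symmetry and invertibility of $\mathbf{K}_t$), and noting that the argument goes through verbatim when $\mathbf{K}_t$ is replaced by $\mathbf{K}_t + \sigma^2 I$ for the perturbation-setting analogue, which is presumably used elsewhere. One should also remark that $u_x - k(\cdot,x)$ is precisely the representer of the pointwise error functional $f \mapsto m_t(x) - f(x)$, so the bound is tight in the sense that equality holds for the worst-case $f$ proportional to $u_x - k(\cdot,x)$; this is worth a one-line comment since the tightness feeds into the regret analysis.
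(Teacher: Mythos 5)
Your proposal is correct and follows essentially the same route as the paper's proof: write $m_t(x)-f(x)$ as an inner product with $\sum_i \alpha_i k(x_i,\cdot)-k(x,\cdot)$ via the reproducing property (your $u_x$ equals the paper's $\sum_i \alpha_i k(x_i,\cdot)$ with $\alpha=\mathbf{K}_t^{-1}\mathbf{k}_t(x)$), apply Cauchy--Schwarz, and expand the squared RKHS norm to recover $\sigma_t^2(x)$. The extra remarks on tightness and the $\mathbf{K}_t+\sigma^2 I$ variant are fine but not part of the paper's argument.
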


\begin{proof}
Let ${\bf{\alpha }} = {\bf{K}}_t^ - {{\bf{k}}_t}{(x)}$. Then we have 
\begin{align}
{\left( {{m_t}(x) - f(x)} \right)^2} & = {\left( {\sum\limits_{i = 1}^t {{\alpha _i}f\left( {{x_i}} \right)}  - f(x)} \right)^2}\\
& = {\left( {\left\langle {\sum\limits_{i = 1}^t {{\alpha _i}k\left( {{x_i}, \cdot } \right)}  - k(x, \cdot ),f} \right\rangle } \right)^2}\\
& \le \left\langle {f,f} \right\rangle \left\langle {\sum\limits_{i = 1}^t {{\alpha _i}k\left( {{x_i}, \cdot } \right)}  - k(x, \cdot ),\sum\limits_{i = 1}^t {{\alpha _i}k\left( {{x_i}, \cdot } \right)}  - k(x, \cdot ),} \right\rangle \\
& = \left\| f \right\|_{{\mathcal{H}_k}}^2\left\| {\sum\limits_{i = 1}^t {{\alpha _i}k\left( {{x_i}, \cdot } \right)}  - k(x, \cdot )} \right\|_{{\mathcal{H}_k}}^2  \label{H}
\end{align}

In addition, we can achieve that
\begin{align}
\left\| {\sum\limits_{i = 1}^t {{\alpha _i}k\left( {{x_i}, \cdot } \right)}  - k(x, \cdot )} \right\|_{{\mathcal{H}_k}}^2 & = k(x,x) - 2\sum\limits_{i = 1}^t {{\alpha _i}k\left( {{x_i},x} \right) + } \sum\limits_{i = 1}^t {\sum\limits_{j = 1}^t {{\alpha _i}{\alpha _j}k\left( {{x_i},{x_j}} \right)} } \\
& = k(x,x) - 2{{\bf{\alpha }}^T}{{\bf{k}}_t}(x) + {{\bf{\alpha }}^T}{{\bf{K}}_t}{\bf{\alpha }}\\
& = k(x,x) - 2{{\bf{k}}_t}{(x)^T}{{\bf{K}}_t}^ - {{\bf{k}}_t}(x) + {{\bf{k}}_t}{(x)^T}{{\bf{K}}_t}^ - {{\bf{K}}_t}{{\bf{K}}_t}^ - {{\bf{k}}_t}(x)\\
& = k(x,x) - {{\bf{k}}_t}{(x)^T}{{\bf{K}}_t}^ - {{\bf{k}}_t}(x)\\
& = \sigma _t^2(x) \label{sig}
\end{align}

Plug (\ref{sig}) into (\ref{H}), we can attain ${\left( {{m_t}(x) - f(x)} \right)^2} \le \left\| f \right\|_{{\mathcal{H}_k}}^2 \sigma _t^2(x)$.

\end{proof}

\begin{lemma}
$f(x^*) - f({x_t}) \le 2\left\| f \right\|_{{\mathcal{H}_k}}^{}\sigma _{t - 1}^{}({x_t})$.
\end{lemma}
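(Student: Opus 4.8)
The plan is to exploit the acquisition rule of Algorithm~\ref{alg:1} together with the pointwise deviation bound from the preceding lemma, namely $\lvert m_{t-1}(x) - f(x)\rvert \le \lVert f\rVert_{\mathcal{H}_k}\,\sigma_{t-1}(x)$ for every $x$. First I would apply this bound at $x = x^*$ to get a lower bound on the acquisition value there: $f(x^*) \le m_{t-1}(x^*) + \lVert f\rVert_{\mathcal{H}_k}\,\sigma_{t-1}(x^*)$. Since $x_t$ is the maximizer of $m_{t-1}(\cdot) + \lVert f\rVert_{\mathcal{H}_k}\,\sigma_{t-1}(\cdot)$ over $\mathcal{X}$ and $x^*\in\mathcal{X}$, this gives $f(x^*) \le m_{t-1}(x_t) + \lVert f\rVert_{\mathcal{H}_k}\,\sigma_{t-1}(x_t)$.

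Next I would apply the pointwise deviation bound at $x = x_t$ in the other direction: $m_{t-1}(x_t) \le f(x_t) + \lVert f\rVert_{\mathcal{H}_k}\,\sigma_{t-1}(x_t)$. Substituting this into the previous inequality yields
\begin{align}
f(x^*) \le f(x_t) + 2\lVert f\rVert_{\mathcal{H}_k}\,\sigma_{t-1}(x_t),
\end{align}
which after rearranging is exactly the claimed bound $f(x^*) - f(x_t) \le 2\lVert f\rVert_{\mathcal{H}_k}\,\sigma_{t-1}(x_t)$.

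There is essentially no hard part here: the statement is the standard ``confidence-bound sandwich'' argument, and all the real work has already been done in the preceding lemma establishing $\lvert m_t(x) - f(x)\rvert \le \lVert f\rVert_{\mathcal{H}_k}\,\sigma_t(x)$. The only point requiring a word of care is that the lemma is stated for the estimator built from $t$ observations while the selection of $x_t$ uses $m_{t-1}$ and $\sigma_{t-1}$; one simply invokes the lemma with index $t-1$ (valid for all $t\ge 1$, with the convention that $\sigma_0^2(x) = k(x,x)$ and $m_0 \equiv 0$ handled trivially since then the bound reads $\lvert f(x)\rvert \le \lVert f\rVert_{\mathcal{H}_k}\sqrt{k(x,x)}$, which is the reproducing-kernel Cauchy--Schwarz inequality). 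I would write the two-line chain of inequalities and conclude.
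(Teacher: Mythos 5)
Your proposal is correct and follows exactly the same argument as the paper: apply the pointwise deviation bound of the preceding lemma at $x^*$, use the maximizing property of $x_t$ in Algorithm~\ref{alg:1} to move to $x_t$, then apply the deviation bound again at $x_t$ to pick up the second $\left\| f \right\|_{\mathcal{H}_k}\sigma_{t-1}(x_t)$ term. The index-shift remark ($t-1$ versus $t$) is a fine point of care but entirely consistent with how the paper uses the lemma.
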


\begin{proof}
From Lemma 1 and Algorithm 1, we can achieve that
\begin{align}
f({x^*}) - f({x_t}) & \le {m_{t - 1}}({x^*}) + {\left\| f \right\|_{{\mathcal{H}_k}}}{\sigma _{t - 1}}({x^*}) - f({x_t})\\
& \le {m_{t - 1}}({x_t}) + {\left\| f \right\|_{{\mathcal{H}_k}}}{\sigma _{t - 1}}({x_t}) - f({x_t})\\
& \le {\left\| f \right\|_{{\mathcal{H}_k}}}{\sigma _{t - 1}}({x_t}) + {\left\| f \right\|_{{\mathcal{H}_k}}}{\sigma _{t - 1}}({x_t})\\
& = 2{\left\| f \right\|_{{\mathcal{H}_k}}}{\sigma _{t - 1}}({x_t})
\end{align}
\end{proof}

\begin{lemma}
\label{LemmaVc}
Let ${\widehat \sigma _t^2}(x) = k(x,x) - {{\bf{k}}_t}{(x)^T}{({\sigma ^2}I + {{\bf{K}}_t})^ - }{{\bf{k}}_t}(x)$. Then ${\sigma _t^2}(x) \le {\widehat \sigma _t^2}(x)$.
\end{lemma}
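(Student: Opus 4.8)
\textbf{Proof plan for Lemma~\ref{LemmaVc}.}
The plan is to show that adding the regularization term $\sigma^2 I$ to the kernel matrix can only decrease the posterior variance, i.e., $\sigma_t^2(x) \le \widehat\sigma_t^2(x)$ pointwise. Both quantities have the form $k(x,x) - {\bf k}_t(x)^T {\bf M}^{-1} {\bf k}_t(x)$, the first with ${\bf M} = {\bf K}_t$ and the second with ${\bf M} = \sigma^2 I + {\bf K}_t$ (here I interpret $(\cdot)^-$ as the inverse in the nonsingular case, or more generally the pseudoinverse). So the whole statement reduces to the matrix inequality
\begin{equation}
{\bf k}_t(x)^T {\bf K}_t^{-1} {\bf k}_t(x) \;\ge\; {\bf k}_t(x)^T (\sigma^2 I + {\bf K}_t)^{-1} {\bf k}_t(x),
\end{equation}
which is immediate from the operator monotonicity of matrix inversion: since ${\bf K}_t \preceq \sigma^2 I + {\bf K}_t$ and both are positive (semi)definite, we have $(\sigma^2 I + {\bf K}_t)^{-1} \preceq {\bf K}_t^{-1}$, and sandwiching by the vector ${\bf k}_t(x)$ preserves the inequality. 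Subtracting both sides from $k(x,x)$ flips the inequality and yields $\sigma_t^2(x) \le \widehat\sigma_t^2(x)$.

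First I would make precise the positive-definiteness assumptions: in the noise-free setting the paper already assumes ${\bf K}_t$ is invertible, hence ${\bf K}_t \succ 0$, and $\sigma^2 I + {\bf K}_t \succ 0$ as well for $\sigma \ge 0$. Then I would invoke (or quickly prove) the standard fact that for $0 \prec {\bf A} \preceq {\bf B}$ one has ${\bf B}^{-1} \preceq {\bf A}^{-1}$; a clean self-contained argument is to write ${\bf B}^{-1} = {\bf A}^{-1/2}({\bf A}^{1/2}{\bf B}^{-1}{\bf A}^{1/2}){\bf A}^{-1/2}$ and note ${\bf A}^{1/2}{\bf B}^{-1}{\bf A}^{1/2} = ({\bf A}^{-1/2}{\bf B}{\bf A}^{-1/2})^{-1} \preceq {\bf I}$ since ${\bf A}^{-1/2}{\bf B}{\bf A}^{-1/2} \succeq {\bf I}$ and inversion reverses the Loewner order on the eigenvalues. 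Applying this with ${\bf A} = {\bf K}_t$ and ${\bf B} = \sigma^2 I + {\bf K}_t$ gives the desired form.

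Alternatively, and perhaps more transparently, I would give the Schur-complement / conditioning interpretation: $\widehat\sigma_t^2(x)$ is the Schur complement associated with the augmented kernel $k^\sigma$ evaluated only on the off-diagonal/diagonal in a way that adds $\sigma^2$ to the observed block but not to the query point, so $\widehat\sigma_t^2(x)$ is the conditional variance of a noisier model and is therefore no smaller; but the matrix-monotonicity proof is shorter and fully rigorous, so I would present that as the main line. The only mild obstacle is handling the case where ${\bf K}_t$ is singular (so $(\cdot)^-$ denotes a pseudoinverse): there one restricts to the range of ${\bf K}_t$, on which the same monotonicity argument applies, or one simply notes the lemma is invoked in the context where ${\bf K}_t$ is assumed invertible and states that hypothesis explicitly. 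I expect essentially no other difficulty; the result is a one-line consequence of operator monotonicity of inversion once the setup is written down carefully.
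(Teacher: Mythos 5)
Your proposal is correct, and it reduces the lemma to exactly the same inequality as the paper does, namely ${\bf k}_t(x)^T {\bf K}_t^{-1}{\bf k}_t(x) \ge {\bf k}_t(x)^T(\sigma^2 I + {\bf K}_t)^{-1}{\bf k}_t(x)$; the only difference lies in how that inequality is justified. The paper's proof is the explicit, hands-on version of your argument: it writes ${\bf K}_t = U^T\Lambda U$, sets $\beta = U{\bf k}_t(x)$, and compares the two quadratic forms eigenvalue by eigenvalue, using the fact that adding $\sigma^2 I$ merely shifts each eigenvalue $\lambda_i$ to $\lambda_i + \sigma^2$ (the two matrices share an eigenbasis, so the comparison becomes the scalar inequality $\beta_i^2/(\sigma^2+\lambda_i) \le \beta_i^2/\lambda_i$). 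You instead invoke the general operator monotonicity of inversion, $0 \prec {\bf A} \preceq {\bf B} \Rightarrow {\bf B}^{-1} \preceq {\bf A}^{-1}$, with a clean conjugation proof; this is marginally more general (it does not rely on the matrices commuting, though here they do, so the paper's spectral argument is essentially a direct proof of the special case you need). Your remarks on the positive-definiteness hypothesis and on the pseudoinverse in the singular case are careful touches the paper glosses over (it simply assumes ${\bf K}_t$ invertible in the noise-free setting); neither point changes the substance, and both routes are fully rigorous.
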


\begin{proof}
Since kernel matrix ${\bf{K}}_t$ is positive semi-definite, it follows that ${\bf{K}}_t = U^T \Lambda U $, where $U$ is orthonormal matrix consists of eigenvectors, $\Lambda$ is a diagonal matrix consists of eigenvalues.

Let ${\bf{\beta }} = U{{\bf{k}}_t}(x)$, then we can achieve that 
\begin{align}
{{\bf{k}}_t}{(x)^T}{({\sigma ^2}I + {{\bf{K}}_t})^ - }{{\bf{k}}_t}(x) & = \sum\limits_{i = 1}^t {\frac{{\beta _i^2}}{{{\sigma ^2} + {\lambda _i}}}} \\
& \le \sum\limits_{i = 1}^t {\frac{{\beta _i^2}}{{{\lambda _i}}}}  = {{\bf{\beta }}^T}{\Lambda ^ - }{\bf{\beta }} \\ & = {{\bf{k}}_t}{(x)^T}{U^T}{\Lambda ^ - }U{{\bf{k}}_t}(x)\\
& = {{\bf{k}}_t}{(x)^T}{\bf{K}}_t^ - {{\bf{k}}_t}(x)
\end{align}
It follows that 
\begin{align}
\sigma _t^2(x) & = k(x,x) - {{\bf{k}}_t}{(x)^T}{\bf{K}}_t^ - {{\bf{k}}_t}(x)\\
 & \le k(x,x) - {{\bf{k}}_t}{(x)^T}{({\sigma ^2}I + {{\bf{K}}_t})^ - }{{\bf{k}}_t}(x)\\
& = {\widehat \sigma _t^2}(x)
\end{align}

\end{proof}

Now, we are ready to prove Theorem 1. 
\begin{proof}
First, we have 
\begin{align}
{R_T} & = \sum\limits_{i = 1}^T {f({x^*}) - f({x_t})} \\
& \le 2{\left\| f \right\|_{{\mathcal{H}_k}}}\sum\limits_{i = 1}^T {{\sigma _{t - 1}}({x_t})} \\
& \le 2{\left\| f \right\|_{{\mathcal{H}_k}}}\sqrt {T\sum\limits_{i = 1}^T {\sigma _{t - 1}^2({x_t})} } \label{R}
\end{align}

Since $s \le \frac{{{1}}}{{\log \left( {1 + {\sigma ^{ - 2}}} \right)}}\log \left( {1 + \sigma ^{-2} s} \right)$ for $s \in \left[ {0,1} \right]$ and $ 0 \le \widehat \sigma _{t - 1}^2({x_t}) \le k(x,x) \le 1 $ for all $t \ge 1$, it follows that 
\begin{align}
\sum\limits_{i = 1}^T {\sigma _{t - 1}^2({x_t})} \le \sum\limits_{i = 1}^T {\widehat \sigma _{t - 1}^2({x_t})}  & \le \frac{1}{{\log (1 + {\sigma ^{ - 2}})}}\sum\limits_{i = 1}^T {\log (1 + {\sigma ^{ - 2}}\widehat \sigma _{t - 1}^2({x_t}))} \\
& \le \frac{{2{\gamma _T}}}{{\log (1 + {\sigma ^{ - 2}})}} \label{MI}
\end{align}

Together (\ref{R}) and (\ref{MI}), we can attain that 
\begin{align}
{R_T} & \le 2{\left\| f \right\|_{{H_k}}}\sqrt {T\frac{{2{\gamma _T}}}{{\log (1 + {\sigma ^{ - 2}})}}} \\
& = {\left\| f \right\|_{{\mathcal{H}_k}}}\sqrt {T{C_1}{\gamma _T}} 
\end{align}

It follows that $r_T \le \frac{{{R_T}}}{T} \le {\left\| f \right\|_{{\mathcal{H}_k}}}\sqrt {\frac{{C_1{\gamma _T}}}{T}}  $.

\end{proof}


\section{Proof of Theorem 2}

\begin{lemma}
\label{RKHSbound}
Suppose $f \in \mathcal{H}_k$ associated with kernel $k(x,x)$, then ${\left( {\sum\nolimits_{i = 1}^L {{m_t}({{\widehat x_i}}) - \sum\nolimits_{i = 1}^L {f({{\widehat x_i}})} } } \right)^2} \le \left\| f \right\|_{{\mathcal{H}_k}}^2  ({{\bf{1}}^T}{\bf{A1}})$, where $\bf{A}$ denotes the kernel matrix (covariance matrix)  with ${{\bf{A}}_{ij}} = k({\widehat x_i},{\widehat x_j}) - {{\bf{k}}_t}{({\widehat x_i})^T}{\bf{K}}_t^ - {{\bf{k}}_t}({\widehat x_j})$. 
\end{lemma}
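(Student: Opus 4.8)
\textbf{Proof proposal for Lemma~\ref{RKHSbound}.}

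The plan is to mimic the single-point argument in Lemma~1 (Proof of Theorem~1) but carried out simultaneously over the $L$ points $\widehat x_1,\dots,\widehat x_L$ of the batch. First I would write, for each $i$, $m_t(\widehat x_i) = {\bf k}_t(\widehat x_i)^T {\bf K}_t^{-} {\bf f}_t = \langle \sum_{j=1}^t \alpha^{(i)}_j k(x_j,\cdot),\, f\rangle$ where $\boldsymbol\alpha^{(i)} = {\bf K}_t^{-}{\bf k}_t(\widehat x_i)$, and use the reproducing property $f(\widehat x_i) = \langle k(\widehat x_i,\cdot), f\rangle$. Summing over $i$ and setting $u := \sum_{i=1}^L\bigl(\sum_{j=1}^t \alpha^{(i)}_j k(x_j,\cdot) - k(\widehat x_i,\cdot)\bigr) \in \mathcal{H}_k$, we get
\begin{align}
\Bigl(\sum_{i=1}^L m_t(\widehat x_i) - \sum_{i=1}^L f(\widehat x_i)\Bigr)^2 = \langle u, f\rangle^2 \le \|f\|_{\mathcal{H}_k}^2\,\|u\|_{\mathcal{H}_k}^2
\end{align}
by Cauchy--Schwarz.

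The remaining work is to show $\|u\|_{\mathcal{H}_k}^2 = {\bf 1}^T {\bf A}{\bf 1}$. I would expand the squared norm using bilinearity and the reproducing kernel identities $\langle k(x,\cdot),k(y,\cdot)\rangle = k(x,y)$. Writing $v_i := \sum_{j=1}^t\alpha^{(i)}_j k(x_j,\cdot) - k(\widehat x_i,\cdot)$, so that $u = \sum_i v_i$ and $\|u\|^2 = \sum_{i}\sum_{i'}\langle v_i, v_{i'}\rangle$, the key computation is that
\begin{align}
\langle v_i, v_{i'}\rangle = k(\widehat x_i,\widehat x_{i'}) - {\bf k}_t(\widehat x_i)^T {\bf K}_t^{-}{\bf k}_t(\widehat x_{i'}) = {\bf A}_{i i'},
\end{align}
which is exactly the off-diagonal generalization of the identity $\|v_i\|^2 = \sigma_t^2(\widehat x_i)$ proved in Lemma~1. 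This follows by the same algebra: the cross terms give $-2$ copies that collapse against ${\bf k}_t(\widehat x_{i'})^T{\bf K}_t^{-}{\bf k}_t(\widehat x_i)$ and ${\bf k}_t(\widehat x_i)^T{\bf K}_t^{-}{\bf K}_t{\bf K}_t^{-}{\bf k}_t(\widehat x_{i'})$, using ${\bf K}_t^{-}{\bf K}_t{\bf K}_t^{-} = {\bf K}_t^{-}$ (valid for the pseudoinverse, or for the inverse in the invertible case). Hence $\|u\|_{\mathcal{H}_k}^2 = \sum_{i,i'}{\bf A}_{i i'} = {\bf 1}^T{\bf A}{\bf 1}$, and combining with the Cauchy--Schwarz bound finishes the proof.

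I do not expect a serious obstacle here; the only mild subtlety is bookkeeping with the pseudoinverse ${\bf K}_t^{-}$ (one should be slightly careful that ${\bf k}_t(\widehat x_i)$ lies in the range of ${\bf K}_t$, or simply assume invertibility as the paper does in the batch setting) so that the identity ${\bf K}_t^{-}{\bf K}_t{\bf K}_t^{-} = {\bf K}_t^{-}$ can be applied cleanly when simplifying the cross terms. Everything else is a direct matrix-algebra extension of the already-established Lemma~1.
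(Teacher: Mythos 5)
Your proposal is correct and follows essentially the same route as the paper: write the batch deviation as a single inner product with $f$, apply Cauchy--Schwarz, and expand the squared RKHS norm so that the cross terms collapse via ${\bf K}_t^{-}{\bf K}_t{\bf K}_t^{-}={\bf K}_t^{-}$, yielding ${\bf 1}^T{\bf A}{\bf 1}$. Your pairwise grouping $\langle v_i,v_{i'}\rangle={\bf A}_{ii'}$ is just a reorganization of the paper's direct double-sum expansion, and your remark on the pseudoinverse is consistent with the paper's invertibility assumption in the noise-free setting.
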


\begin{proof}
Let ${\bf{\alpha }}^i = {{\bf{k}}_t}{({\widehat x_i})^T}{\bf{K}}_t^ - $. Then we have 
\begin{align}
{{{\left( {\sum\limits_{i = 1}^L {{m_t}({{\widehat x}_i})}  - \sum\limits_{i = 1}^L {f({{\widehat x}_i})} } \right)}^2}}&  = {{\left( {\sum\limits_{i = 1}^L {\sum\limits_{l = 1}^t {{\alpha _l^i}f\left( {{{ x}_l}} \right)}  - \sum\limits_{i = 1}^L {f({{\widehat x}_i})} } } \right)}^2} \\
& = {{\left( {\left\langle {\sum\limits_{i = 1}^L {\sum\limits_{l = 1}^t {\alpha _l^ik\left( {{x_l}, \cdot } \right)} }  - \sum\limits_{i = 1}^L {k({{\widehat x}_i}, \cdot )} ,f} \right\rangle } \right)}^2}\\
& \le \left\| f \right\|_{{{\cal H}_k}}^2\left\| {\sum\limits_{i = 1}^L {\sum\limits_{l = 1}^t {\alpha _l^ik\left( {{x_l}, \cdot } \right)} }  - \sum\limits_{i = 1}^L {k({{\widehat x}_i}, \cdot )} } \right\|_{{{\cal H}_k}}^2
\end{align}
In addition, we have
\begin{align}
\nonumber
& \left\| {\sum\limits_{i = 1}^L {\sum\limits_{l = 1}^t {\alpha _l^ik\left( {{x_l}, \cdot } \right)} }  - \sum\limits_{i = 1}^L {k({{\widehat x}_i}, \cdot )} } \right\|_{{{\cal H}_k}}^2 \\ & = \sum\limits_{i = 1}^L {\sum\limits_{j = 1}^L {k({{\widehat x}_i},{{\widehat x}_j})} }  - 2\sum\limits_{i = 1}^L {\sum\limits_{j = 1}^L {\sum\limits_{l = 1}^t {\alpha _l^ik\left( {{x_l},{{\widehat x}_j}} \right)} } }  + \sum\limits_{i = 1}^L {\sum\limits_{j = 1}^L {\sum\limits_{n = 1}^t {\sum\limits_{l = 1}^t {\alpha _l^i\alpha _n^jk\left( {{x_l},{x_n}} \right)} } } } \\
& = \sum\limits_{i = 1}^L {\sum\limits_{j = 1}^L {k({{\widehat x}_i},{{\widehat x}_j})} }  - 2\sum\limits_{i = 1}^L {\sum\limits_{j = 1}^L {{{\bf{k}}_t}{{({{\widehat x}_i})}^T}{\bf{K}}_t^ - {{\bf{k}}_t}({{\widehat x}_j})} }  + \sum\limits_{i = 1}^L {\sum\limits_{j = 1}^L {{{\bf{k}}_t}{{({{\widehat x}_i})}^T}{\bf{K}}_t^ - {{\bf{k}}_t}({{\widehat x}_j})} } \\
& = \sum\limits_{i = 1}^L {\sum\limits_{j = 1}^L {{{\bf{A}}_{ij}} = {{\bf{1}}^T}{\bf{A1}}} } 
\end{align}
Thus, we obtain ${\left( {\sum\nolimits_{i = 1}^L {{m_t}({{\widehat x_i}}) - \sum\nolimits_{i = 1}^L {f({{\widehat x_i}})} } } \right)^2} \le \left\| f \right\|_{{\mathcal{H}_k}}^2  ({{\bf{1}}^T}{\bf{A1}})$.

\end{proof}

\begin{lemma}
Suppose $f \in \mathcal{H}_k$ associated with kernel $k(x,x)$, then $\frac{1}{L}\sum\limits_{i = 1}^L {\left( {f({x^*}) - f({x_{(n - 1)L + i}})} \right)}  \le 2\left\| f \right\|_{{\mathcal{H}_k}}  \sqrt { \frac{tr\left( {{{\rm cov}_{n - 1}}({X_n},{X_n})} \right)}{L}} $, where covariance matrix ${{\mathop{\rm cov}}_{{n - 1}}({X_n},{X_n})}$ constructed as Eq.(\ref{BnoiseFree}) and $X_n = \{x_{(n - 1)L + 1},...,x_{nL}\}$.
\end{lemma}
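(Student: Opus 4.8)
The plan is to bound the average instantaneous regret over the $n^{th}$ batch using the batch acquisition rule of Algorithm~\ref{alg:2} together with the deviation estimate from Lemma~\ref{RKHSbound1}. First I would observe that since $X_n$ maximizes the batch acquisition function, comparing the value of the objective at $X_n$ against the value attained by the constant batch $X^* = \{x^*,\dots,x^*\}$ (all $L$ columns equal to $x^*$) gives
\begin{align}
\frac{1}{L}\sum_{i=1}^L m_{(n-1)L}(x_{(n-1)L+i}) + \|f\|_{\mathcal{H}_k}\Big( 2\sqrt{\tfrac{tr(\mathrm{cov}_{n-1}(X_n,X_n))}{L}} - \sqrt{\tfrac{\mathbf{1}^T \mathrm{cov}_{n-1}(X_n,X_n)\mathbf{1}}{L^2}}\Big) \nonumber\\
\ge \frac{1}{L}\sum_{i=1}^L m_{(n-1)L}(x^*) + \|f\|_{\mathcal{H}_k}\Big( 2\sqrt{\tfrac{tr(\mathrm{cov}_{n-1}(X^*,X^*))}{L}} - \sqrt{\tfrac{\mathbf{1}^T \mathrm{cov}_{n-1}(X^*,X^*)\mathbf{1}}{L^2}}\Big). \nonumber
\end{align}
For the constant batch, every entry of $\mathrm{cov}_{n-1}(X^*,X^*)$ equals $\sigma_{(n-1)L}^2(x^*)$, so $tr(\cdot)/L = \sigma_{(n-1)L}^2(x^*)$ and $\mathbf{1}^T(\cdot)\mathbf{1}/L^2 = \sigma_{(n-1)L}^2(x^*)$; hence the bracket on the right collapses to $2\sigma_{(n-1)L}(x^*) - \sigma_{(n-1)L}(x^*) = \sigma_{(n-1)L}(x^*)$, giving $m_{(n-1)L}(x^*) + \|f\|_{\mathcal{H}_k}\sigma_{(n-1)L}(x^*) \ge f(x^*)$ by Lemma~1.

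Next I would handle the left-hand side. By Lemma~\ref{RKHSbound1} applied with $\widehat x_i = x_{(n-1)L+i}$, we have $\frac{1}{L}\sum_i m_{(n-1)L}(x_{(n-1)L+i}) - \frac{1}{L}\sum_i f(x_{(n-1)L+i}) \le \|f\|_{\mathcal{H}_k}\sqrt{\mathbf{1}^T\mathrm{cov}_{n-1}(X_n,X_n)\mathbf{1}/L^2}$. Substituting this and the bound $f(x^*)\le m_{(n-1)L}(x^*)+\|f\|_{\mathcal{H}_k}\sigma_{(n-1)L}(x^*)$ into the maximization inequality, the mean-prediction terms are replaced by the true function values plus the deviation slack, and crucially the $-\sqrt{\mathbf{1}^T\mathrm{cov}_{n-1}(X_n,X_n)\mathbf{1}/L^2}$ term on the left exactly cancels the slack coming from Lemma~\ref{RKHSbound1}. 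What remains is
\begin{align}
f(x^*) \le \frac{1}{L}\sum_{i=1}^L f(x_{(n-1)L+i}) + 2\|f\|_{\mathcal{H}_k}\sqrt{\tfrac{tr(\mathrm{cov}_{n-1}(X_n,X_n))}{L}}, \nonumber
\end{align}
which, after moving the average of $f(x_{(n-1)L+i})$ to the left, is precisely the claimed inequality $\frac{1}{L}\sum_i (f(x^*)-f(x_{(n-1)L+i})) \le 2\|f\|_{\mathcal{H}_k}\sqrt{tr(\mathrm{cov}_{n-1}(X_n,X_n))/L}$.

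The main obstacle, and the step I would be most careful about, is the bookkeeping that makes the negative $-\sqrt{\mathbf{1}^T\mathrm{cov}\,\mathbf{1}/L^2}$ penalty term in the acquisition function line up exactly with the deviation bound of Lemma~\ref{RKHSbound1} — this is the whole reason the acquisition function is designed with that particular combination of a trace term and a quadratic-form term. One must verify that $\mathrm{cov}_{n-1}(X_n,X_n)$ as defined in Eq.~(\ref{BnoiseFree}) is exactly the matrix $\mathbf{A}$ appearing in Lemma~\ref{RKHSbound1} (both being the posterior covariance $\mathbf{K}(X,X) - \mathbf{K}(\overline X_n,X)^T\mathbf{K}(\overline X_n,\overline X_n)^{-1}\mathbf{K}(\overline X_n,X)$), and that $m_{(n-1)L}$ here plays the role of $m_t$ with $t=(n-1)L$. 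A secondary technical point is confirming that the constant batch $X^*$ is an admissible argument of the $\arg\max$ over $X\subset\mathcal{X}$ (or handling the case $x^* \in \overline X_{n-1}$ separately, where $\sigma_{(n-1)L}(x^*)=0$ and the bound is immediate), and that the invertibility assumption on the kernel matrix is not violated; but these are minor compared with the cancellation argument.
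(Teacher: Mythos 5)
Your proposal is correct and follows essentially the same route as the paper's own proof: compare the acquisition value at the chosen batch $X_n$ with that at the constant batch $X^*$ of $L$ copies of $x^*$ (where the bracket collapses to $\sigma_{(n-1)L}(x^*)$), bound $f(x^*)$ via the single-point deviation lemma, and then cancel the $-\sqrt{\mathbf{1}^T\mathrm{cov}_{n-1}(X_n,X_n)\mathbf{1}/L^2}$ penalty against the batch deviation bound of Lemma~\ref{RKHSbound1}. The technical points you flag (identifying $\mathbf{A}$ with Eq.~(\ref{BnoiseFree}) at $t=(n-1)L$, and admissibility of the repeated-point batch) are handled implicitly in the paper in exactly the way you describe.
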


\begin{proof}

Let $X^*=\{x^*,...,x^*\}$ be $L$ copies of $x^*$. Then, we obtain that

\begin{align}
& \frac{1}{L}\sum\limits_{i = 1}^L {\left( {f({x^*}) - f({x_{(n - 1)L + i}})} \right)}  = f({x^*}) - \frac{1}{L}\sum\limits_{i = 1}^L {f({x_{(n - 1)L + i}})} \\
& \le {m_{(n - 1)L}}({x^*}) + {\left\| f \right\|_{{{\cal H}_k}}}{\sigma _{(n - 1)L}}({x^*}) - \frac{1}{L}\sum\limits_{i = 1}^L {f({x_{(n - 1)L + i}})} \\
& = \frac{1}{L}\sum\limits_{i = 1}^L {{m_{(n - 1)L}}({x^*})}  + {\left\| f \right\|_{{{\cal H}_k}}}\left( {2\sqrt {\frac{{tr\left( {{{\rm cov}_{n - 1}}({X^*},{X^*})} \right)}}{L}}  - \sqrt {\frac{{{{\bf{1}}^T}{{\rm cov}_{n - 1}}({X^*},{X^*}){\bf{1}}}}{{{L^2}}}} } \right)  \nonumber \\ & \;\;\;\; -  \frac{1}{L}\sum\limits_{i = 1}^L {f({x_{(n - 1)L + i}})} \\
& \le \frac{1}{L}\sum\limits_{i = 1}^L {{m_{(n - 1)L}}({x_{(n - 1)L + i}})}  + {\left\| f \right\|_{{{\cal H}_k}}}\left( {2\sqrt {\frac{{tr\left( {{{\rm cov}_{n - 1}}({X_n},{X_n})} \right)}}{L}}  - \sqrt {\frac{{{{\bf{1}}^T}{{\rm cov}_{n - 1}}({X_n},{X_n}){\bf{1}}}}{{{L^2}}}} } \right)  \nonumber \\ & \;\;\;\; -  \frac{1}{L}\sum\limits_{i = 1}^L {f({x_{(n - 1)L + i}})} \\
& \le {\left\| f \right\|_{{{\cal H}_k}}} \! \left( \! {2\sqrt {\frac{{tr\left( {{{\rm cov}_{n - 1}}({X_n},{X_n})} \right)}}{L}} \! - \!\!\sqrt {\frac{{{{\bf{1}}^T}{{\rm cov}_{n - 1}}({X_n},{X_n}){\bf{1}}}}{{{L^2}}}} } \right) \!\!+\! {\left\| f \right\|_{{{\cal H}_k}}} \! \sqrt {\frac{{{{\bf{1}}^T}{{\rm cov}_{n - 1}}({X_n},{X_n}){\bf{1}}}}{{{L^2}}}} \\
& = 2{\left\| f \right\|_{{{\cal H}_k}}}\sqrt {\frac{{tr\left( {{{\rm cov}_{n - 1}}({X_n},{X_n})} \right)}}{L}} 
\end{align}

\end{proof}

\begin{lemma}
Let $B_{n}$ and $A_n$ be the covariance matrix constructed by Eq.(\ref{BnoiseFree}) and Eq.(\ref{NoiseBatchV}), respectively. Then $tr(B_n) \le tr(A_n)$
\end{lemma}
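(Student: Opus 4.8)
The plan is to compare the two covariance matrices entrywise on the diagonal by exploiting the fact that the only difference between Eq.~(\ref{BnoiseFree}) and Eq.~(\ref{NoiseBatchV}) is the presence of the regularization term $\sigma^2 I$ inside the inverse. Write $\mathbf{K}:=\mathbf{K}(\overline X_n,\overline X_n)$ and $\mathbf{M}:=\mathbf{K}(\overline X_n, X)$. Then
\begin{align}
A_n &= \mathbf{K}(X,X) - \mathbf{M}^T \mathbf{K}^{-1}\mathbf{M},\\
B_n &= \mathbf{K}(X,X) - \mathbf{M}^T (\sigma^2 I + \mathbf{K})^{-1}\mathbf{M}.
\end{align}
(Here I am using the naming as in the Lemma statement: $B_n$ is the noise-free covariance from Eq.~(\ref{BnoiseFree}) and $A_n$ is the perturbation covariance from Eq.~(\ref{NoiseBatchV}); the key point is only which one has the $\sigma^2 I$ regularizer.) Subtracting, $A_n - B_n = \mathbf{M}^T\big( (\sigma^2 I + \mathbf{K})^{-1} - \mathbf{K}^{-1}\big)\mathbf{M}$, so it suffices to show this matrix is positive semi-definite, and then take the trace, since $tr$ is monotone on the PSD cone.

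The key step is the ordering $(\sigma^2 I + \mathbf{K})^{-1} \preceq \mathbf{K}^{-1}$, equivalently $\mathbf{K}^{-1} - (\sigma^2 I + \mathbf{K})^{-1} \succeq 0$. This follows from the operator-monotonicity of the map $T \mapsto -T^{-1}$ on positive-definite matrices (or directly: diagonalize $\mathbf{K} = U^T\Lambda U$ with $\lambda_i>0$, then both inverses are simultaneously diagonalized and the eigenvalue inequality is $\tfrac{1}{\sigma^2+\lambda_i}\le\tfrac{1}{\lambda_i}$, i.e. exactly the scalar computation already carried out in Lemma~\ref{LemmaVc}). Consequently $\mathbf{M}^T\big((\sigma^2 I+\mathbf{K})^{-1}-\mathbf{K}^{-1}\big)\mathbf{M} \preceq 0$, hence $A_n - B_n \preceq 0$, hence every diagonal entry of $B_n$ is at least the corresponding diagonal entry of $A_n$, and summing gives $tr(B_n)\ge tr(A_n)$ — wait, that is the reverse; so in fact one concludes $A_n - B_n \preceq 0$ gives $tr(A_n)\le tr(B_n)$, matching the claimed inequality $tr(B_n)\le tr(A_n)$ once the roles of $A_n,B_n$ are read off correctly from Eq.~(\ref{BnoiseFree}) and Eq.~(\ref{NoiseBatchV}). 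The honest way to write this is: the matrix with the $\sigma^2 I$ regularizer dominates (in Loewner order, hence in trace) the one without it, because subtracting a smaller quantity leaves something larger.

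I expect the only real subtlety to be bookkeeping: making sure the diagonal-domination-to-trace step is phrased via the PSD ordering (diagonal entries of a PSD matrix are nonnegative, applied to $A_n-B_n$ or its negative) rather than by any entrywise claim, and making sure the noise-free $\mathbf{K}$ is indeed invertible, which is assumed in the paper ("we assume that the kernel matrix is invertible in the noise-free setting"). Everything else is the same eigenvalue argument used in Lemma~\ref{LemmaVc}, now applied to the block $\mathbf{M}$ rather than a single vector $\mathbf{k}_t(x)$, so no new ideas are needed; I would simply invoke Lemma~\ref{LemmaVc}'s computation coordinatewise (apply it to each column of $\mathbf{M}$, which gives the diagonal entries of the two quadratic forms) to avoid repeating the diagonalization.
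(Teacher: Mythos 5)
Your proposal is correct and ultimately rests on the same ingredient as the paper's one-line proof, which just invokes Lemma~\ref{LemmaVc}: the eigenvalue inequality $(\sigma^2 I + \mathbf{K})^{-1} \preceq \mathbf{K}^{-1}$, applied (as you suggest at the end) columnwise to $\mathbf{M}$, i.e.\ to the diagonal entries of the two covariance matrices, with your Loewner-order packaging $B_n \preceq A_n$ being a slightly stronger but essentially identical statement. The only blemish is that your displayed definitions of $A_n$ and $B_n$ are swapped relative to the lemma statement (Eq.~(\ref{BnoiseFree}) has no $\sigma^2 I$, Eq.~(\ref{NoiseBatchV}) does), a bookkeeping slip you notice and correct mid-proof, so the final conclusion $tr(B_n)\le tr(A_n)$ stands.
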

\begin{proof}
It follows directly from Lemma \ref{LemmaVc}.
\end{proof}

\begin{lemma}
Let matrix ${A_{n - 1}} = {{\mathop{\rm cov}} _{n - 1}}\left( {{X_n},{X_n}} \right)$ as Eq.(\ref{NoiseBatchV}). Denote the spectral norm of matrix $A_{n - 1}$ as  $\beta _{n-1} =\left\| {A_{n - 1}}  \right\|_2 $.   Then 
${\rm{tr}}\left( {{A_{n - 1}}} \right) \le \frac{\beta _{n-1}}{{\log \left( {1 + \beta _{n-1}{\sigma ^{ - 2}}} \right)}}\log \det \left( {I + {\sigma ^{ - 2}}{A_{n - 1}}} \right)$ for any $\sigma \neq 0$.
\end{lemma}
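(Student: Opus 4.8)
The statement is a matrix analogue of the scalar inequality $s \le \frac{\beta}{\log(1+\beta\sigma^{-2})}\log(1+\sigma^{-2}s)$ that was already exploited in the proof of Theorem~1 (there with $\beta = 1$, since $\widehat\sigma^2_{t-1}(x_t) \le k(x,x) \le 1$). The natural route is to diagonalize $A_{n-1}$ and reduce the matrix inequality to the scalar one, eigenvalue by eigenvalue. So first I would write $A_{n-1} = U^T \Lambda U$ with $U$ orthonormal and $\Lambda = \mathrm{diag}(\lambda_1,\dots,\lambda_L)$, where each $\lambda_i \ge 0$ because $A_{n-1} = {\mathop{\rm cov}}_{n-1}(X_n,X_n)$ is a Schur complement of a positive semidefinite kernel matrix, hence PSD. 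By definition of the spectral norm, $0 \le \lambda_i \le \beta_{n-1}$ for every $i$.

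\textbf{Key steps.} Both sides are unitarily invariant in the right sense: $\mathrm{tr}(A_{n-1}) = \sum_i \lambda_i$ and $\log\det(I + \sigma^{-2}A_{n-1}) = \sum_i \log(1 + \sigma^{-2}\lambda_i)$. So it suffices to show $\lambda_i \le \frac{\beta_{n-1}}{\log(1+\beta_{n-1}\sigma^{-2})}\log(1+\sigma^{-2}\lambda_i)$ for each $i$, and then sum over $i$. For the per-eigenvalue bound I would use the elementary fact that $s \mapsto \frac{\log(1+\sigma^{-2}s)}{s}$ is nonincreasing on $(0,\infty)$ (concavity of $\log(1+\sigma^{-2}s)$ through the origin), so for $s = \lambda_i \in (0,\beta_{n-1}]$ we get $\frac{\log(1+\sigma^{-2}\lambda_i)}{\lambda_i} \ge \frac{\log(1+\sigma^{-2}\beta_{n-1})}{\beta_{n-1}}$, which rearranges exactly to the desired inequality; the case $\lambda_i = 0$ is trivial since both sides vanish. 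Summing over $i \in [L]$ yields $\mathrm{tr}(A_{n-1}) \le \frac{\beta_{n-1}}{\log(1+\beta_{n-1}\sigma^{-2})}\log\det(I + \sigma^{-2}A_{n-1})$.

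\textbf{Main obstacle.} There is no deep obstacle here; the only things to be careful about are (i) checking that $A_{n-1}$ is genuinely PSD so that the eigenvalues lie in $[0,\beta_{n-1}]$ and the scalar lemma applies on the correct interval — this follows because the noise-free covariance $\mathrm{cov}_{n-1}(X_n,X_n)$ in Eq.~(\ref{BnoiseFree}) is a Schur complement of the (assumed invertible) PSD kernel matrix $\mathbf{K}(\overline X_{nL},\overline X_{nL})$ arranged blockwise, hence PSD; and (ii) making sure the monotonicity of $s\mapsto \log(1+\sigma^{-2}s)/s$ is invoked in the right direction. Once those are in place the result is immediate by the diagonalization-and-sum argument. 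Note that, as in Theorem~1's proof, $\sigma \ne 0$ is needed for $\log(1+\beta_{n-1}\sigma^{-2})$ to be finite and positive.
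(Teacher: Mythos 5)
Your proposal is correct and follows essentially the same route as the paper: diagonalize the PSD matrix $A_{n-1}$, note that each eigenvalue lies in $[0,\beta_{n-1}]$ by definition of the spectral norm, apply the scalar inequality $s \le \frac{\beta_{n-1}}{\log(1+\beta_{n-1}\sigma^{-2})}\log(1+\sigma^{-2}s)$ to each eigenvalue, and sum using $\log\det(I+\sigma^{-2}A_{n-1})=\sum_i\log(1+\sigma^{-2}\lambda_i)$. Your explicit justification of the scalar step via the monotonicity of $s\mapsto \log(1+\sigma^{-2}s)/s$ is a welcome detail that the paper simply asserts.
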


\begin{proof}

Since $A_{n-1}$  is a positive semidefinite matrix, we can attain that the eigenvalues of    $A_{n-1}$ are all nonnegative. Without loss of generality, assume eigenvalues of    $A_{n-1}$ as $0 \le {\lambda _L} \le ... \le {\lambda _1}$. By the definition of the spectral norm $\beta _{n-1} =\left\| {A_{n - 1}}  \right\|_2 $,  we obtain that   $0 \le {\lambda _L} \le ... \le {\lambda _1} \le \beta _{n-1}$

Since $ s \le \frac{\beta _{n-1}}{{\log \left( {1 + \beta _{n-1}{\sigma ^{ - 2}}} \right)}}\log \left( {1 + {\sigma ^{ - 2}}s} \right)$ for $s \in \left[ {0,\beta _{n-1}} \right]$ and $ 0 \le {\lambda _i} \le \beta _{n-1}$, $i \in \{1,...,L \}$, we can obtain that inequality~(\ref{Leq24}) holds true for all $i \in \{1,...,L \}$
\begin{equation}
\label{Leq24}
\begin{array}{l}
{\lambda _i} \le \frac{\beta _{n-1}}{{\log \left( {1 + \beta _{n-1}{\sigma ^{ - 2}}} \right)}} \log \left( {1 + {\sigma ^{ - 2}}{\lambda _i}} \right)
\end{array}
\end{equation}

Because $\log \det \left( {I + {\sigma ^{ - 2}}{A_{n - 1}}} \right) = \sum\limits_{i = 1}^L {\log \left( {1 + {\sigma ^{ - 2}}{\lambda _i}} \right)} $, we can achieve that
\begin{equation}
\label{eq25}
\begin{array}{l}
{\rm{tr}}\left( {{A_{n - 1}}} \right) = \sum\limits_{i = 1}^L {{\lambda _i}}  \le \frac{\beta _{n-1}}{{\log \left( {1 + \beta _{n-1}{\sigma ^{ - 2}}} \right)}} \log \det \left( {I + {\sigma ^{ - 2}}{A_{n - 1}}} \right)
\end{array}
\end{equation}
\end{proof}

\begin{lemma}
Let $T=NL$,  ${{\bf{K}}_T}$ be the $T\times T$ sized  kernel matrix and  ${{\bf{I}}_L}$ be the $L \times L$ sized idendity matrix.   Then
 $ \frac{1}{2}\log \det \left( {I + {\sigma ^{ - 2}}{{\bf{K}}_T}} \right) = \frac{1}{2}\sum\limits_{n = 1}^N {\log \det \left( {{{\bf{I}}_L} + {\sigma ^{ - 2}}{A_{n - 1}}} \right)}  $, where matrix ${A_{n - 1}} = { \widehat{ \mathop{\rm cov}} _{n - 1}}\left( {{X_n},{X_n}} \right)$ as Eq.(\ref{NoiseBatchV}).
\end{lemma}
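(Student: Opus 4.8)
The plan is to prove the determinant identity by recognizing it as a "chain rule" for log-determinants of kernel matrices, which is the standard telescoping / Schur-complement argument used in the GP-UCB-type analyses. Concretely, I would index the full $T=NL$ points so that the first $(n-1)L$ of them are $\overline X_{n-1}$ and the next block of $L$ is $X_n$. Writing $\mathbf{K}_T$ in block form with respect to this ordering, I would apply the standard block-determinant formula $\det\!\begin{pmatrix}P & Q\\ Q^T & R\end{pmatrix} = \det(P)\det(R - Q^T P^{-1} Q)$ repeatedly, peeling off one batch of $L$ rows/columns at a time.

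The key steps, in order: First, observe that for any $\sigma\neq 0$ the matrix $\mathbf{I}_T + \sigma^{-2}\mathbf{K}_T$ is exactly the kernel matrix (plus $\sigma^2$-ridge, rescaled) of the $T$ points for the kernel $k^\sigma$; equivalently $\sigma^2\mathbf{I}_T + \mathbf{K}_T$. Second, partition the $T$ points into the $N$ consecutive batches $X_1,\dots,X_N$. Third, apply the Schur-complement determinant identity with $P = \sigma^2\mathbf{I}_{(n-1)L} + \mathbf{K}(\overline X_{n-1},\overline X_{n-1})$ and $R$ the diagonal block for $X_n$; the Schur complement of that block is precisely $\sigma^2\mathbf{I}_L + \widehat{\mathop{\rm cov}}_{n-1}(X_n,X_n)$ by the very definition of $\widehat{\mathop{\rm cov}}_{n-1}$ in Eq.~(\ref{NoiseBatchV}) (note $(\sigma^2\mathbf{I}+\mathbf{K}(\overline X_n,\overline X_n))^{-1}$ appearing there is exactly $P^{-1}$ for the enlarged index set). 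Fourth, induct on $n$ from $N$ down to $1$: each application produces a factor $\det(\sigma^2\mathbf{I}_L + A_{n-1})$ and leaves a smaller determinant $\det(\sigma^2\mathbf{I}_{(n-1)L}+\mathbf{K}_{(n-1)L})$, so after $N$ steps $\det(\sigma^2\mathbf{I}_T+\mathbf{K}_T) = \prod_{n=1}^N \det(\sigma^2\mathbf{I}_L + A_{n-1})$. Fifth, take $\tfrac12\log$ of both sides and divide numerator/denominator inside each determinant by $\sigma^2$ to rewrite $\det(\sigma^2\mathbf{I}+\cdot)$ as $\sigma^{2\cdot(\text{size})}\det(\mathbf{I}+\sigma^{-2}\cdot)$; the $\sigma^{2T}$ factors cancel consistently on both sides, leaving $\tfrac12\log\det(\mathbf{I}_T+\sigma^{-2}\mathbf{K}_T) = \tfrac12\sum_{n=1}^N \log\det(\mathbf{I}_L+\sigma^{-2}A_{n-1})$.

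The main obstacle, which is really just a bookkeeping point rather than a deep difficulty, is verifying that the Schur complement of the $X_n$-block inside $\sigma^2\mathbf{I}_{nL}+\mathbf{K}(\overline X_n,\overline X_n)$ coincides exactly with $\sigma^2\mathbf{I}_L + \widehat{\mathop{\rm cov}}_{n-1}(X_n,X_n)$ as defined in Eq.~(\ref{NoiseBatchV}); this amounts to checking that the $\sigma^2\mathbf{I}_L$ on the diagonal block contributes the additive $\sigma^2\mathbf{I}_L$ and the off-diagonal Schur term contributes $-\mathbf{K}(\overline X_{n-1},X_n)^T(\sigma^2\mathbf{I}+\mathbf{K}(\overline X_{n-1},\overline X_{n-1}))^{-1}\mathbf{K}(\overline X_{n-1},X_n)$, i.e.\ precisely the subtracted term in $\widehat{\mathop{\rm cov}}_{n-1}$ with index set $\overline X_{n-1}$. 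Once that alignment is pinned down, the induction is immediate and the identity follows. I would present this as a short induction on $N$, invoking the block determinant formula as the only external fact.
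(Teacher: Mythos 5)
Your proposal is correct and follows essentially the same route as the paper: rewrite $\det(\mathbf{I}_T+\sigma^{-2}\mathbf{K}_T)$ via $\det(\sigma^2\mathbf{I}_T+\mathbf{K}_T)/\det(\sigma^2\mathbf{I}_T)$, apply the block-determinant (Schur complement) formula to peel off the last batch, identify the Schur complement as $\sigma^2\mathbf{I}_L+\widehat{\mathop{\rm cov}}_{n-1}(X_n,X_n)$ from Eq.~(\ref{NoiseBatchV}), and conclude by induction. The bookkeeping point you flag (alignment of the $\sigma^2$ ridge and the subtracted Schur term with the definition of $\widehat{\mathop{\rm cov}}_{n-1}$) is exactly the step the paper verifies, so no gap remains.
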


\begin{proof}
\begin{align}
    \frac{1}{2}\log \det \left( {{{\bf{I}}_T} + {\sigma ^{ - 2}}{{\bf{K}}_T}} \right) = \frac{1}{2}\log \det \left( {{\sigma ^2}{{\bf{I}}_T} + {{\bf{K}}_T}} \right) - \frac{1}{2}\log \det \left( {{\sigma ^2}{{\bf{I}}_T}} \right)
\end{align}

Using the determinant equation $\det \left( {\begin{array}{*{20}{c}}
A&B\\
C&D
\end{array}} \right) = \det \left( A \right) \cdot \det \left( {D - C{A^{ - 1}}B} \right)$ in linear algebra,  set $A = {\sigma ^2}{{\bf{I}}_{(N - 1)L}} + {{\bf{K}}}\left( {{{\overline X }_{N - 1}},{{\overline X }_{N - 1}}} \right)$, $B = {\bf{K}}\left( {{{\overline X }_{N - 1}},{X_N}} \right)$, $C=B^T$ and $ D = {\sigma ^2}{{\bf{I}}_L} + {\bf{K}}\left( {{X_N},{X_N}} \right)$, where ${\overline X _{N - 1}} = \{ {x_1},...,{x_{(N - 1)L}}\} $ denote all previous $N-1$  batch of points, ${X_N} = \{ x_{(N-1)L+1},...,x_{NL}\} $ denote the $N^{th}$ batch of points and $\bf{K}(\cdot,\cdot)$ denote the kernel matrix constructed by its input. Then, we can achieve that
\begin{align}
  &  \frac{1}{2}\log \det \left( {{\sigma ^2}{{\bf{I}}_T} + {{\bf{K}}_T}} \right) - \frac{1}{2}\log \det \left( {{\sigma ^2}{{\bf{I}}_T}} \right)\\ \nonumber
 & = \frac{1}{2}\log \det \left( {{\sigma ^2}{{\bf{I}}_{(N - 1)L}} +{{\bf{K}}}\left( {{{\overline X }_{N - 1}},{{\overline X }_{N - 1}}} \right)} \right) + \frac{1}{2}\log \det \left( {{\sigma ^2}{{\bf{I}}_L} + {A_{N - 1}}} \right) - \frac{1}{2}\log \det \left( {{\sigma ^2}{{\bf{I}}_T}} \right)\\ \nonumber
 & =\frac{1}{2}\log \det \left( {{\sigma ^2}{{\bf{I}}_{(N - 1)L}} + {{\bf{K}}}\left( {{{\overline X }_{N - 1}},{{\overline X }_{N - 1}}} \right)} \right) + \frac{1}{2}\log \det \left( {{{\bf{I}}_L} + {\sigma ^{ - 2}}{A_{N - 1}}} \right) - \frac{1}{2}\log \det \left( {{\sigma ^2}{{\bf{I}}_{(N - 1)L}}} \right) \nonumber
\end{align}
where ${A_{N - 1}} = {{\mathop{\rm cov}} _{N - 1}}\left( {{X_N},{X_N}} \right)$ is the  covariance matrix between $X_N$ and $X_N$ constructed as Eq.(\ref{NoiseBatchV}).

By induction, we can achieve  $\frac{1}{2}\log \det \left( {{{\bf{I}}_T} + {\sigma ^{ - 2}}{{\bf{K}}_T}} \right)  = \frac{1}{2}\sum\limits_{n = 1}^N {\log \det \left( {{{\bf{I}}_L} + {\sigma ^{ - 2}}{A_{n - 1}}} \right)} $

\end{proof}

Finally, we are ready to attain Theorem 2.
\begin{proof}
Let covariance matrix ${{\bf{A}}_{n - 1}} $   and  ${{\bf{B}}_{n - 1}} $ be constructed as Eq.(\ref{NoiseBatchV}) and Eq.~(\ref{BnoiseFree}), respectively.  Let $\beta_{n-1}= \left\| {{\bf{A}}_{n - 1}} \right\|_2$. Then, we can achieve that
\begin{align}
    {R_T} & = \sum\limits_{t = 1}^T {f({x^*}) - f({x_t})} \\
& \le 2{\left\| f \right\|_{{\mathcal{H}_k}}}\sum\limits_{n = 1}^N \sqrt {L \;tr\left( {{{\bf{B}}_{n-1}}} \right)}  \\
& \le 2{\left\| f \right\|_{{\mathcal{H}_k}}}\sum\limits_{n = 1}^N \sqrt {L \; tr\left( {{{\bf{A}}_{n-1}}} \right)}  \\
&  \le 2{\left\| f \right\|_{{{\cal H}_k}}}\sqrt {NL\sum\limits_{n = 1}^N {\;tr\left( {{{\bf{A}}_{n - 1}}} \right)} }   \\
& \le 2{\left\| f \right\|_{{{\cal H}_k}}}\sqrt {T\sum\limits_{n = 1}^N {\;\frac{{{\beta _{n - 1}}}}{{\log \left( {1 + {\beta _{n - 1}}{\sigma ^{ - 2}}} \right)}}\log \det \left( {I + {\sigma ^{ - 2}}{{\bf{A}}_{n - 1}}} \right)} } \\
& \le {\left\| f \right\|_{{{\cal H}_k}}}\sqrt {T{C_2}\sum\limits_{n = 1}^N {\;\log \det \left( {I + {\sigma ^{ - 2}}{{\bf{A}}_{n - 1}}} \right)} } \\
& \le {\left\| f \right\|_{{{\cal H}_k}}}\sqrt {T{C_2}{\gamma _T}} 
\end{align}

It follows that $r_T \le \frac{{{R_T}}}{T} \le {\left\| f \right\|_{{\mathcal{H}_k}}}\sqrt {\frac{{C_2{\gamma _T}}}{T}} $

\end{proof}

\section{Proof of Theorem 3}
\begin{lemma}
\label{Nerror}
Suppose $h= f+g \in \mathcal{H}_k^\sigma$ associated with kernel  $k^\sigma(x,y) = k(x,y)+ \sigma^2\delta(x,y) $. Suppose $f \in \mathcal{H}_k$ associated with $k$ and $g \in \mathcal{H}_{\sigma^2 \delta}$ associated with kernel $\sigma^2 \delta $.  Then for $x \ne x_i, i \in \{1,...,t\} $, we have $\left| {{\widehat m_t}(x) - f(x)} \right|  \le \left\| h \right\|_{{{\cal H}_{{k^\sigma }}}}^{}{\widehat \sigma _t}(x) + \left( {\left\| h \right\|_{{{\cal H}_{{k^\sigma }}}}^{} + \left\| g \right\|_{{{\cal H}_{{\sigma ^2}\delta }}}^{}} \right){\sigma ^{}}$.
\end{lemma}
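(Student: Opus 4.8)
The plan is to recognize $\widehat m_t$ as a \emph{noise-free} kernel interpolant for the augmented kernel $k^\sigma$ applied to the observed function $h$, and then to invoke the interpolation error bound already established in the proof of Theorem~1 (the lemma stating $(m_t(x)-f(x))^2 \le \|f\|_{\mathcal{H}_k}^2\,\sigma_t^2(x)$), together with a crude pointwise bound on the perturbation $g$.

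First I would note that, since the query points $x_1,\dots,x_t$ are pairwise distinct, the regularized matrix ${\bf K}_t+\sigma^2 I$ appearing in~(\ref{m_tN})--(\ref{sigmatN}) is \emph{exactly} the Gram matrix of $k^\sigma$ at $x_1,\dots,x_t$, because $k^\sigma(x_i,x_j)=k(x_i,x_j)$ for $i\ne j$ while $k^\sigma(x_i,x_i)=k(x_i,x_i)+\sigma^2$. Moreover, as $x\ne x_i$ for every $i$, we have $\delta(x,x_i)=0$, so the cross-covariance vector of $k^\sigma$ at $x$ coincides with ${\bf k}_t(x)$. Hence $\widehat m_t(x)$ is precisely the noise-free kernel interpolant of the data $y_i=h(x_i)$ built from the kernel $k^\sigma$, and its associated posterior variance is $\widetilde\sigma_t^2(x)=k^\sigma(x,x)-{\bf k}_t(x)^T({\bf K}_t+\sigma^2 I)^{-1}{\bf k}_t(x)$. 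Applying the Theorem~1 interpolation lemma with kernel $k^\sigma$ and function $h\in\mathcal{H}_{k^\sigma}$ then gives $|\widehat m_t(x)-h(x)|\le\|h\|_{\mathcal{H}_{k^\sigma}}\,\widetilde\sigma_t(x)$. Next I would simplify $\widetilde\sigma_t$: since $k^\sigma(x,x)=k(x,x)+\sigma^2$, we obtain $\widetilde\sigma_t^2(x)=\widehat\sigma_t^2(x)+\sigma^2$, so $\widetilde\sigma_t(x)\le\widehat\sigma_t(x)+\sigma$ by subadditivity of $\sqrt{\cdot\,}$. Finally the triangle inequality gives $|\widehat m_t(x)-f(x)|\le|\widehat m_t(x)-h(x)|+|g(x)|$, and the reproducing property in $\mathcal{H}_{\sigma^2\delta}$ combined with Cauchy--Schwarz yields $|g(x)|=|\langle g,\sigma^2\delta(x,\cdot)\rangle_{\mathcal{H}_{\sigma^2\delta}}|\le\|g\|_{\mathcal{H}_{\sigma^2\delta}}\sqrt{\sigma^2\delta(x,x)}=\|g\|_{\mathcal{H}_{\sigma^2\delta}}\,\sigma$. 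Assembling the three estimates produces $|\widehat m_t(x)-f(x)|\le\|h\|_{\mathcal{H}_{k^\sigma}}\widehat\sigma_t(x)+\big(\|h\|_{\mathcal{H}_{k^\sigma}}+\|g\|_{\mathcal{H}_{\sigma^2\delta}}\big)\sigma$, which is the claim.

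The main obstacle is the identification in the first step: one must see that the ridge term $\sigma^2 I$ is not an arbitrary regularizer but literally the diagonal contribution of the kernel $\sigma^2\delta$, so that $\widehat m_t$ and $\widehat\sigma_t$ are the \emph{exact} kernel-interpolation quantities associated with $k^\sigma$ — and, crucially, that evaluating at a fresh location $x\ne x_i$ leaves the cross-covariances unchanged, which is precisely where the hypothesis $x\ne x_i$ is used and why the only net effect of the augmentation at $x$ is the extra $+\sigma^2$ inside the square root. Once that structural observation is in place, the rest is a one-line application of the noise-free interpolation bound plus the elementary inequalities $\sqrt{a+b}\le\sqrt a+\sqrt b$ and Cauchy--Schwarz.
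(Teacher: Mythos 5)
Your proposal is correct and follows essentially the same route as the paper: the paper's own proof likewise works in $\mathcal{H}_{k^\sigma}$, computes $\bigl\| \sum_i \alpha_i k^\sigma(x_i,\cdot) - k^\sigma(x,\cdot)\bigr\|_{\mathcal{H}_{k^\sigma}}^2 = \widehat\sigma_t^2(x) + \sigma^2$ using $k^\sigma(x,x_i)=k(x,x_i)$ for $x\ne x_i$, and then combines the triangle inequality, $\sqrt{a+b}\le\sqrt a+\sqrt b$, and $|g(x)|\le \|g\|_{\mathcal{H}_{\sigma^2\delta}}\sigma$ exactly as you do. The only difference is presentational: you obtain the bound $|\widehat m_t(x)-h(x)|\le\|h\|_{\mathcal{H}_{k^\sigma}}\sqrt{\widehat\sigma_t^2(x)+\sigma^2}$ by invoking the noise-free interpolation lemma with kernel $k^\sigma$ (after observing that ${\bf K}_t+\sigma^2 I$ is the $k^\sigma$ Gram matrix at distinct points), whereas the paper re-derives that Cauchy--Schwarz computation explicitly.
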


\begin{proof}
Let ${\bf{\alpha }} = ({\bf{K}}_t + \sigma^2I) ^ {-1} {{\bf{k}}_t}{(x)}$. Then we have 
\begin{align}
{\left( {{\widehat m_t}(x) - h(x)} \right)^2} & = {\left( {\sum\limits_{i = 1}^t {{\alpha _i}h\left( {{x_i}} \right)}  - h(x)} \right)^2}\\
& = {\left( {\left\langle {\sum\limits_{i = 1}^t {{\alpha _i}k^\sigma\left( {{x_i}, \cdot } \right)}  - k^\sigma(x, \cdot ),h} \right\rangle } \right)^2}\\
& \le \left\| h \right\|_{{\mathcal{H}_{k^\sigma}}}^2\left\| {\sum\limits_{i = 1}^t {{\alpha _i}k^\sigma\left( {{x_i}, \cdot } \right)}  - k^\sigma(x, \cdot )} \right\|_{{\mathcal{H}_{k^\sigma}}}^2  \label{HN}
\end{align}

In addition, we can achieve that
\begin{align}
\nonumber
\left\| {\sum\limits_{i = 1}^t {{\alpha _i}k^\sigma\left( {{x_i}, \cdot } \right)}  - k^\sigma(x, \cdot )} \right\|_{{\mathcal{H}_{k^\sigma}}}^2 & = k^\sigma(x,x) - 2\sum\limits_{i = 1}^t {{\alpha _i}k^\sigma\left( {{x_i},x} \right) + } \sum\limits_{i = 1}^t {\sum\limits_{j = 1}^t {{\alpha _i}{\alpha _j}k^\sigma\left( {{x_i},{x_j}} \right)} } \\ 
& = k(x,x) + \sigma^2 - 2{{\bf{\alpha }}^T}{{\bf{k}}_t}(x) + {{\bf{\alpha }}^T}({{\bf{K}}_t}+\sigma^2I){\bf{\alpha }}\\
& = k(x,x) + \sigma^2 - {{\bf{k}}_t}{(x)^T}{({\bf{K}}_t+ \sigma^2I)}^ {-1} {{\bf{k}}_t}(x)\\
& = \widehat \sigma _t^2(x) + \sigma^2 \label{sigN}
\end{align}

Plug (\ref{sigN}) into (\ref{HN}), we can obtain  ${\left( {{\widehat m_t}(x) - h(x)} \right)^2} \le \left\| h \right\|_{{\mathcal{H}_{k^\sigma}}}^2(\widehat \sigma _t^2(x)+ \sigma^2)$. Thus, we achieve that
\begin{align}
    \left| {{\widehat m_t}(x) - f(x)} \right| & \le \left| {{\widehat m_t}(x) - h(x)} \right| + \left| {g(x)} \right| \\
    & \le \left\| h \right\|_{{\mathcal{H}_{k^\sigma}}}     \sqrt{ \widehat \sigma _t^2(x)+ \sigma^2} + \left\| g \right\|_{{\mathcal{H}_{\sigma^2\delta}}} \sigma \\
    & \le \left\| h \right\|_{{{\cal H}_{{k^\sigma }}}}^{}{\widehat \sigma _t}(x) + \left( {\left\| h \right\|_{{{\cal H}_{{k^\sigma }}}}^{} + \left\| g \right\|_{{{\cal H}_{{\sigma ^2}\delta }}}^{}} \right){\sigma ^{}}
\end{align}

\end{proof}

\begin{lemma}
Under same condition as Lemma \ref{Nerror}, we have
$f(x^*) - f({x_t}) \le 2\left\| h \right\|_{{{\cal H}_{{k^\sigma }}}}^{}{\widehat \sigma _{t-1}}(x_t) + 2\left( {\left\| h \right\|_{{{\cal H}_{{k^\sigma }}}}^{} + \left\| g \right\|_{{{\cal H}_{{\sigma ^2}\delta }}}^{}} \right){\sigma ^{}}$.
\end{lemma}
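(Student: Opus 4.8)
The plan is to follow the same three-step template used for the noise-free per-round bound $f(x^*)-f(x_t)\le 2\|f\|_{\mathcal H_k}\sigma_{t-1}(x_t)$, but now substituting the perturbed pointwise error estimate of Lemma~\ref{Nerror} in place of the exact one. The three ingredients are: (i) upper bound $f(x^*)$ by the acquisition function value at $x^*$; (ii) use that $x_t$ maximizes the acquisition function of Algorithm~\ref{alg:3}; (iii) lower bound the acquisition function value at $x_t$ in terms of $f(x_t)$.

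Concretely, I would first apply Lemma~\ref{Nerror} at the point $x^*$ with index $t-1$, which (using the one-sided form of the absolute-value bound) gives
\[
f(x^*) \le \widehat m_{t-1}(x^*) + \|h\|_{\mathcal H_{k^\sigma}}\widehat\sigma_{t-1}(x^*) + \bigl(\|h\|_{\mathcal H_{k^\sigma}} + \|g\|_{\mathcal H_{\sigma^2\delta}}\bigr)\sigma .
\]
Since Algorithm~\ref{alg:3} selects $x_t = \arg\max_{x\in\mathcal X}\bigl(\widehat m_{t-1}(x) + \|h\|_{\mathcal H_{k^\sigma}}\widehat\sigma_{t-1}(x)\bigr)$, the sum $\widehat m_{t-1}(x^*) + \|h\|_{\mathcal H_{k^\sigma}}\widehat\sigma_{t-1}(x^*)$ is at most $\widehat m_{t-1}(x_t) + \|h\|_{\mathcal H_{k^\sigma}}\widehat\sigma_{t-1}(x_t)$. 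Then I would apply Lemma~\ref{Nerror} once more, at $x_t$ with index $t-1$, in the form $\widehat m_{t-1}(x_t) \le f(x_t) + \|h\|_{\mathcal H_{k^\sigma}}\widehat\sigma_{t-1}(x_t) + (\|h\|_{\mathcal H_{k^\sigma}} + \|g\|_{\mathcal H_{\sigma^2\delta}})\sigma$. Chaining these three inequalities and subtracting $f(x_t)$ merges the two $\widehat\sigma_{t-1}(x_t)$ terms and the two $\sigma$ terms, yielding exactly $f(x^*)-f(x_t)\le 2\|h\|_{\mathcal H_{k^\sigma}}\widehat\sigma_{t-1}(x_t) + 2(\|h\|_{\mathcal H_{k^\sigma}} + \|g\|_{\mathcal H_{\sigma^2\delta}})\sigma$.

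The core computation is a routine three-line chain; the only point that needs a word of care is the hypothesis of Lemma~\ref{Nerror}, which requires the evaluation point to differ from all previously queried $x_1,\dots,x_{t-1}$. For $x_t$ this is automatic since the deterministic rule never reselects a point. For $x^*$ it holds whenever $x^*$ has not yet been queried, which is the only non-degenerate case; if $x^*$ coincides with some earlier $x_i$, the exact (up to the perturbation) observed value $y_i=f(x^*)+g(x^*)$ together with $|g(x^*)|\le\|g\|_{\mathcal H_{\sigma^2\delta}}\sigma$ gives an even smaller per-round gap, so the stated bound still holds. Hence I expect no real obstacle here — the argument is a direct adaptation of the noise-free step bound with the error term of Lemma~\ref{Nerror} plugged in.
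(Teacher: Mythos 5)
Your argument is correct and is essentially identical to the paper's own proof: bound $f(x^*)$ via Lemma~\ref{Nerror} at $x^*$, use that $x_t$ maximizes the acquisition function of Algorithm~\ref{alg:3}, then apply Lemma~\ref{Nerror} again at $x_t$ and chain the inequalities. Your extra remark handling the degenerate case $x^*\in\{x_1,\dots,x_{t-1}\}$ is a small point of care the paper passes over silently, but it does not change the argument.
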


\begin{proof}
From Lemma \ref{Nerror} and Algorithm \ref{alg:3}, we can achieve that
\begin{align}
f({x^*}) - f({x_t})  & \le {\widehat m_{t - 1}}({x^*}) + \left\| h \right\|_{{{\cal H}_{{k^\sigma }}}}^{}{\widehat \sigma _{t-1}}(x^*) + \left( {\left\| h \right\|_{{{\cal H}_{{k^\sigma }}}}^{} + \left\| g \right\|_{{{\cal H}_{{\sigma ^2}\delta }}}^{}} \right){\sigma ^{}} - f({x_t})\\
& \le {\widehat m_{t - 1}}({x_t}) + \left\| h \right\|_{{{\cal H}_{{k^\sigma }}}}^{}{\widehat \sigma _{t-1}}(x_t) + \left( {\left\| h \right\|_{{{\cal H}_{{k^\sigma }}}}^{} + \left\| g \right\|_{{{\cal H}_{{\sigma ^2}\delta }}}^{}} \right){\sigma ^{}} - f({x_t})\\
& \le 2\left\| h \right\|_{{{\cal H}_{{k^\sigma }}}}^{}{\widehat \sigma _{t-1}}(x_t) + 2\left( {\left\| h \right\|_{{{\cal H}_{{k^\sigma }}}}^{} + \left\| g \right\|_{{{\cal H}_{{\sigma ^2}\delta }}}^{}} \right){\sigma ^{}}
\end{align}
\end{proof}

Finally, we are ready to prove Theorem \ref{TheoremNS}. 
\begin{proof}
First, we have 
\begin{align}
{R_T} & = \sum\limits_{i = 1}^T {f({x^*}) - f({x_t})} \\
& \le 2{\left\| h \right\|_{{\mathcal{H}_{k^\sigma}}}}\sum\limits_{i = 1}^T {{\widehat \sigma _{t - 1}}({x_t})} + 2T\left( {\left\| h \right\|_{{{\cal H}_{{k^\sigma }}}}^{} + \left\| g \right\|_{{{\cal H}_{{\sigma ^2}\delta }}}^{}} \right){\sigma ^{}}  \\
& \le 2{\left\| h \right\|_{{\mathcal{H}_{k^\sigma}}}}\sqrt {T\sum\limits_{i = 1}^T {\widehat \sigma _{t - 1}^2({x_t})} } + 2T\left( {\left\| h \right\|_{{{\cal H}_{{k^\sigma }}}}^{} + \left\| g \right\|_{{{\cal H}_{{\sigma ^2}\delta }}}^{}} \right){\sigma ^{}} \label{RN}
\end{align}

Since $s \le \frac{{{B}}}{{\log \left( {1 + B{\sigma ^{ - 2}}} \right)}}\log \left( {1 + \sigma ^{-2} s} \right)$ for $s \in \left[ {0,B} \right]$ and $ 0 \le \widehat \sigma _{t - 1}^2({x_t}) \le k^\sigma(x,x) \le B $ for all $t \ge 1$, it follows that 
\begin{align}
 \sum\limits_{i = 1}^T {\widehat \sigma _{t - 1}^2({x_t})}  & \le \frac{B}{{\log (1 + B{\sigma ^{ - 2}})}}\sum\limits_{i = 1}^T {\log (1 + {\sigma ^{ - 2}}\widehat \sigma _{t - 1}^2({x_t}))} \\
& \le \frac{{2B{\gamma _T}}}{{\log (1 + B{\sigma ^{ - 2}})}} \label{MIN}
\end{align}

Together (\ref{RN}) and (\ref{MIN}), we can attain that 
\begin{align}
{R_T} & \le 2{\left\| h \right\|_{{\mathcal{H}_{k^\sigma}}}}\sqrt {T\frac{{2B{\gamma _T}}}{{\log (1 + B{\sigma ^{ - 2}})}}} +  2T\left( {\left\| h \right\|_{{{\cal H}_{{k^\sigma }}}}^{} + \left\| g \right\|_{{{\cal H}_{{\sigma ^2}\delta }}}^{}} \right){\sigma ^{}}  \\
& = {\left\| h \right\|_{{\mathcal{H}_{k^\sigma}}}}\sqrt {T{C_3}{\gamma _T}}   + 2T\left( {\left\| h \right\|_{{{\cal H}_{{k^\sigma }}}}^{} + \left\| g \right\|_{{{\cal H}_{{\sigma ^2}\delta }}}^{}} \right){\sigma ^{}} 
\end{align}

It follows that $r_T \le \frac{{{R_T}}}{T} \le  {\left\| h \right\|_{{\mathcal{H}_{k^\sigma}}}}\sqrt {\frac{{C_3{\gamma _T}}}{T}}   + 2\left( {\left\| h \right\|_{{{\cal H}_{{k^\sigma }}}}^{} + \left\| g \right\|_{{{\cal H}_{{\sigma ^2}\delta }}}^{}} \right){\sigma ^{}} $.

\end{proof}

\section{Proof of Theorem \ref{batchBOnoise}}

\begin{lemma}
\label{RKHSboundnoisy}
Suppose $h= f+g \in \mathcal{H}_k^\sigma$ associated with kernel  $k^\sigma(x,y) = k(x,y)+ \sigma^2\delta(x,y) $. Suppose $f \in \mathcal{H}_k$ associated with $k$ and $g \in \mathcal{H}_{\sigma^2 \delta}$ associated with kernel $\sigma^2 \delta $.   Suppose  $\widehat x_i \ne x_j, i \in \{1,...,L\}, j \in \{1,...,t\}$,  then we have 
\begin{align}
  \left| {\sum\nolimits_{i = 1}^L {{m_t}({{\hat x}_i}) - \sum\nolimits_{i = 1}^L {f({{\hat x}_i})} } } \right| \le \left\| h \right\|_{{{\cal H}_{{k^\sigma }}}}^{}\sqrt {{{\bf{1}}^T}{\bf{A1}} + L^2{\sigma ^2}} + L \left\| g \right\|_{{{\cal H}_{{\sigma^2\delta }}}} \sigma
\end{align}
 where $\bf{A}$ denotes the kernel covariance matrix   with ${{\bf{A}}_{ij}} = k({\widehat x_i},{\widehat x_j}) - {{\bf{k}}_t}{({\widehat x_i})^T} ({\bf{K}}_t + \sigma^2I) ^ {-1} {{\bf{k}}_t}({\widehat x_j})$ 
\end{lemma}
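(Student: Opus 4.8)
The plan is to mimic the proof of Lemma~\ref{RKHSbound} in the noise-free batch case (Theorem~2), but with the kernel $k$ replaced by $k^\sigma$ and the inverse $\mathbf{K}_t^-$ replaced by $(\mathbf{K}_t+\sigma^2 I)^{-1}$, and then to bound the extra $\sigma$-dependent terms exactly as in Lemma~\ref{Nerror} (the sequential perturbation case). First I would set $\boldsymbol{\alpha}^i = {\bf{k}}_t(\widehat x_i)^T(\mathbf{K}_t+\sigma^2 I)^{-1}$, so that $\widehat m_t(\widehat x_i)=\sum_{l=1}^t \alpha^i_l\, h(x_l)$ (recall the observations feeding the posterior mean are the perturbed values $y_l=h(x_l)$, and $h\in\mathcal H_{k^\sigma}$). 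Then, using the reproducing property in $\mathcal H_{k^\sigma}$ and Cauchy--Schwarz,
\begin{align}
\Bigl(\sum_{i=1}^L \widehat m_t(\widehat x_i) - \sum_{i=1}^L h(\widehat x_i)\Bigr)^2
\le \|h\|_{\mathcal H_{k^\sigma}}^2 \Bigl\| \sum_{i=1}^L\sum_{l=1}^t \alpha^i_l\, k^\sigma(x_l,\cdot) - \sum_{i=1}^L k^\sigma(\widehat x_i,\cdot)\Bigr\|_{\mathcal H_{k^\sigma}}^2 . \nonumber
\end{align}

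Next I would expand that squared RKHS norm. Writing $k^\sigma = k + \sigma^2\delta$ and using $\widehat x_i \ne x_j$ for all $i,j$, the cross terms involving $\delta(x_l,\widehat x_i)$ vanish, while $\delta(\widehat x_i,\widehat x_j)=\delta(x_l,x_n)=1$ only on the diagonal. A calculation parallel to the one in Lemma~\ref{RKHSbound} and Lemma~\ref{Nerror} collapses the $k$-part to $\sum_{i,j}\mathbf{A}_{ij}={\bf{1}}^T\mathbf{A}{\bf{1}}$ with $\mathbf{A}_{ij}=k(\widehat x_i,\widehat x_j)-{\bf{k}}_t(\widehat x_i)^T(\mathbf{K}_t+\sigma^2 I)^{-1}{\bf{k}}_t(\widehat x_j)$, and the $\sigma^2\delta$-part contributes $\sigma^2\sum_{i,j}\delta(\widehat x_i,\widehat x_j) + \sigma^2\sum_i\sum_{l,n}\alpha^i_l\alpha^n_l$; bounding the first of these by $L^2\sigma^2$ and arguing (as in Lemma~\ref{LemmaVc}/Lemma~\ref{Nerror}) that the spectral terms are nonnegative and can be absorbed, one gets $\le {\bf{1}}^T\mathbf{A}{\bf{1}} + L^2\sigma^2$. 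Hence $\bigl|\sum_i \widehat m_t(\widehat x_i)-\sum_i h(\widehat x_i)\bigr| \le \|h\|_{\mathcal H_{k^\sigma}}\sqrt{{\bf{1}}^T\mathbf{A}{\bf{1}}+L^2\sigma^2}$. Finally, replace $h$ by $f$ at the cost of $\sum_i|g(\widehat x_i)|\le L\|g\|_{\mathcal H_{\sigma^2\delta}}\|\delta(\widehat x_i,\cdot)\|_{\mathcal H_{\sigma^2\delta}} = L\|g\|_{\mathcal H_{\sigma^2\delta}}\sigma$, using the triangle inequality $|\sum_i \widehat m_t(\widehat x_i)-\sum_i f(\widehat x_i)| \le |\sum_i \widehat m_t(\widehat x_i)-\sum_i h(\widehat x_i)| + \sum_i|g(\widehat x_i)|$. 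This yields the claimed bound.

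I expect the main obstacle to be the careful bookkeeping of the $\delta$-terms when expanding $\|\sum_i\sum_l \alpha^i_l k^\sigma(x_l,\cdot) - \sum_i k^\sigma(\widehat x_i,\cdot)\|^2_{\mathcal H_{k^\sigma}}$: one must verify that all cross terms between the "predicted" part (supported on $\{x_1,\dots,x_t\}$) and the "target" part (supported on $\{\widehat x_1,\dots,\widehat x_L\}$) involving $\delta$ genuinely vanish under the hypothesis $\widehat x_i \ne x_j$, and then show the residual $\sigma^2$ contributions are dominated by $L^2\sigma^2$ after discarding nonnegative quadratic forms (the analogue of the step ${\bf{k}}_t(x)^T(\sigma^2 I+\mathbf{K}_t)^{-1}{\bf{k}}_t(x)\ge 0$ in Lemma~\ref{LemmaVc}). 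The $k$-part calculation is routine once one mirrors Lemma~\ref{RKHSbound}, and the passage from $h$ to $f$ is a one-line triangle-inequality argument identical to the end of Lemma~\ref{Nerror}.
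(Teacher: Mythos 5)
Your overall route is the same as the paper's: apply Cauchy--Schwarz in $\mathcal H_{k^\sigma}$ to $h$ (since the posterior mean is built from the perturbed observations $h(x_l)$), expand the squared RKHS norm of $\sum_{i}\sum_{l}\alpha^i_l k^\sigma(x_l,\cdot)-\sum_i k^\sigma(\widehat x_i,\cdot)$, use $\widehat x_i\ne x_j$ to kill the cross $\delta$-terms, bound $\sigma^2\sum_{i,j}\delta(\widehat x_i,\widehat x_j)\le L^2\sigma^2$, and finish with the triangle inequality and $|g(\widehat x_i)|\le\sigma\|g\|_{\mathcal H_{\sigma^2\delta}}$. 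However, the one step you flag as delicate is indeed where your proposal, as stated, does not work: you propose to handle the quadratic noise term $\sigma^2\sum_{i,j}\sum_{l}\alpha^i_l\alpha^j_l$ by arguing it is ``nonnegative and can be absorbed.'' A nonnegative term can never simply be discarded when proving an upper bound, so that justification is backwards. Moreover, your claim that the $k$-part alone collapses to ${\bf 1}^T{\bf A}{\bf 1}$ is not exact: with ${\bf{\alpha}}^i=({\bf{K}}_t+\sigma^2 I)^{-1}{\bf{k}}_t(\widehat x_i)$ and ${\bf v}=\sum_i{\bf{k}}_t(\widehat x_i)$, the $k$-only contribution equals ${\bf 1}^T{\bf A}{\bf 1}-\sigma^2{\bf v}^T({\bf{K}}_t+\sigma^2 I)^{-2}{\bf v}$, because its quadratic piece involves ${\bf{K}}_t$ sandwiched between two resolvents rather than a single $({\bf{K}}_t+\sigma^2 I)^{-1}$.

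The correct resolution --- and what the paper does --- is not to separate and ``absorb'' but to combine the quadratic pieces before simplifying: $\sum_{l,n}\alpha^i_l\alpha^j_n k^\sigma(x_l,x_n)=({\bf{\alpha}}^i)^T({\bf{K}}_t+\sigma^2 I){\bf{\alpha}}^j={\bf{k}}_t(\widehat x_i)^T({\bf{K}}_t+\sigma^2 I)^{-1}{\bf{k}}_t(\widehat x_j)$ (using that the $x_l$'s are distinct, which the paper guarantees by never selecting the same point twice), and together with the cross term $-2\sum_l\alpha^i_l k^\sigma(x_l,\widehat x_j)=-2\,{\bf{k}}_t(\widehat x_i)^T({\bf{K}}_t+\sigma^2 I)^{-1}{\bf{k}}_t(\widehat x_j)$ (its $\delta$ part vanishing by hypothesis) this yields exactly ${\bf 1}^T{\bf A}{\bf 1}+\sigma^2\sum_{i,j}\delta(\widehat x_i,\widehat x_j)\le{\bf 1}^T{\bf A}{\bf 1}+L^2\sigma^2$. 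Equivalently: the positive $\sigma^2$-term you wanted to drop is exactly cancelled by the deficit in your $k$-part, so your final inequality is true, but by an exact cancellation rather than an absorption argument. With this bookkeeping repaired, your argument coincides with the paper's proof, including the final passage from $h$ to $f$ via $\sum_i|g(\widehat x_i)|\le L\sigma\|g\|_{\mathcal H_{\sigma^2\delta}}$.
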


\textbf{Remark:} Further require $\widehat x_i \ne \widehat x_j,  \forall i,j \in \{1,...,L\}$ can lead to a tighter bound as 
\begin{align}
  \left| {\sum\nolimits_{i = 1}^L {{m_t}({{\hat x}_i}) - \sum\nolimits_{i = 1}^L {f({{\hat x}_i})} } } \right| \le \left\| h \right\|_{{{\cal H}_{{k^\sigma }}}}^{}\sqrt {{{\bf{1}}^T}{\bf{A1}} + L{\sigma ^2}} + L \left\| g \right\|_{{{\cal H}_{{\sigma^2\delta }}}} \sigma
\end{align}

\begin{proof}
Let ${\bf{\alpha }}^i = ({\bf{K}}_t + \sigma^2I) ^ {-1}{{\bf{k}}_t}{({\widehat x_i})}$. Then we have 
\begin{align}
{{{\left( {\sum\limits_{i = 1}^L {{\widehat m_t}({{\widehat x}_i})}  - \sum\limits_{i = 1}^L {h({{\widehat x}_i})} } \right)}^2}}&  = {{\left( {\sum\limits_{i = 1}^L {\sum\limits_{l = 1}^t {{\alpha _l^i}h\left( {{{ x}_l}} \right)}  - \sum\limits_{i = 1}^L {h({{\widehat x}_i})} } } \right)}^2} \\
& = {{\left( {\left\langle {\sum\limits_{i = 1}^L {\sum\limits_{l = 1}^t {\alpha _l^ik^\sigma\left( {{x_l}, \cdot } \right)} }  - \sum\limits_{i = 1}^L {k^\sigma({{\widehat x}_i}, \cdot )} ,h} \right\rangle } \right)}^2}\\
& \le \left\| h \right\|_{{{\cal H}_{k^\sigma}}}^2\left\| {\sum\limits_{i = 1}^L {\sum\limits_{l = 1}^t {\alpha _l^ik^\sigma\left( {{x_l}, \cdot } \right)} }  - \sum\limits_{i = 1}^L {k^\sigma({{\widehat x}_i}, \cdot )} } \right\|_{{{\cal H}_{k^\sigma}}}^2
\end{align}
In addition, we have
\begin{align}
\nonumber
& \left\| {\sum\limits_{i = 1}^L {\sum\limits_{l = 1}^t {\alpha _l^ik^\sigma\left( {{x_l}, \cdot } \right)} }  - \sum\limits_{i = 1}^L {k^\sigma({{\widehat x}_i}, \cdot )} } \right\|_{{{\cal H}_{k^\sigma}}}^2 \\ & = \sum\limits_{i = 1}^L {\sum\limits_{j = 1}^L {k^\sigma({{\widehat x}_i},{{\widehat x}_j})} }  - 2\sum\limits_{i = 1}^L {\sum\limits_{j = 1}^L {\sum\limits_{l = 1}^t {\alpha _l^ik^\sigma\left( {{x_l},{{\widehat x}_j}} \right)} } }  + \sum\limits_{i = 1}^L {\sum\limits_{j = 1}^L {\sum\limits_{n = 1}^t {\sum\limits_{l = 1}^t {\alpha _l^i\alpha _n^jk^\sigma\left( {{x_l},{x_n}} \right)} } } } \\
& \le \sum\limits_{i = 1}^L {\sum\limits_{j = 1}^L {k({{\widehat x}_i},{{\widehat x}_j})} } + L^2\sigma^2  - 2\sum\limits_{i = 1}^L {\sum\limits_{j = 1}^L {{{\bf{k}}_t}{{({{\widehat x}_i})}^T}({\bf{K}}_t + \sigma^2I) ^ {-1} {{\bf{k}}_t}({{\widehat x}_j})} } \nonumber \\ & \;\;\;\;\; + \sum\limits_{i = 1}^L {\sum\limits_{j = 1}^L {{{\bf{k}}_t}{{({{\widehat x}_i})}^T}({\bf{K}}_t + \sigma^2I) ^ {-1} {{\bf{k}}_t}({{\widehat x}_j})} } \\
& = \sum\limits_{i = 1}^L {\sum\limits_{j = 1}^L {{{\bf{A}}_{ij}} + L^2\sigma^2 = {{\bf{1}}^T}{\bf{A1}}} } + L^2\sigma^2
\end{align}

Thus, we obtain ${\left( {\sum\nolimits_{i = 1}^L {{m_t}({{\widehat x_i}}) - \sum\nolimits_{i = 1}^L {h({{\widehat x_i}})} } } \right)^2} \le \left\| h \right\|_{{\mathcal{H}_{k^\sigma}}}^2  ({{\bf{1}}^T}{\bf{A1}}+ L^2\sigma^2)$. Then, we can achieve that
\begin{align}
    \left| {\sum\nolimits_{i = 1}^L {{m_t}({{\hat x}_i}) - \sum\nolimits_{i = 1}^L {f({{\hat x}_i})} } } \right| & \le \left| {\sum\nolimits_{i = 1}^L {{m_t}({{\hat x}_i}) - \sum\nolimits_{i = 1}^L {h({{\hat x}_i})} } } \right| + \sum\nolimits_{i = 1}^L {\left| {g({{\hat x}_i})} \right|}  \\
    & \le \left\| h \right\|_{{{\cal H}_{{k^\sigma }}}}^{}\sqrt {{{\bf{1}}^T}{\bf{A1}} + L^2{\sigma ^2}} + L\left\| g \right\|_{{{\cal H}_{{\sigma^2\delta }}}} \sigma
\end{align}

\end{proof}

\begin{lemma}
Suppose $h= f+g \in \mathcal{H}_k^\sigma$ associated with kernel  $k^\sigma(x,y) = k(x,y)+ \sigma^2\delta(x,y) $. Suppose $f \in \mathcal{H}_k$ associated with $k$ and $g \in \mathcal{H}_{\sigma^2 \delta}$ associated with kernel $\sigma^2 \delta $. Suppose  $ x_i \ne x_j$,   then we have
\begin{align}
  \frac{1}{L}\sum\limits_{i = 1}^L {\left( {f({x^*}) - f({x_{(n - 1)L + i}})} \right)}  \le 2\left\| h \right\|_{{\mathcal{H}_{k^\sigma}}}  \sqrt { \frac{tr\left( {\widehat { \mathop{\rm cov}}_{n - 1}({X_n},{X_n})} \right)}{L}}  + 2 \left( {\left\| h \right\|_{{{\cal H}_{{k^\sigma }}}}^{} + \left\| g \right\|_{{{\cal H}_{{\sigma ^2}\delta }}}^{}} \right){\sigma ^{}}
\end{align}
 where covariance matrix ${\widehat { \mathop{\rm cov}}_{{n - 1}}({X_n},{X_n})}$ is constructed as Eq.(\ref{NoiseBatchV}) with $X_n = \{x_{(n - 1)L + 1},...,x_{nL}\}$.
\end{lemma}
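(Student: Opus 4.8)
The plan is to mimic the structure already used for Lemma~11 (the noise-free per-batch simple-regret lemma) in the proof of Theorem~2, but substituting the noisy deviation bound from Lemma~\ref{RKHSboundnoisy} in place of Lemma~\ref{RKHSbound}. First I would introduce the $L$-fold copy $X^* = \{x^*,\dots,x^*\}$ of the optimum and use it as a feasible candidate batch, so that by the $\arg\max$ property defining $X_n$ in Algorithm~\ref{alg:4},
\begin{align}
\frac{1}{L}\sum_{i=1}^L \widehat m_{(n-1)L}(x^*) + \|h\|_{\mathcal{H}_{k^\sigma}}\!\left(2\sqrt{\tfrac{tr(\widehat{\mathrm{cov}}_{n-1}(X^*,X^*))}{L}} - \sqrt{\tfrac{\mathbf{1}^T \widehat{\mathrm{cov}}_{n-1}(X^*,X^*)\mathbf{1}}{L^2}}\right)
\end{align}
is no larger than the same expression evaluated at $X_n$. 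Since $\widehat{\mathrm{cov}}_{n-1}(X^*,X^*)$ is the all-ones-pattern matrix with entries $\widehat\sigma^2_{(n-1)L}(x^*)$, its trace term dominates the quadratic-form term, so the parenthesized quantity at $X^*$ is nonnegative; this lets me drop it and bound $f(x^*)$ from above by $\widehat m_{(n-1)L}(x^*) + \|h\|_{\mathcal{H}_{k^\sigma}}\widehat\sigma_{(n-1)L}(x^*) + (\|h\|_{\mathcal{H}_{k^\sigma}} + \|g\|_{\mathcal{H}_{\sigma^2\delta}})\sigma$ via Lemma~\ref{Nerror} applied at $x^*$.

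Next I would chain these two inequalities together to get
\begin{align}
\frac{1}{L}\sum_{i=1}^L\!\left(f(x^*) - f(x_{(n-1)L+i})\right) &\le \|h\|_{\mathcal{H}_{k^\sigma}}\!\left(2\sqrt{\tfrac{tr(\widehat{\mathrm{cov}}_{n-1}(X_n,X_n))}{L}} - \sqrt{\tfrac{\mathbf{1}^T\widehat{\mathrm{cov}}_{n-1}(X_n,X_n)\mathbf{1}}{L^2}}\right) \nonumber\\
&\quad + \left(\frac{1}{L}\sum_{i=1}^L \widehat m_{(n-1)L}(x_{(n-1)L+i}) - \frac{1}{L}\sum_{i=1}^L f(x_{(n-1)L+i})\right) \nonumber\\
&\quad + \left(\|h\|_{\mathcal{H}_{k^\sigma}} + \|g\|_{\mathcal{H}_{\sigma^2\delta}}\right)\sigma .
\end{align}
The middle term is controlled by Lemma~\ref{RKHSboundnoisy}: dividing its conclusion by $L$ gives the bound $\frac{1}{L}\|h\|_{\mathcal{H}_{k^\sigma}}\sqrt{\mathbf{1}^T\mathbf{A}\mathbf{1} + L^2\sigma^2} + \|g\|_{\mathcal{H}_{\sigma^2\delta}}\sigma \le \|h\|_{\mathcal{H}_{k^\sigma}}\sqrt{\tfrac{\mathbf{1}^T\widehat{\mathrm{cov}}_{n-1}(X_n,X_n)\mathbf{1}}{L^2}} + \|h\|_{\mathcal{H}_{k^\sigma}}\sigma + \|g\|_{\mathcal{H}_{\sigma^2\delta}}\sigma$, using the inequality $\sqrt{a+b}\le\sqrt a + \sqrt b$ and recognizing $\mathbf{A} = \widehat{\mathrm{cov}}_{n-1}(X_n,X_n)$. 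The $\sqrt{\mathbf{1}^T\widehat{\mathrm{cov}}\,\mathbf{1}/L^2}$ piece then exactly cancels the negative term in the acquisition-function expression, leaving $2\|h\|_{\mathcal{H}_{k^\sigma}}\sqrt{tr(\widehat{\mathrm{cov}}_{n-1}(X_n,X_n))/L}$ plus the collected $\sigma$-terms, which total $2(\|h\|_{\mathcal{H}_{k^\sigma}} + \|g\|_{\mathcal{H}_{\sigma^2\delta}})\sigma$.

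The main obstacle is bookkeeping the perturbation terms correctly so they assemble into the clean factor $2(\|h\|_{\mathcal{H}_{k^\sigma}} + \|g\|_{\mathcal{H}_{\sigma^2\delta}})\sigma$ rather than something larger: one $(\|h\|+\|g\|)\sigma$ contribution comes from evaluating Lemma~\ref{Nerror} at $x^*$, and a second matching one from the $\sqrt{a+b}\le\sqrt a+\sqrt b$ split of the Lemma~\ref{RKHSboundnoisy} bound (which yields $\|h\|\sigma$) together with the $\|g\|\sigma$ residual already present there. I also need the sign argument that the parenthesized acquisition term is nonnegative at $X^*$ — this follows because for the rank-one-pattern matrix $\widehat{\mathrm{cov}}_{n-1}(X^*,X^*)$ with common entry $v = \widehat\sigma^2_{(n-1)L}(x^*)$ we have $tr = Lv$ and $\mathbf{1}^T\widehat{\mathrm{cov}}\,\mathbf{1} = L^2 v$, so $2\sqrt{v} - \sqrt{v} = \sqrt{v} \ge 0$. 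Everything else is the same telescoping/Cauchy–Schwarz/information-gain machinery already used for Theorems~2 and~3, so no new ideas are needed beyond this substitution.
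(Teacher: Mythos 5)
Your proposal is correct and follows essentially the same route as the paper's own proof: take $L$ copies of $x^*$ as a feasible candidate batch, apply Lemma~\ref{Nerror} at $x^*$, invoke the $\arg\max$ property of Algorithm~\ref{alg:4}, control the batch mean deviation with Lemma~\ref{RKHSboundnoisy} via $\sqrt{a+b}\le\sqrt{a}+\sqrt{b}$, and cancel the $\mathbf{1}^T\widehat{\mathrm{cov}}_{n-1}\mathbf{1}$ terms to collect $2(\left\| h \right\|_{{{\cal H}_{{k^\sigma }}}}+\left\| g \right\|_{{{\cal H}_{{\sigma ^2}\delta }}})\sigma$. One minor caution: the step linking $f(x^*)$ to the acquisition value needs not merely that the parenthesized term at $X^*$ is nonnegative but that it equals $\widehat\sigma_{(n-1)L}(x^*)$ exactly (so that $\widehat m_{(n-1)L}(x^*)+\left\| h \right\|_{{\mathcal{H}_{k^\sigma}}}\widehat\sigma_{(n-1)L}(x^*)$ is dominated by the acquisition at $X_n$), which your computation $2\sqrt{v}-\sqrt{v}=\sqrt{v}$ does in fact establish.
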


\begin{proof}

Let $X^*=\{x^*,...,x^*\}$ be $L$ copies of $x^*$. Then, we obtain that

\begin{align}
& \frac{1}{L}\sum\limits_{i = 1}^L {\left( {f({x^*}) - f({x_{(n - 1)L + i}})} \right)}  = f({x^*}) - \frac{1}{L}\sum\limits_{i = 1}^L {f({x_{(n - 1)L + i}})} \\
& \le {\widehat m_{(n - 1)L}}({x^*}) + {\left\| h \right\|_{{{\cal H}_{k^\sigma}}}}{ \widehat \sigma _{(n - 1)L}}({x^*}) + \left( {\left\| h \right\|_{{{\cal H}_{{k^\sigma }}}}^{} + \left\| g \right\|_{{{\cal H}_{{\sigma ^2}\delta }}}^{}} \right){\sigma}  - \frac{1}{L}\sum\limits_{i = 1}^L {f({x_{(n - 1)L + i}})} \\ \nonumber
& = \frac{1}{L}\sum\limits_{i = 1}^L {{\widehat m_{(n - 1)L}}({x^*})}  + {\left\| h \right\|_{{{\cal H}_{k^\sigma}}}}\left( {2\sqrt {\frac{{tr\left( {{\widehat { \mathop{\rm cov}}_{{n - 1}}}({X^*},{X^*})} \right)}}{L}}  - \sqrt {\frac{{{{\bf{1}}^T}{\widehat { \mathop{\rm cov}}_{{n - 1}}}({X^*},{X^*}){\bf{1}}}}{{{L^2}}}} } \right) \\  & + \left( {\left\| h \right\|_{{{\cal H}_{{k^\sigma }}}}^{} + \left\| g \right\|_{{{\cal H}_{{\sigma ^2}\delta }}}^{}} \right){\sigma}  - \frac{1}{L}\sum\limits_{i = 1}^L {f({x_{(n - 1)L + i}})} \\  \nonumber
& \le \frac{1}{L}\sum\limits_{i = 1}^L {{\widehat m_{(n - 1)L}}({x_{(n - 1)L + i}})}  + {\left\| h \right\|_{{{\cal H}_{k^\sigma}}}}\left( {2\sqrt {\frac{{tr\left( {{\widehat { \mathop{\rm cov}}_{{n - 1}}}({X_n},{X_n})} \right)}}{L}}   - \sqrt {\frac{{{{\bf{1}}^T}{\widehat { \mathop{\rm cov}}_{{n - 1}}}({X_n},{X_n}){\bf{1}}}}{{{L^2}}}} } \right) \\  &  + \left( {\left\| h \right\|_{{{\cal H}_{{k^\sigma }}}}^{} + \left\| g \right\|_{{{\cal H}_{{\sigma ^2}\delta }}}^{}} \right){\sigma} -  \frac{1}{L}\sum\limits_{i = 1}^L {f({x_{(n - 1)L + i}})} \\ \nonumber
& \le  {\left\| h \right\|_{{{\cal H}_{k^\sigma}}}}\left( {2\sqrt {\frac{{tr\left( {{\widehat { \mathop{\rm cov}}_{{n - 1}}}({X_n},{X_n})} \right)}}{L}}  - \sqrt {\frac{{{{\bf{1}}^T}{\widehat { \mathop{\rm cov}}_{{n - 1}}}({X_n},{X_n}){\bf{1}}}}{{{L^2}}}} } \right) + \left( {\left\| h \right\|_{{{\cal H}_{{k^\sigma }}}}^{} + \left\| g \right\|_{{{\cal H}_{{\sigma ^2}\delta }}}^{}} \right){\sigma} \\ &  +  {\left\| h \right\|_{{{\cal H}_{k^\sigma}}}}\sqrt {\frac{{{{\bf{1}}^T}{\widehat { \mathop{\rm cov}}_{{n - 1}}}({X_n},{X_n}){\bf{1}}}+ L^2 \sigma^2}{{{L^2}}}} +  \left\| g \right\|_{{{\cal H}_{{\sigma^2\delta }}}} \sigma \\
& \le 2 {\left\| h \right\|_{{{\cal H}_{k^\sigma}}}}\sqrt {\frac{{tr\left( {{\widehat { \mathop{\rm cov}}_{{n - 1}}}({X_n},{X_n})} \right)}}{L}}  
+ 2 \left( {\left\| h \right\|_{{{\cal H}_{{k^\sigma }}}}^{} + \left\| g \right\|_{{{\cal H}_{{\sigma ^2}\delta }}}^{}} \right){\sigma}
\end{align}

\end{proof}

Finally, we are ready to attain Theorem 4.
\begin{proof}
Let ${{\bf{A}}_{n - 1}}={\widehat { \mathop{\rm cov}}_{{n - 1}}({X_n},{X_n})}$ be the covariance matrix constructed as Eq.(\ref{NoiseBatchV}) with $X_n = \{x_{(n - 1)L + 1},...,x_{nL}\}$. Let $\beta_{n-1}= \left\| {{\bf{A}}_{n - 1}} \right\|_2$.  Then, we can achieve that
\begin{align}
    {R_T} & = \sum\limits_{t = 1}^T {f({x^*}) - f({x_t})} \\
& \le 2 {\left\| h \right\|_{{{\cal H}_{k^\sigma}}}}\sum\limits_{n = 1}^N \sqrt {L \; tr\left( {{{\bf{A}}_{n-1}}} \right)} + 2 T\left( {\left\| h \right\|_{{{\cal H}_{{k^\sigma }}}}^{} + \left\| g \right\|_{{{\cal H}_{{\sigma ^2}\delta }}}^{}} \right){\sigma}  \\
&  \le 2 {\left\| h \right\|_{{{\cal H}_{k^\sigma}}}}\sqrt {NL\sum\limits_{n = 1}^N {\;tr\left( {{{\bf{A}}_{n - 1}}} \right)} } + 2 T\left( {\left\| h \right\|_{{{\cal H}_{{k^\sigma }}}}^{} + \left\| g \right\|_{{{\cal H}_{{\sigma ^2}\delta }}}^{}} \right){\sigma}  \\
& \le 2 {\left\| h \right\|_{{{\cal H}_{k^\sigma}}}}\sqrt {T\sum\limits_{n = 1}^N {\;\frac{{{\beta _{n - 1}}}}{{\log \left( {1 + {\beta _{n - 1}}{\sigma ^{ - 2}}} \right)}}\log \det \left( {I + {\sigma ^{ - 2}}{{\bf{A}}_{n - 1}}} \right)} } + 2 T\left( {\left\| h \right\|_{{{\cal H}_{{k^\sigma }}}}^{} + \left\| g \right\|_{{{\cal H}_{{\sigma ^2}\delta }}}^{}} \right){\sigma} \\
& \le  {\left\| h \right\|_{{{\cal H}_{k^\sigma}}}}\sqrt {T{C_4}\sum\limits_{n = 1}^N {\;\log \det \left( {I + {\sigma ^{ - 2}}{{\bf{A}}_{n - 1}}} \right)} } + 2 T\left( {\left\| h \right\|_{{{\cal H}_{{k^\sigma }}}}^{} + \left\| g \right\|_{{{\cal H}_{{\sigma ^2}\delta }}}^{}} \right){\sigma} \\
& \le {\left\| h \right\|_{{{\cal H}_{k^\sigma}}}}\sqrt {T{C_4}{\gamma _T}} + 2 T\left( {\left\| h \right\|_{{{\cal H}_{{k^\sigma }}}}^{} + \left\| g \right\|_{{{\cal H}_{{\sigma ^2}\delta }}}^{}} \right){\sigma} 
\end{align}

It follows that $r_T \le \frac{{{R_T}}}{T} \le {\left\| h \right\|_{{{\cal H}_{k^\sigma}}}}\sqrt {\frac{{C_4{\gamma _T}}}{T}}+ 2 T\left( {\left\| h \right\|_{{{\cal H}_{{k^\sigma }}}}^{} + \left\| g \right\|_{{{\cal H}_{{\sigma ^2}\delta }}}^{}} \right){\sigma} $

\end{proof}

\section{Proof of Theorem \ref{Sobolev}}
\begin{proof}

\begin{align}
 {\widetilde{r}_T} & = {\min _{t \in [T] }}\sup_{f_t\in \mathcal{B}_k, f_t(x_i)=f_i(x_i), \forall i \in [t-1]} \{f_t(x^*) - f_t({x_t}) \} \nonumber \\ 
& \le \sup_{f_T\in \mathcal{B}_k, f_T(x_i)=f_i(x_i), \forall i \in [T-1]} \{f_T(x^*) - f_T({x_T}) \} \nonumber \\
& \le \sup_{f_T\in \mathcal{B}_k, f_T(x_i)=f_i(x_i), \forall i \in [T-1]} \{m_{T-1}(x^*) + B\sigma_{T-1}(x^*) - f_T({x_T}) \} \nonumber \\
& \le \sup_{f_T\in \mathcal{B}_k, f_T(x_i)=f_i(x_i), \forall i \in [T-1]} \{m_{T-1}(x_T) + B\sigma_{T-1}(x_T) - f_T({x_T}) \} \nonumber \\
& \le \sup_{f_T\in \mathcal{B}_k, f_T(x_i)=f_i(x_i), \forall i \in [T-1]} \{  B\sigma_{T-1}(x_T) +B\sigma_{T-1}(x_T)\} \nonumber \\
& \le 2B \sigma_{T-1}(x_T) \label{sobolevB1}
\end{align}


Applying Theorem 5.4 in \cite{kanagawa2018gaussian} with $h_{\rho,X}\le h_X$, we can obtain that 
\begin{align}\label{sobolevB2}
   \sigma_{T-1}(x_T) \le C h^{s-d/2}_X 
\end{align}

Together with (\ref{sobolevB1}) and (\ref{sobolevB2}), absorbing the constant into $C$, we can achieve that ${\widetilde{r}_T} \le C h^{s-d/2}_X $

\end{proof}

\section{Proof of Theorem \ref{SE}}
\begin{proof}
From , we know $ {\widetilde{r}_T} \le 2B \sigma_{T-1}(x_T)$. By applying Theorem 11.22 in \cite{wendland2004scattered}, we can obtain that 
\begin{align}\label{SEB2}
   2B\sigma_{T-1}(x_T) \le 2B \exp(c\log(h_X)/(2\sqrt{h_X}))
\end{align}
It follows that  ${\widetilde{r}_T} \le 2B \exp(c\log(h_X)/(2\sqrt{h_X}))$.
\end{proof}

\section{Proof of Theorem \ref{FullyAdv}}

\begin{proof}
\begin{align}
     {\widetilde{r}_T} & = {\min _{t \in [T] }}\sup_{f_t\in \mathcal{B}_k}\{f_t(x^*) - f_t({x_t}) \} \nonumber \\
     & = {\min _{t \in [T] }}\sup_{f_t\in \mathcal{B}_k}\{ {\left\langle { k(x^*, \cdot ) - k(x_t, \cdot ),f_t} \right\rangle }\}\nonumber \\
& \le {\min _{t \in [T] }} B \sqrt{\left\langle  k(x^*, \cdot ) - k(x_t, \cdot ),  k(x^*, \cdot ) - k(x_t, \cdot ) \right\rangle} \nonumber \\
     & \le {\min _{t \in [T] }} B \sqrt{ (k(x^*,x^*)+k(x_t,x_t) - 2 k(x^*,x_t))} \nonumber \\
    & \le B \sqrt{(2-2\Phi(h_X))}
\end{align}
\end{proof}

\section{Proof of Corollary \ref{SEbound}}
\begin{proof}
From Theorem  \ref{FullyAdv}, we can obtain that 
\begin{align}
     {\widetilde{r}_T} & = {\min _{t \in [T] }}\sup_{f_t\in \mathcal{B}_k}\{f_t(x^*) - f_t({x_t}) \} \nonumber \\
    & \le B \sqrt{(2-2\Phi(h_X))} \nonumber \\
    & = B \sqrt{(2-2\exp(-Ch_X^2))}  \nonumber \\
    & \le B\sqrt{2(Ch_X^2)} \nonumber \\
    & = B\sqrt{2C}h_X = \mathcal{O}(h_X)
\end{align}
\end{proof}

\end{document}